\documentclass[10pt,twocolumn]{article}
\usepackage[letterpaper,textwidth=6.75in,textheight=9.25in,columnsep=0.25in]{geometry}
\usepackage{amsmath,amssymb,amsthm}
\usepackage{graphicx}
\usepackage{booktabs}
\usepackage{placeins}
\usepackage{xcolor}
\usepackage{xspace}
\usepackage{colortbl, tcolorbox}
\usepackage[round]{natbib}
\usepackage{microtype}
\usepackage{hyperref}
\hypersetup{
  pdftitle={When Repeated Measurements Hurt Tabular Foundation Models},
  pdfauthor={Yixiang Liu, Ruizhe Deng, Bibhas Chakraborty, Qiong Zhang, Yan Shuo Tan}}

\hypersetup{hidelinks}

\newcommand{\E}{\mathbb{E}}

\newcommand{\Var}{\mathrm{Var}}

\newcommand{\Ybar}{\bar{Y}}
\newcommand{\Single}{\textit{Single}\xspace}
\newcommand{\Expanded}{\textit{Expanded}\xspace}
\newcommand{\Mean}{\textit{Mean}\xspace}
\newtheorem{proposition}{Proposition}
\newtheorem{lemma}{Lemma}
\newtheorem{theorem}{Theorem}

\newtheorem{remark}{Remark}

\begin{document}
\title{\bfseries When Repeated Measurements Hurt Tabular Foundation Models}
\author{Yixiang Liu$^{1,*}$ \quad Ruizhe Deng$^{1,*}$ \quad Bibhas Chakraborty$^{2}$ \quad Qiong Zhang$^{3}$ \quad Yan Shuo Tan$^{1}$\\[1.2ex]
{\normalsize $^{1}$National University of Singapore \qquad $^{2}$Duke-NUS Medical School}\\[0.2ex]
{\normalsize $^{3}$Mohamed bin Zayed University of Artificial Intelligence}\\[0.8ex]
{\normalsize $^{*}$Equal contribution}}
\date{}
\maketitle

\begin{abstract}
Tabular foundation models (TFMs) achieve state-of-the-art performance on IID benchmarks, but how do they respond to repeated measurements at the same inputs? Across six TFMs, synthetic and semi-synthetic experiments show that additional measurements can improve predictions at measured inputs while degrading them elsewhere.
Harm is most frequent at low noise, and irrelevant features amplify it in several models; real repeated-measurement datasets show similar patterns. Statistically, the IID rule targeted by PFN pretraining treats each repeated input as fresh evidence about the input distribution. When the prior links this distribution to the response function, the resulting protocol mismatch can outweigh the benefit of additional measurements, even under exact inference. Computationally, interventions identify row-count dependence in TabPFN-v3's attention scaling as a contributor to harm. Resetting the count largely reverses distortion in a noiseless linear diagnostic and attenuates amplification by irrelevant features. Finally, we evaluate ways to retain the benefits of repeated measurements while limiting their harm. Averaging measurements usually outperforms supplying them separately at larger repetition counts, whereas count reset can suppress both harm and useful gains.
Code, saved results and reproduction instructions are available at
\url{https://github.com/drizmiz/PFN-Repeated-Measurement-Supp}.
\end{abstract}

\section{Introduction}
\label{sec:intro}

Learning from a new tabular dataset increasingly need not begin with fitting a new model.
Tabular foundation models (TFMs) instead use the labeled table as context for a pretrained predictor, achieving competitive or state-of-the-art accuracy on small- and medium-sized tasks without necessarily requiring parameter updates or task-specific tuning \citep{hollmann2025nature,tabpfn3}.
Strong results on benchmarks such as TabArena \citep{erickson2025tabarena} have been followed by claims of broader applicability, including the latest TabPFN release's performance under temporal and grouped splits \citep{tabpfn35}.
As these models become alternatives to conventional learners, a central question is \emph{how their predictive performance depends on the data-collection protocol.}
Synthetic pretraining for many TFMs includes tasks whose rows are independent and identically distributed (IID) samples from a latent joint distribution on covariates and responses \citep{hollmann2023tabpfn,qu2025tabicl,tabpfn3}.
Scientific data often follow a different observation protocol: several responses are recorded at the same covariate vector \citep{box2005statistics,ankenman2010stochastic,binois2018hetgp}.
We call each distinct covariate vector a \emph{site} and each recorded response a \emph{reading}.
When entered individually, readings from the same site occupy separate table rows with identical covariates.
Oversampling with replacement also produces repeated rows.

Classical regression methods routinely accommodate repeated measurements, either by directly fitting the individual readings or by fitting weighted site means \citep{draper1998applied,ankenman2010stochastic,binois2018hetgp}.
For a class of squared-loss algorithms, including empirical risk minimization and gradient boosted trees under the conditions in Section~\ref{sec:equivalence}, fitting individual readings and fitting count-weighted site means yield the same predictor.
TFMs do not have weight inputs, let alone any guarantee of equivalence.
In fact, when investigating how TabPFN could be used to predict algorithm-runtime distributions,
\citet{ibrahimli2026runtime} found that supplying additional readings as context rows to TabPFN could decrease its predictive performance.
One possible explanation for this failure mode is that repeated readings can share unobserved unit-specific effects, making them dependent even after conditioning on the recorded covariates.
In this paper, we investigate how the failure may persist even without such dependence.
Specifically, our central question is:
\begin{center}
\begin{tcolorbox}[colframe=black!50!white,
    colback=gray!5!white,
    boxrule=0.5mm,
    arc=5pt,width=\columnwidth]
\centering
\textit{When do additional independent readings help or hurt TFM predictions, and what mechanisms determine that tradeoff?}
\end{tcolorbox}
\end{center}
We first characterize the extent and conditions of repetition harm (Section~\ref{sec:empirical}).
We compare six TFMs with conventional learners on synthetic, semi-synthetic and real datasets, keeping sites fixed as the readings per site increase.
Repeated readings often improve prediction at measured sites while worsening it at fresh inputs.
 
On the illustrative quadratic task, all six TFMs gain from the first few readings and then part ways. TabPFN-v3, TabPFN-v3.5 and LimiX-2 are worse than with one reading from eight readings on and end three to five times worse at 256; TabICLv2 and TabDPT cross only at thirty-two and end less than twice as bad. TabFM
  retains a benefit through 256 readings, demonstrating that the response is model-dependent.
Harm is generally more frequent at low noise, where the absolute reduction in uncertainty about the site means is smaller; increasing site coverage can mitigate it, although the dependence on site count is not uniformly monotone.
Appending irrelevant features amplifies the relative harm for TabPFN-v3, LimiX-2 and TabDPT, even though the regression function gains no active dimension.
Results on ten real datasets broadly support the controlled pattern: repetition often hurts with low within-site variation, while higher-variation datasets show substantial benefits.

We investigate reasons for this failure at two levels.
\emph{Statistically}, we examine the mismatch between the inference rule targeted by pretraining and the repeated-measurement observation scheme (Section~\ref{sec:protocolmismatch}).
Many TFMs are prior-data fitted networks (PFNs), which are trained to approximate Bayesian posterior predictive distributions under a prior over the covariate-response joint distributions $p(X,Y)$ and IID samples conditioned on $p(X,Y)$ \citep{muller2022,hollmann2025nature}.
The posterior appropriate to repeated measurements should count each site's covariates once, whereas the IID posterior targeted by pretraining instead counts them once per reading.
Under the commonly used structural causal model priors on $p(X,Y)$, the covariate distribution $p(X)$ and conditional response distribution $p(Y\mid X)$ can be coupled, so this excess covariate evidence also affects the posterior predictive distribution \citep{scholkopf2012causal}.
We further quantify the tradeoff between this misspecification and the gain in precision from repeated measurements.

\emph{Computationally}, we investigate how count-dependent attention contributes to repetition harm in trained TabPFN-v3 (Section~\ref{sec:computation}). Its learned attention scaling takes the number of context rows as an input, which increases with additional readings even when no sites are added. Resetting this count while holding the table and model weights fixed largely reverses the prediction distortion in a noiseless linear diagnostic. Count changes also propagate through the representations used by subsequent attention operations. With noisy readings, repetition concentrates test-query attention on fewer sites, but restoring the one-reading query-attention weights helps only some targets; separate embedding-stage resets support a contribution from the computations that construct context representations. Count reset also attenuates the amplification of harm by irrelevant features, although a residual dimension effect remains.

Finally, we evaluate ways to retain the benefits of repeated measurements while limiting their harm (Section~\ref{sec:remedies}). Averaging responses at each measured input improves on using one response in the settings examined, and increasingly outperforms supplying all responses as separate rows as repetition grows. Resetting the internal row count reduces low-noise harm but also weakens the benefits of additional measurements at higher noise. Averages alone omit information about unequal measurement precision and variation in future responses, motivating summaries that also retain repetition counts and within-input variation. In addition, in three tested TabPFN-v3 fine-tuning recipes, improvements when using all responses as separate rows come at the expense of accuracy on averaged responses (Section~\ref{sec:discussion}).

The central finding is that information from additional independent readings can be beneficial at measured sites yet harmful to predictions elsewhere.
Aggregation comparisons and count interventions show that how a learner represents and processes repeated readings matters alongside their information content.

\section{Setup and Representation Equivalence}
\label{sec:setup}

\phantomsection
\label{sec:protocol}

\textbf{Observation protocol and prediction target.}
At $m$ distinct sites $X_1,\ldots,X_m\in\mathbb R^d$ we observe
\mbox{$Y_{ij}=f(X_i)+\sigma_i\epsilon_{ij}$} for $j=1,\ldots,r_i$.
Conditional on the function and sites, the errors are independent with mean zero and
variance one. There are $n=\sum_i r_i$ readings; balanced counts $r_i=r$ give $n=mr$.
The noise variance of a site mean is $\sigma_i^2/r_i$.
We use mean squared error (MSE) to measure how additional readings at fixed sites affect prediction of $f(x_*)$ at fresh inputs.

\phantomsection
\label{sec:equivalence}
\textbf{How conventional learners use repeated readings.}
Many conventional learners accept sample weights. Let $\mathcal S=\{(x_k,y_k)\}_{k=1}^{|\mathcal S|}$
be a dataset and $w\in(0,\infty)^{|\mathcal S|}$ a vector of sample weights; an ordinary fit uses
$w=\mathbf 1$.
A learning algorithm that accepts sample weights maps $(\mathcal S,w)$ and a random seed $\omega$
to a predictor $\mathcal A(\mathcal S,w,\omega)$. The readings give two datasets:
$\mathcal D=\{(X_i,Y_{ij})\}_{i\le m,\,j\le r_i}$, which contains every reading, and
$\bar{\mathcal D}=\{(X_i,\Ybar_i)\}_{i\le m}$, which contains the site means
$\Ybar_i=r_i^{-1}\sum_jY_{ij}$. With $\mathbf r=(r_1,\ldots,r_m)$, the inputs
$(\mathcal D,\mathbf 1)$ and $(\bar{\mathcal D},\mathbf r)$ both place total weight $r_i$
at site $X_i$.

Under squared loss, a site's readings enter through their mean and count, up to a constant
independent of the predictor. To make this precise, fix a function $\varphi:\mathbb R^d\to[0,\infty)$ that localizes the fit
and a predictor $g$, and define the weighted squared loss
\mbox{$L_\varphi(g;\mathcal S,w)=\sum_k w_k\,\varphi(x_k)\{y_k-g(x_k)\}^2$}.
A region $A$ of the covariate space corresponds to $\varphi=\mathbf 1_A$, a kernel
smoother at $x$ to $\varphi=K_h(x-\cdot)$, and a global fit to $\varphi\equiv1$.
Expanding about each site mean gives
\begin{equation}
\label{eq:regional}
L_\varphi(g;\mathcal D,\mathbf 1)=L_\varphi(g;\bar{\mathcal D},\mathbf r)+\sum_{i}\varphi(X_i)W_i,
\end{equation}
where $W_i=\sum_j(Y_{ij}-\Ybar_i)^2$.
Since the last sum does not depend on $g$, we immediately see that when $\mathcal{A}$ is empirical risk minimization over a function class, we have the invariance $\mathcal A(\mathcal D,\mathbf 1,\omega)=\mathcal A(\bar{\mathcal D},\mathbf r,\omega)$.
In fact, this algorithmic invariance can be stated more broadly.

\begin{proposition}[Algorithmic invariance]
\label{prop:reduction}
Let $\mathcal A$ accept sample weights, and suppose it depends on its input only through
the total weight at each distinct covariate value and through differences
$L_\varphi(g;\cdot)-L_\varphi(g';\cdot)$ for nonnegative functions $\varphi$ of the
covariates and predictors $g,g'$. Then
$\mathcal A(\mathcal D,\mathbf 1,\omega)=\mathcal A(\bar{\mathcal D},\mathbf r,\omega)$.
\end{proposition}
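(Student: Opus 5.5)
The plan is to show that the two inputs $(\mathcal D,\mathbf 1)$ and $(\bar{\mathcal D},\mathbf r)$ are indistinguishable through every channel the hypothesis allows $\mathcal A$ to read. If they agree on all of these quantities, then $\mathcal A$, viewed as a function of them together with the seed $\omega$, must return the same predictor.

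\textbf{Step 1: total weights.} In $(\mathcal D,\mathbf 1)$ the distinct covariate values are exactly $X_1,\ldots,X_m$. Each $X_i$ carries $r_i$ rows of weight one, so its total weight is $r_i$. In $(\bar{\mathcal D},\mathbf r)$ the value $X_i$ appears once, with weight $r_i$. Hence the total-weight maps coincide.

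One caveat arises if two sites share a covariate value, $X_i=X_{i'}$ with $i\neq i'$. In that case I will either assume the sites are distinct, as the word ``distinct sites'' in the protocol indicates, or note that the totals still agree because both equal $r_i+r_{i'}$.

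\textbf{Step 2: loss differences.} Fix a nonnegative $\varphi$ and predictors $g,g'$. Apply \eqref{eq:regional} to each of $g$ and $g'$ and subtract. The term $\sum_i\varphi(X_i)W_i$ depends only on the data and $\varphi$, not on the predictor, so it cancels. This gives
\[
L_\varphi(g;\mathcal D,\mathbf 1)-L_\varphi(g';\mathcal D,\mathbf 1)=L_\varphi(g;\bar{\mathcal D},\mathbf r)-L_\varphi(g';\bar{\mathcal D},\mathbf r)
\]
for every $\varphi$, $g$ and $g'$.

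For completeness I will verify \eqref{eq:regional} briefly. Within site $i$, write $Y_{ij}-g(X_i)=(Y_{ij}-\Ybar_i)+(\Ybar_i-g(X_i))$ and expand the square. The cross term vanishes because $\sum_j(Y_{ij}-\Ybar_i)=0$, leaving $W_i+r_i\{\Ybar_i-g(X_i)\}^2$. Multiplying by $\varphi(X_i)$ and summing over sites gives the identity.

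\textbf{Step 3: conclude.} Formalize the hypothesis as follows: there is a map $\Phi$ such that $\mathcal A(\mathcal S,w,\omega)=\Phi(T_{\mathcal S,w},\Delta_{\mathcal S,w},\omega)$. Here $T$ is the total-weight function on covariate values, and $\Delta$ is the family $(\varphi,g,g')\mapsto L_\varphi(g;\mathcal S,w)-L_\varphi(g';\mathcal S,w)$. Steps 1 and 2 show that $T$ and $\Delta$ agree on the two inputs, so with $\omega$ fixed the outputs agree.

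The main obstacle is not computational but interpretive: the hypothesis ``depends only through'' must be pinned down as this factorization through $(T,\Delta,\omega)$. I would state that reading explicitly. In particular, row ordering, row count and raw loss levels are not among the admissible channels; the additive constant $\sum_i\varphi(X_i)W_i$ is exactly what this reading excludes.
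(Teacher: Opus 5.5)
Your argument is correct. Its mathematical core is the same as the paper's: the per-site expansion gives \eqref{eq:regional}, so the predictor-independent term $\sum_i\varphi(X_i)W_i$ cancels in every difference, and the total weights agree on the two inputs. You differ from the paper in how you formalize ``depends only through.'' You let $\mathcal A$ factor through the entire difference family $\Delta$, the weight map $T$ and $\omega$, which makes the conclusion immediate.

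The paper instead models $\mathcal A$ as a state machine. At each step it poses finitely many adaptive response queries: loss differences, derivatives of $\beta\mapsto L_A(g_\beta)$, comparisons of two partitions, and argmin sets over a class. Their arguments may depend on the current state, on $\Xi=\{(X_i,r_i)\}$, on $\omega$ and on earlier answers. The paper checks that each query type has the same answer on both tables, then inducts over queries within a step and over steps.

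Your formulation contains the paper's as a special case, since every adaptively obtained answer is a functional of $\Delta$. A derivative is a limit of difference quotients, and an argmin set is fixed by pairwise differences. So your version is more general and avoids the induction. What the paper's operational version buys is checkability. To apply your statement to a regression tree or to boosting, you would still have to show that the algorithm is a function of $(T,\Delta,\omega)$. That is exactly the step-by-step bookkeeping the paper's query formulation and its tree and boosting corollaries carry out. The paper's version also makes concrete which operations fall outside the admissible channels: row counts, impurity thresholds that see the constant $C_A$, and bootstrap.
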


The conditions hold for squared-error regression trees and gradient boosting with
weighted leaf values, no row subsampling and leaf-size constraints measured in total
weight, and for Nadaraya--Watson and local polynomial regression with a fixed bandwidth
(Appendix~\ref{app:invariance}).
Applied inside the Gaussian likelihood, \eqref{eq:regional} gives a GP counterpart.
With a fixed kernel and known noise variances $\sigma_i^2$, the GP posterior given
$\mathcal D$ equals that given $\bar{\mathcal D}$ with noise variances
$\sigma_i^2/r_i$, which play the role of the weights $\mathbf r$
(Appendix~\ref{app:precision}).
On the other hand, the conditions fail for a forest that bootstraps rows and
for any rule that counts rows, such as an optimizer whose number of steps grows with
the table.

While Proposition~\ref{prop:reduction} does not guarantee that extra readings improve
prediction, it explains why they usually do for these learners: under the reduction,
extra readings reach the learner only as more precise site means.
In fact, for balanced counts, any linear smoother
$\hat f(x)=\sum_i\ell_i(x)\Ybar_i$ whose weights depend only on the sites has nonincreasing
squared-error risk with $r$: its squared bias is unchanged and its variance is
$r^{-1}\sum_i\ell_i(x)^2\sigma_i^2$. The decrease is strict when
$\sum_i\ell_i(x)^2\sigma_i^2>0$. This includes a tree with a fixed partition and a
Nadaraya--Watson estimator with a fixed bandwidth.

A TFM has no weight input, so it receives the readings either as separate rows,
$(\mathcal D,\mathbf 1)$, or as unweighted means, $(\bar{\mathcal D},\mathbf 1)$,
which fails the condition of Proposition~\ref{prop:reduction}. Nothing then ensures
that extra readings act only through more precise site means, and
Section~\ref{sec:empirical} shows that for several TFMs they do not: predictions
improve at the measured sites but worsen between them.

\section{When Repetition Hurts}
\label{sec:empirical}

\phantomsection
\label{sec:representations}
\textbf{Experimental comparisons.}
We compare two fits of the same readings: one reading per site, the first (\Single,
$(\mathcal D_1,\mathbf 1)$), and all readings as separate rows (\Expanded,
$(\mathcal D,\mathbf 1)$). An increase in MSE of \Expanded over \Single is
\emph{repetition harm}. One mean per site (\Mean, $(\bar{\mathcal D},\mathbf 1)$) is
an averaging alternative further investigated in Section~\ref{sec:remedies}.

\textbf{Models and evaluation.}
We study TabPFN-v3 \citep{tabpfn3}, TabPFN-v3.5 \citep{tabpfn35}, TabFM \citep{tabfm}, TabDPT
\citep{tabdpt}, LimiX-2 \citep{limix} and TabICLv2 \citep{tabiclv2}, each in its shipped
configuration with no tuning and with the whole table in context, beside a Gaussian process (GP)
fitted on the rows, gradient boosting, a random forest and an MLP. Synthetic tasks use six one-dimensional functions
and three multivariate families, each standardised to unit variance; semi-synthetic tasks
keep the covariates of five real datasets and generate independent readings around a teacher
that is not a TFM and was fitted on disjoint data. Ten real datasets with repeated
measurements then test the same comparison on actual readings (Section~\ref{sec:real}).
Checkpoints, ensemble sizes and context handling are recorded in Appendix~\ref{app:protocol}.

\subsection{TFM Deterioration under Repetition}
\label{sec:curves}

\begin{figure}[ht]
\centering
\includegraphics[width=0.9\columnwidth]{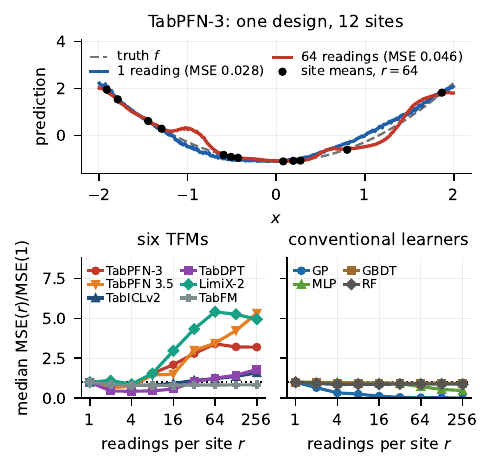}
\caption{More readings, worse predictions between the sites. \textbf{Top:} TabPFN-v3 on one
design (quadratic, twelve sites, noise SD $0.10$) with one reading per site (blue) and $64$
per site as rows (red); dots are the site means. \textbf{Bottom:} median over twenty
seeds of each seed's MSE$(r)/$MSE$(1)$ for the six TFMs and the four conventional learners. Raw errors and
all six TFMs' fits are in Appendix Table~\ref{tab:failure} and Figure~\ref{fig:fits}.}
\label{fig:failure}
\end{figure}

On one quadratic design (Figure~\ref{fig:failure}), every learner receives the same \Expanded
table as $r$ grows from one to $256$; we report the median over twenty seeds of each seed's
MSE$(r)/$MSE$(1)$, measured against the true $f$ on a $400$-point grid
(Appendix~\ref{app:protocol}).
At four readings all six TFMs are below their one-reading error ($0.43$--$0.88$).
TabPFN-v3, TabPFN-v3.5 and LimiX-2 then rise above it from eight readings on and end $3.2$ to
$5.3$ times higher at $256$; TabICLv2 and TabDPT cross only at thirty-two and end at $1.6$ and
$1.8$. TabFM, whose one-reading error is the highest of the ten learners, keeps a gain ($0.84$
at $256$). The conventional learners do not deteriorate: the GP falls to $0.02$ of its
one-reading error and the MLP to $0.48$, while boosting and the random forest gain a little
($0.94$ and $0.88$).

With $64$ readings TabPFN-v3 still passes through the site means but bends away from the truth
between them (Figure~\ref{fig:failure}, top).

\subsection{Noise and Coverage Reduce Repetition Harm; Irrelevant Features Amplify It}
\label{sec:noisecoverage}

\begin{figure*}[t]
  \centering
  \includegraphics[width=\textwidth]{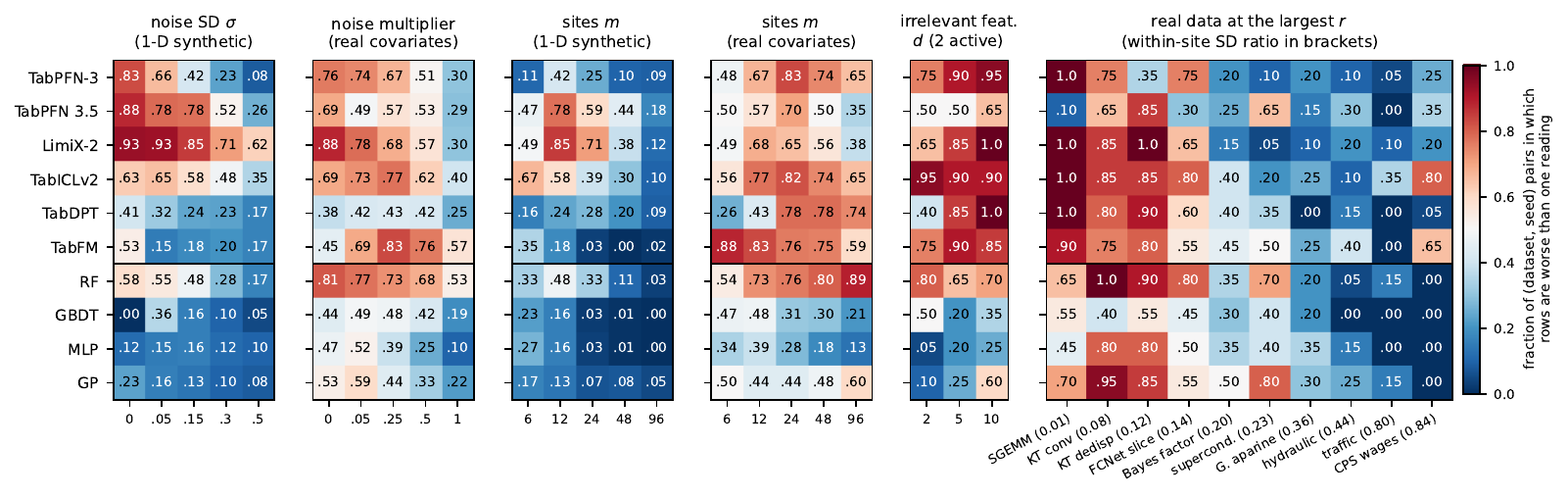}
  \vspace{-3ex}
  \caption{Fraction of the twenty paired seeds, pooled over functions or covariate datasets, in
  which \Expanded at $r=64$ has higher fresh-input MSE than
  \Single (TFMs above the line, conventional learners below; RF: random forest, GBDT: gradient boosting). Synthetic tasks are scored on a grid
  (one dimension) or fresh draws, semi-synthetic tasks on held-out rows. The irrelevant-features
  panel fixes a function of two coordinates and adds ignored ones. Real data: twenty splits at the
  largest count $R$, scored against held-out site means and ordered by within-site SD ratio
  (Section~\ref{sec:real}).}
  \label{fig:overview}
  \vspace{-1ex}
\end{figure*}

We compare \Expanded at $r=64$ with \Single; the one-dimensional default uses twelve random
sites and noise SD $0.15$, and on real covariates the noise SD is $0.25$ times the teacher's
residual SD (Appendix~\ref{app:protocol}). In the default one-dimensional setting, all six TFMs
improve measured-site MSE in all $120$ function--seed pairs, but the same predictors worsen MSE
at gap midpoints in $16$ (TabDPT) to $91$ (LimiX-2) of them (Appendix Table~\ref{tab:query}).

Raising the noise SD from $0$ to $0.5$ lowers the harmed fraction for all six TFMs.
Pooled over the six one-dimensional functions, the six models' fractions range from $0.41$--$0.93$
with noiseless copies to $0.08$--$0.62$ at SD $0.5$; TabPFN-v3's falls from $0.83$ to $0.08$,
while LimiX-2 remains harmed in most pairs at the highest noise tested. The fraction counts
any increase, however small: TabFM's median copy/\Single ratios stay within $1\%$ of one
(Appendix Table~\ref{tab:copy-contrast}).

Starting from the default twelve sites, denser coverage reduces repetition harm on the
quadratic target: the harmed-seed count is nonincreasing across $m=12,24,48,96$ for all
six TFMs (Appendix Table~\ref{tab:coverage}); for TabPFN-v3 the harmed counts fall from $16/20$
to $8/20$, $1/20$ and $0/20$. On real covariates in eight to eleven dimensions,
$96$ sites still leave $0.35$--$0.74$ of dataset--seed pairs harmed across the six TFMs
(Figure~\ref{fig:overview}).

\phantomsection
\label{sec:padding}
Irrelevant features act in the opposite direction. One function of two coordinates is kept
fixed, together with its sites, readings and target, and three and then eight independent
coordinates that it ignores are appended, so the prediction problem gains no active
dimension. TabPFN-v3's harmed fraction rises from $0.75$ to $0.95$ (Figure~\ref{fig:overview}),
and its median paired \Expanded/\Single ratio at $64$ readings from $1.60$ to $3.10$ and $3.15$;
LimiX-2 goes from $1.11$ to $3.06$ and TabDPT from a gain ($0.97$) to a loss ($1.72$), while
TabPFN-v3.5 shows no amplification ($1.00$, $1.00$, $1.04$; Appendix
Table~\ref{tab:paddingfull}). \Mean is not amplified in the same way: its error also
grows with the added columns, but it is worse than \Single in at most four of twenty seeds in
every model, which ties the amplification to the rows.

Conventional learners are harmed less often. Counting a design as harmed when \Expanded loses
in at least $11$ of its $20$ seeds, boosting, the MLP and the GP have no harmed
one-dimensional designs out of $60$ (the forest $10$); across all $114$ designs boosting and
the MLP are harmed in $6$ and $8$, against $25$--$73$ for the TFMs, and the forest, which
bootstraps rows (Section~\ref{sec:equivalence}), in $53$ (Appendix Table~\ref{tab:harmcounts}).
The GP is harmed in $19$ of the $45$ real-covariate designs, within the TFM range of $17$--$34$.

\subsection{Real-data Harm Concentrates at Low Within-site SD Ratios}
\label{sec:real}

We use ten public datasets in which a recorded covariate vector was measured more than once
(Appendix~\ref{app:realdata}): five computer experiments that repeat a run at a fixed
configuration \citep{paredes2017sgemm,vanwerkhoven2019kerneltuner,willemsen2025benchmarkhub,klein2019fcnet,franck2020bayes},
three physical or biological experiments
\citep{cabanne1999galium,helwig2015hydraulic,hamidieh2018superconductor} and two surveys
\citep{davison1997bootstrap,bierens2001cps}.

Since no dataset records a grouping beyond the site, such as a lab, batch or
subject shared across sites, we believe unit-specific effects should be relatively small
(Appendix~\ref{app:realdata}). Each dataset is split by site into
context and test halves twenty times, the readings per context site are equalised by
subsampling, and the target at a test site is the mean of all its readings. The datasets are
ordered by their within-site SD ratio, the pooled within-site SD divided by the SD of all
readings; a low ratio means the readings at a site agree closely relative to the spread of all
readings.

The controlled experiments predict harm where one reading is already precise relative to the
differences between sites, and that is where it appears (Figure~\ref{fig:overview}, final
panel). In the four datasets with the smallest within-site SD ratios, $21$ of $24$
model--dataset pairs are worse on most splits. On SGEMM, whose four runs of a configuration
differ by one percent of the spread, the rows are nearly copies: TabICLv2 goes from $0.112$ to
$0.232$ and TabDPT from $0.086$ to $0.162$. In the other six datasets, whose within-site SD
ratios are at least $0.20$, the rows help on most splits in $32$ of $36$ pairs and hurt in $3$:
TabPFN-v3.5 on superconductivity, and TabICLv2 and TabFM on wages.

Conventional learners provide a reference: on the four lowest-ratio datasets boosting's median
paired \Expanded/\Single ratio stays within $0.99$--$1.02$, while the forest, which bootstraps
rows, the MLP, whose updates grow with the rows, and the row-fitted GP are harmed by up to
$1.30$, $1.11$ and $1.58$, against up to $2.34$ for the TFMs. For the GP, kernel estimation and
dependence between readings are candidate explanations.

\section{Statistical Protocol Mismatch}
\label{sec:explanation}
\label{sec:protocolmismatch}

Many TFMs are prior-data fitted networks (PFNs), which are pretrained to approximate Bayesian prediction under a joint IID observation model
\citep[Section~3]{muller2022}.
We show how even applying this idealized prediction rule, without any approximation error from trained TFMs, can cause harm
when there are repeated measurements.

\textbf{Two observation protocols for the same table.}
The joint IID rule counts each reading as evidence about both its response and its site;
repeated measurement selects the site once and measures only the response again.
Let $\theta\sim\Pi$ index $p_\theta(x)p_\theta(y\mid x)$, and write
$L_X(\theta)=\prod_i p_\theta(X_i)$ and
$L_Y(\theta)=\prod_{i,j}p_\theta(Y_{ij}\mid X_i)$ for $r$ readings at each of $m$ sites.
Let $D_r$ denote this balanced table, the $\mathcal D$ of Section~\ref{sec:equivalence} with $r_i=r$.
Drawing sites once from $p_\theta(x)$ gives
\begin{equation}
\label{eq:reppost}
\Pi_{\mathrm{rep}}(d\theta\mid D_r)\propto\Pi(d\theta)L_XL_Y,
\end{equation}
whereas treating every row as a fresh joint draw gives
\begin{equation}
\label{eq:rowpost}
\Pi_{\mathrm{IID}}(d\theta\mid D_r)\propto\Pi(d\theta)L_X^rL_Y.
\end{equation}
If sites are fixed externally or sampled independently of the task, the appropriate posterior
is instead proportional to $\Pi(d\theta)L_Y$. Thus the same prior and table can give different
posteriors under different observation protocols. We next investigate how these differences can affect prediction.

\textbf{Measurement gain versus protocol mismatch cost.}
Extra readings at each site can improve the precision of site means, but potentially incur
a cost from protocol mismatch. To quantify the tradeoff, we compare the posterior means of both the repeated-measurement protocol
and the IID protocol, computing their Bayesian risks under the former's sampling law, which averages over both the random
function $F(x)=\int y\,p_\theta(y\mid x)\,dy$, with realizations $f$, generated by $\theta\sim\Pi$ and the data collected under that protocol.

\begin{proposition}[Gain--cost identity]
\label{prop:gaincost}
Fix a query point $x_*$ and suppose $\E_{\mathrm{rep}}\{F(x_*)^2\}<\infty$.
For $s\in\{1,r\}$, let $D_s$ follow the repeated-measurement sampling law and define
\mbox{$\widehat f_{\mathrm{rep},s}(x_*)=\E_{\mathrm{rep}}\{F(x_*)\mid D_s\}$}.
Let $\widehat f_{\mathrm{IID},s}(x_*)$ be the posterior mean under the IID protocol,
assumed square-integrable under the repeated-measurement sampling law.
Define the risk, protocol mismatch cost, and measurement gain by
\[
\begin{aligned}
R_{\mathrm{rep}}(g)&=\E_{\mathrm{rep}}\!\left\{(g(x_*)-F(x_*))^2\right\},\\
\mathcal C_s&=\E_{\mathrm{rep}}\!\left\{
(\widehat f_{\mathrm{IID},s}(x_*)-\widehat f_{\mathrm{rep},s}(x_*))^2\right\},\\
\mathcal B_r&=R_{\mathrm{rep}}(\widehat f_{\mathrm{rep},1})
-R_{\mathrm{rep}}(\widehat f_{\mathrm{rep},r}).
\end{aligned}
\]
Then
\[
\begin{aligned}
&R_{\mathrm{rep}}(\widehat f_{\mathrm{IID},r})-R_{\mathrm{rep}}(\widehat f_{\mathrm{IID},1})\\
&\qquad=\underbrace{\mathcal C_r-\mathcal C_1}_{\text{change in protocol mismatch cost}}
-\underbrace{\mathcal B_r}_{\text{measurement gain}}.
\end{aligned}
\]
\end{proposition}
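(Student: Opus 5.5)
The plan is a Pythagorean decomposition of each IID risk around the matching repeated-measurement posterior mean, followed by subtracting the $s=1$ case from the $s=r$ case.

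Fix $s\in\{1,r\}$. Since $\widehat f_{\mathrm{rep},s}(x_*)=\E_{\mathrm{rep}}\{F(x_*)\mid D_s\}$ is a conditional expectation of a square-integrable variable, it lies in $L^2$ of the repeated-measurement law. It is the orthogonal projection of $F(x_*)$ onto the $\sigma(D_s)$-measurable square-integrable functions. The IID posterior mean $\widehat f_{\mathrm{IID},s}(x_*)$ is a measurable function of the same table $D_s$, and it is square-integrable by assumption.

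The first step is to write
\[
\widehat f_{\mathrm{IID},s}-F=(\widehat f_{\mathrm{IID},s}-\widehat f_{\mathrm{rep},s})+(\widehat f_{\mathrm{rep},s}-F),
\]
with all functions evaluated at $x_*$, and expand the square. The cross term is
\[
\E_{\mathrm{rep}}\{(\widehat f_{\mathrm{IID},s}-\widehat f_{\mathrm{rep},s})(\widehat f_{\mathrm{rep},s}-F)\}.
\]
Condition on $D_s$: the first factor is $D_s$-measurable, and $\E_{\mathrm{rep}}\{\widehat f_{\mathrm{rep},s}-F\mid D_s\}=0$. So the cross term vanishes, giving
\[
R_{\mathrm{rep}}(\widehat f_{\mathrm{IID},s})=\mathcal C_s+R_{\mathrm{rep}}(\widehat f_{\mathrm{rep},s}).
\]
Each term is finite, because both estimators and $F(x_*)$ are in $L^2$; Cauchy--Schwarz makes the cross product integrable, so the tower property applies.

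The second step is to subtract the $s=1$ identity from the $s=r$ identity. This gives $\mathcal C_r-\mathcal C_1+R_{\mathrm{rep}}(\widehat f_{\mathrm{rep},r})-R_{\mathrm{rep}}(\widehat f_{\mathrm{rep},1})$, and the last difference is $-\mathcal B_r$ by definition.

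The only delicate point is measurability. We must ensure that the IID posterior mean is a function of $D_s$ alone, and that $D_1$ and $D_r$ are each treated under the repeated-measurement law, with conditioning on the respective table. We handle this by fixing a version of each posterior mean as a measurable map of the table; no coupling between $D_1$ and $D_r$ is needed. Integrability is exactly what the hypotheses provide. I therefore expect no real obstacle beyond stating the orthogonality step carefully.
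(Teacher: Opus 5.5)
Your proposal is correct and follows essentially the same route as the paper: for each $s\in\{1,r\}$, orthogonality of $\widehat f_{\mathrm{rep},s}(x_*)=\E_{\mathrm{rep}}\{F(x_*)\mid D_s\}$ gives $R_{\mathrm{rep}}(\widehat f_{\mathrm{IID},s})=R_{\mathrm{rep}}(\widehat f_{\mathrm{rep},s})+\mathcal C_s$, and subtracting the $s=1$ identity from the $s=r$ identity yields the result. As in the paper, no coupling of $D_1$ and $D_r$ is needed.
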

The proof is deferred to Appendix~\ref{app:gaincost}, where we also show that $\mathcal B_r\ge0$.

\textbf{Coupling permits a positive protocol mismatch cost.}
Suppose $\theta=(\theta_X,\theta_Y)$, with $p_\theta(x)=p_{\theta_X}(x)$,
$p_\theta(y\mid x)=p_{\theta_Y}(y\mid x)$ and independent priors on the two indices.
The covariate likelihood then cancels when computing Bayes' rule, giving $\mathcal C_r=0$.
On the other hand, most PFNs utilize structural causal model (SCM) priors, with responses generated from a randomly selected target node.
This makes the prediction rule anticausal, which can couple the randomness in $\theta_X$ and $\theta_Y$
\citep{hollmann2023tabpfn,tabpfn3,scholkopf2012causal}.
Extra covariate evidence can then shift posterior weights between models $p_\theta$ and $p_{\theta'}$ that agree at the measured
sites but predict differently elsewhere.
Under the repeated-measurement law, $\mathcal C_r>0$ precisely when the IID rule and the correct
repeated-measurement rule give different posterior means for the query target with positive probability.

\textbf{A construction exhibiting repetition harm.}
A construction in the appendix makes the tradeoff explicit. The response at a new query
combines a component determined by the function values at the measured sites with a separate
binary task component. Additional readings improve estimation of the first component.
The covariates provide noisy information about the second: the correct posterior accounts
for that noise, whereas the IID rule increasingly overweights the same evidence, making its
prediction too sensitive to chance fluctuations. This increases its average squared error
relative to the correct posterior mean. When reading noise is small and the covariates are
only weakly informative about the binary component, an approximation shows that measurement
gain is proportional to $1-1/r$, while mismatch cost is proportional to $(r-1)^2$, for fixed
repetition counts. Changing the relative strengths of reading noise and covariate information
changes when mismatch cost overtakes measurement gain, allowing harm from the second reading,
initial improvement followed by harm, or improvement throughout the tested range.
These possibilities qualitatively match the different responses to repetition observed across
models and tasks in Section~\ref{sec:curves}.
The appendix specifies the model, approximation, and resulting
thresholds (Proposition~\ref{prop:thresholds}; binary example and numerical checks in
Appendix~\ref{app:out:numerical}).

\section{Computational Mechanisms of Repetition Harm in TabPFN-v3}
\label{sec:computation}

Section~\ref{sec:protocolmismatch} shows that protocol mismatch can produce harm even under
exact inference. Here we investigate how the internal computation in frozen TabPFN-v3 can produce harm,
without assuming that it implements exact Bayesian inference.

\textbf{Architecture and a count-dependence hypothesis.}
In the TabPFN-v3 regressor, count-dependent attention appears in three
feature-distribution embedding modules and $24$ in-context learning (ICL) layers.
In the former, learned inducing queries attend over training rows of a column; in the
latter, both training-row and test-query states attend over training rows.
Query-aware scalable softmax (QASS) uses row count to scale attention queries
\citep{nakanishi2025ssmax,tabpfn3}. With $m$ sites and $r$ readings per site, this count grows
from $m$ to $mr$ without adding sites. We hypothesize that the resulting changes in attention
computation contribute to repetition harm.
To test this hypothesis, we apply a \emph{full count reset}, which sets the count supplied
to all $27$ scaling modules to $m$, leaving all other dataset and model parameters unchanged.

\textbf{Does count reset reverse harm?}
We first use noiseless readings, so repetition duplicates rows without adding information.
For a linear target, copying each row $64$ times flattens TabPFN-v3's fit near measured sites
and steepens it between them. Full count reset largely restores the linear shape and reduces
MSE within the range of measured sites from $1.3\times10^{-3}$ to $6.5\times10^{-5}$
(twelve sites, six site draws, one ensemble member; Table~\ref{tab:linear-count-scopes}).
A combined control additionally matches row identifiers and variance normalization,
recovering the one-reading predictions within a maximum difference of $4.4\times10^{-5}$
in all six cases (Appendix~\ref{app:independent-computation}). Thus count reset reverses much
of the distortion in these settings, while other implementation details account for the
remaining difference.

\textbf{Locating the effect: attention weights and token representations.}
Changing the row count supplied to a QASS module has a direct effect: it changes the
attention weights within that module. It also has indirect effects: the module's changed
output alters the token representations passed to later layers, potentially changing their
queries, keys, values, and attention weights. We
first examine the most visible change, in the attention weights with which test queries
read the training rows, and then ask whether it accounts for the harm.

With noisy readings, repetition concentrates test-query attention on fewer sites. We quantify
this by pooling the normalized attention weights over the readings at each site:
\begin{equation}
\label{eq:neff}
\omega_i(x)=\sum_j a_{ij}(x),\qquad
n_{\mathrm{eff}}(x)=\frac{1}{\sum_i\omega_i(x)^2}.
\end{equation}
This effective number of attended sites equals $m$ for uniform site weights and falls as
attention concentrates \citep{hill1973diversity}. At noise SD $0.05$, the median
$n_{\mathrm{eff}}$ in four late ICL layers falls from $4.7$--$6.8$ with one reading to
$2.3$--$3.8$ with $64$ readings. Full reset reverses this concentration and reduces fresh-input MSE from $0.051$ to $0.010$, compared with $0.017$ for \Single ($120$ cells: six functions $\times$ twenty seeds; Table~\ref{tab:e6}); Figure~\ref{fig:attncontrol} illustrates both changes.

These observations do not establish that attention concentration explains the harm:
concentration occurs both when repetition helps and when it hurts. At noise SD $0.05$,
of the $112$ cells where attention concentrates, $74$ are harmed and $38$ improved
(Table~\ref{tab:e6-joint}). We therefore test whether restoring the one-reading attention
pattern reduces error.
In a separate probe ($72$ cells), we transplant the one-reading
test-query-to-training-row attention weights into all $24$ ICL layers, spreading each site's
weight uniformly over its readings and letting all other computation follow the pass with
all readings. This matches the one-reading site weights in every ICL layer, but reduces error
in only $18$ of $40$ harmed cells and closes at least half the gap in $11$. It helps all twelve
step-function cells but none of the sixteen quadratic cells (Appendix~\ref{app:inside-tabpfn3}).
Restoring query-attention weights is therefore insufficient in many cases. In a separate probe
pooling noiseless and noisy readings, resetting only the counts in the three feature-distribution
embedding modules lowers median quadratic error from $0.183$ to $0.066$. Together, these
results support a contribution from computations that construct the representations read by test queries.

\begin{figure}[t]
  \centering
  \includegraphics[width=\columnwidth]{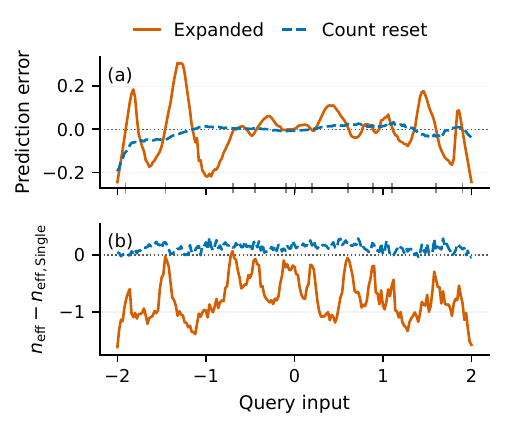}
  \caption{Full count reset on a fixed table of readings: TabPFN-v3, quadratic target,
  $12$ sites, $64$ independent readings, SD $0.05$, one ensemble member, $201$ shared queries.
  (a) Prediction minus truth; zero means correct prediction and grey ticks mark sites.
  (b) Site-pooled $n_{\mathrm{eff}}$ minus \Single, after the median over heads/ICL operations
  at each query; zero matches \Single's concentration. \Single's query median is $11.06$
  (uniform: $12$). Full count reset sets all count inputs to $12$.}
  \label{fig:attncontrol}
\end{figure}

\textbf{Revisiting the irrelevant-feature effect.}
We apply full count reset to Section~\ref{sec:padding}'s experiment, retaining all readings and
features ($m=24$, $r=64$, SD $0.15$). Median paired error ratios relative to \Single fall
from $1.60$, $3.10$, and $3.15$ to $1.045$, $1.051$, and $1.151$ at dimensions $2$, $5$,
and $10$, respectively (Table~\ref{tab:padding-count}). Full count reset substantially attenuates
the amplification with irrelevant features, although a residual dimension effect remains.
Section~\ref{sec:countcontrol} evaluates its practical limits: resetting the count can also
reduce the benefit of additional noisy readings.

\section{Practical Responses and Their Limits}
\label{sec:remedies}

\phantomsection
\label{sec:countcontrol}

\textbf{Count reset suppresses harm and useful gains.}
Full count reset retains \Expanded's readings but fixes all count inputs to $m$.
In TabPFN-v3's standard eight-member ensemble ($m=12$, $r=64$), it reduces low-noise harm
but loses gains at higher noise: at SD $0.30$ and $0.50$, it is worse than \Expanded
(Table~\ref{tab:countctl}).

\begin{table}[ht]
  \caption{Count reset against averaging, TabPFN-v3, $r=64$: median MSE at fresh inputs over
  the six one-dimensional functions and twenty seeds ($m=12$, eight ensemble members),
  and cells (of $120$) in which \Mean beats full count reset. At SD $0$, \Mean equals \Single.}
  \label{tab:countctl}
  \centering\small
  \setlength{\tabcolsep}{3pt}
  \begin{tabular}{lccccc}
    \toprule
    noise SD & \Single & \Expanded & \Mean & \shortstack{full\\reset} & \Mean wins \\
    \midrule
    0    & 0.0088 & 0.053 & 0.0088 & 0.0084 & 36 \\
    0.05 & 0.015  & 0.049 & 0.0094 & 0.0135 & 61 \\
    0.15 & 0.039  & 0.045 & 0.010  & 0.032  & 90 \\
    0.30 & 0.104  & 0.047 & 0.013  & 0.076  & 107 \\
    0.50 & 0.194  & 0.052 & 0.017  & 0.174  & 115 \\
    \bottomrule
  \end{tabular}
\end{table}

On real data, reset returns error near \Single where \Expanded hurts TabPFN-v3
(three datasets among the four lowest within-site SD ratios), but is worse than
\Expanded on most splits in all seven datasets where \Expanded helps (Appendix Table~\ref{tab:e2}). Averaging keeps $m$ rows while more accurate site values carry the measurement gain.

\phantomsection
\label{sec:averaging}

\textbf{Averaging retains precision gains with less bias.}
With balanced counts, \Mean's marginal median MSE is below \Single's for all six models
on one-dimensional synthetic and semi-synthetic tasks at $r=4,64$ ($m=12$, default noise).
\Expanded beats \Mean in six of twelve model--task-type medians at $r=4$, but only one
at $r=64$: one-dimensional TabFM (Appendix Table~\ref{tab:three}).

Both representations reduce prediction variance across independent reading banks;
their biases differ. With eight banks of $64$ readings at fixed sites, \Expanded's
median squared bias exceeds \Mean's for five of six models at both one-dimensional
noise SDs ($0.15$, $0.50$), and all six on semi-synthetic tasks. TabFM is the
one-dimensional exception (Appendix Tables~\ref{tab:banks}--\ref{tab:e3}).
In one dimension, \Mean also lowers squared bias relative to \Single for every model:
the benefit extends beyond reduced sensitivity to reading noise.

On real data, \Mean beats \Expanded on most splits in $20/24$ model--dataset pairs
at the four lowest within-site SD ratios, versus $20/36$ elsewhere (Section~\ref{sec:real}).
Averaging supplies more accurate site values; explicit precision information could
help learners exploit this gain.

\phantomsection
\label{sec:precision}

\textbf{What site means omit: counts and within-site variation.}
\Mean omits counts and within-site variances. Under independent homoscedastic noise,
$\Var(\bar Y_i\mid X_i)=\sigma^2/r_i$. A learner that overestimates this noise could
oversmooth the means, underusing their precision.
Under unequal counts, paired GP fits \citep{binois2018hetgp} use identical means and
pooled reading variance $\widehat\sigma^2$, estimating kernel parameters in each.
Replacing common noise variance $\widehat\sigma^2/\bar r$ with $\widehat\sigma^2/r_i$
lowers median MSE from $0.068$ to $0.023$ at reading-noise SD $0.15$;
$\bar r$ is the mean count (Appendix~\ref{app:three-tables}).

Copying each mean $r_i$ times encodes multiplicity; replacing copies with readings
adds within-site variation.
Copies raise error for TabPFN-v3, TabPFN-v3.5 and LimiX-2 in both task types but help
TabFM in both and TabDPT on semi-synthetic tasks.
Replacing copies with readings helps $4/6$ models in one dimension under the random
and clustered count schemes, but worsens $11/12$ semi-synthetic model--scheme cases
(Appendix Table~\ref{tab:e4enc}).

Future readings carry noise variance $\sigma^2$, versus $\sigma^2/r_i$ for site means.
\emph{Gaussian MeanRestored} (MR) adds pooled reading variance to a Gaussian fit of
\Mean's quantiles:
\begin{equation}
F_{\mathrm{MR}}=\mathcal N\!\left(q_{0.5},
  \left[\frac{q_{0.95}-q_{0.05}}{2\Phi^{-1}(0.95)}\right]^2+\widehat\sigma^2\right),
\label{eq:gaussian-mr}
\end{equation}
where $q_\alpha$ is \Mean's predictive $\alpha$-quantile at fresh $x$, and $\Phi$ the
standard normal CDF; MSE retains \Mean's native point predictions (Appendix~\ref{app:mr}).
For four quantile-output models with Gaussian homoscedastic noise ($r=64$), one-dimensional $90\%$
coverage rises from $43$--$68\%$ to $92$--$96\%$; median paired MR/\Mean
quantile-approximated CRPS ratios \citep{gneiting2007} are $0.906$--$0.988$.
Semi-synthetic coverage starts at $84$--$93\%$, with ratios $0.990$--$1.005$,
including slight deterioration. Median paired MR/\Expanded CRPS ratios are below one
in all eight model--task-type comparisons (Table~\ref{tab:e4cov}); restoration helps
most where \Mean undercovers strongly.

\section{Discussion}
\label{sec:discussion}

\textbf{Interpretation.}
Repeated readings make site estimates more precise but change how predictors use them.
Statistically, counting input evidence once per reading reweights functions that agree at
the sites, a protocol mismatch cost that precise readings leave little measurement gain to
offset, while denser coverage bounds the remaining disagreement. Computationally,
TabPFN-v3's row count alters site attention and internal encodings. Our reference concerns
observation-protocol misspecification; response-model misspecification can likewise make
Bayesian inference worse as data accumulate \citep{grunwald2017misspecified}.

\textbf{Design implications.}
The count-reset tradeoff, the GP precision comparison and the copies-versus-readings
contrast suggest giving TFMs site means, counts and within-site variances as separate
inputs; variances would also inform future-reading uncertainty. We hypothesize that
separating these inputs would retain measurement gains without using row count to convey
precision. None of three exploratory TabPFN-v3 fine-tuning recipes improved \Expanded
without degrading \Mean (Appendix Table~\ref{tab:finetune}). A kernel-weighted prediction
layer like KernelICL's \citep{miftachov2026kernelicl} is one route to such site-aware
weighting.

\textbf{Future directions.}
A direct test of this hypothesis would train these inputs under a prior that draws sites
once and readings within sites, scoring latent-function error and future-reading coverage.
Connecting the protocol mismatch cost to specific count and encoding interventions could
help predict when additional readings are useful. Extending the prior to shared
unit-specific effects would cover correlated readings. For sequential designs that choose
between replicating a site and exploring a new one \citep{binois2019replication}, the
choice would then also depend on how replicates are represented.

\FloatBarrier
\subsubsection*{Acknowledgements}
Qiong Zhang is supported by the National Key R\&D Program of China Grant 2024YFA1015800.
Yan Shuo Tan is supported by NUS Start-up Grant A-8000448-00-00 and MOE AcRF Tier 1 Grant A-8002498-00-00.
Additionally, Bibhas Chakraborty and Yan Shuo Tan are supported by MOE AcRF Tier 1 Grant A-8004458-00-00.
This work was conducted while Qiong Zhang was at the Institute of Statistics and Big Data,
Renmin University of China. Qiong Zhang is now affiliated with Mohamed bin Zayed University
of Artificial Intelligence.

We used generative AI tools to assist with literature review, theoretical development and proof writing, experimental design and implementation, result analysis, and manuscript drafting and editing. The authors take responsibility for the final content of the manuscript.

\bibliographystyle{apalike}
\bibliography{refs}

\clearpage
\appendix
\onecolumn
\begin{center}
{\Large\bfseries When Repeated Measurements Hurt Tabular Foundation Models:\\[0.5ex] Supplementary Materials\par}
\end{center}
\vspace{1ex}

\section{Proofs}\label{app:proofs}

We write $\sum_{i\in A}$ for the sum over sites with $X_i\in A$, and $n_A:=\sum_{i\in A}r_i$.

\subsection{Regional reduction}\label{app:regional}

\textbf{Statement.}
For a region $A$ and a function $g$, write, in the notation of Section~\ref{sec:equivalence} with $\varphi=\mathbf 1_A$, $L_A(g):=L_{\mathbf 1_A}(g;\mathcal D,\mathbf 1)=\sum_{i\in A}\sum_{j}(Y_{ij}-g(X_i))^2$, $\bar L_A(g):=L_{\mathbf 1_A}(g;\bar{\mathcal D},\mathbf r)=\sum_{i\in A}r_i(\Ybar_i-g(X_i))^2$ and $C_A:=\sum_{i\in A}W_i$. Then
\begin{equation}\label{eq:app-regional}
L_A(g)=\bar L_A(g)+C_A .
\end{equation}
(a) $C_A$ does not depend on $g$; hence $L_A(g)-L_A(g')=\bar L_A(g)-\bar L_A(g')$, and for any parametrisation $\beta\mapsto g_\beta$ the derivatives of $L_A(g_\beta)$ and $\bar L_A(g_\beta)$ coincide wherever either exists.
(b) If $A_1,\dots,A_K$ partition $A$, then $\sum_k C_{A_k}=C_A$; hence for two partitions $(A_k)$, $(A'_l)$ of $A$ with predictors $(g_k)$, $(g'_l)$,
\begin{equation}\label{eq:app-partition}
\sum_k L_{A_k}(g_k)-\sum_l L_{A'_l}(g'_l)=\sum_k \bar L_{A_k}(g_k)-\sum_l \bar L_{A'_l}(g'_l).
\end{equation}
No distributional assumption is made.

\begin{proof}
Fix a site $i$ and put $c=g(X_i)$. Writing $Y_{ij}-c=(Y_{ij}-\Ybar_i)+(\Ybar_i-c)$,
\begin{align*}
\sum_j (Y_{ij}-c)^2
&= W_i+2(\Ybar_i-c)\sum_j (Y_{ij}-\Ybar_i)+r_i(\Ybar_i-c)^2\\
&= W_i+r_i(\Ybar_i-c)^2,
\end{align*}
since $\sum_j(Y_{ij}-\Ybar_i)=0$. Summing over $i\in A$ gives \eqref{eq:app-regional}.
(a) $C_A$ involves the readings only, so it cancels in $L_A(g)-L_A(g')$, and $L_A(g_\beta)-\bar L_A(g_\beta)=C_A$ is constant in $\beta$.
(b) Each site of $A$ lies in exactly one $A_k$, so $\sum_k C_{A_k}=\sum_{i\in A}W_i=C_A$. Applying \eqref{eq:app-regional} to each child gives $\sum_k L_{A_k}(g_k)=\sum_k\bar L_{A_k}(g_k)+C_A$, and likewise for $(A'_l,g'_l)$; subtracting gives \eqref{eq:app-partition}.
\end{proof}

\textbf{Localizing functions.}
The statement and proof hold with $\mathbf 1_A$ replaced by any $\varphi\ge0$ on the
covariates. Since $\varphi(X_i)$ is the same for all readings at site $i$, the per-site
identity becomes $\sum_j\varphi(X_i)(Y_{ij}-c)^2=\varphi(X_i)\{W_i+r_i(\Ybar_i-c)^2\}$, so
$L_\varphi(g;\mathcal D,\mathbf 1)-L_\varphi(g;\bar{\mathcal D},\mathbf r)=\sum_i\varphi(X_i)W_i$
does not depend on $g$, and parts (a) and (b) follow as before. A Nadaraya--Watson or local
polynomial fit at $x$ with a fixed bandwidth minimizes $L_\varphi$ with
$\varphi=K_h(x-\cdot)$ over constants or polynomials, a query of type (iv) below, so it
returns the same predictor from $(\mathcal D,\mathbf 1)$ and $(\bar{\mathcal D},\mathbf r)$.
The sections below write regions for concreteness.

\subsection{Algorithmic invariance}\label{app:invariance}

\textbf{Reading the conditions.}
The condition of Proposition~\ref{prop:reduction} applies to every choice the algorithm
makes: which differences it evaluates, which may depend on earlier answers, and its
initialization, penalties, step sizes, stopping and tie-breaking. Split gains, comparisons
between partitions, gradients and least-squares minimizers are all such differences.
Comparing two partitions of $A$, with predictor $g_k$ on cell $A_k$, is the difference
$L_{\mathbf 1_A}(\tilde g;\cdot)-L_{\mathbf 1_A}(\tilde g';\cdot)$ between the piecewise predictors $\tilde g$ and $\tilde g'$; a derivative is
a limit of differences; and a minimizer over a class is fixed by its pairwise differences.
A choice that reads the number of rows in the dataset it is given fails the condition,
since that number is $n$ for $(\mathcal D,\mathbf 1)$ and $m$ for
$(\bar{\mathcal D},\mathbf r)$. The formulation below makes this precise as a sequence of
response queries.

\textbf{Precise formulation of Proposition~\ref{prop:reduction}.}
Let $\Xi:=\{(X_i,r_i)\}_{i\le m}$, the distinct covariate values with the total weight at each, which both inputs $(\mathcal D,\mathbf 1)$ and $(\bar{\mathcal D},\mathbf r)$ determine, and let $\omega$ be the random seed. A \emph{response query} is one of: (i) a difference $L_A(g)-L_A(g')$; (ii) a derivative of $\beta\mapsto L_A(g_\beta)$; (iii) a difference of the form \eqref{eq:app-partition}, which is a difference of type (i) between the piecewise predictors equal to $g_k$ on $A_k$ and to $g'_l$ on $A'_l$; (iv) the set $\arg\min_{g\in\mathcal G}L_A(g)$ for a class $\mathcal G$. On the input $(\bar{\mathcal D},\mathbf r)$ the same query is evaluated with $\bar L$ in place of $L$. Consider an algorithm with states $s_0,s_1,\dots$ such that $s_0$, each transition $s_t\mapsto s_{t+1}$, the stopping decision and the output are fixed functions of $(s_t,\Xi,\omega)$ and of the answers to finitely many response queries, where the arguments of each query (regions, predictors, classes, parametrisations) are fixed functions of $(s_t,\Xi,\omega)$ and of the answers to earlier queries. These functions, which include regularization and tie-breaking, are the same on both inputs. Then the algorithm passes through the same states, stops at the same step and returns the same predictor on both: $\mathcal A(\mathcal D,\mathbf 1,\omega)=\mathcal A(\bar{\mathcal D},\mathbf r,\omega)$.

\begin{proof}
A query with given arguments has the same answer on both tables: (i) and (ii) by \ref{app:regional}(a), (iii) by \ref{app:regional}(b), and (iv) because $L_A$ and $\bar L_A$ differ by a constant on $\mathcal G$. We induct on $t$. Suppose the two runs share $s_t$ (for $s_0$, replace $s_t$ by the empty state). The arguments of the first query depend only on $(s_t,\Xi,\omega)$ and so agree; hence its answers agree. If the first $k$ answers agree, so do the arguments of query $k+1$ and therefore its answer. Thus all answers in the step agree, and $s_{t+1}$, the stopping decision and the output agree.
\end{proof}

\textbf{Corollary (a): regression trees.}
Consider squared-error trees with weighted leaf means, candidate splits determined by the distinct covariate values, and feasibility constraints expressed through total weight $n_A$. Solving $\partial_c\bar L_A(c)=-2\sum_{i\in A}r_i(\Ybar_i-c)=0$ gives the constant minimiser $c_A=n_A^{-1}\sum_{i\in A}r_i\Ybar_i$, which is the row mean on the \Expanded table and the count-weighted mean on $(\bar{\mathcal D},\mathbf r)$. For a split $A=A_1\cup A_2$, the gain on the \Expanded table is
\begin{align*}
&L_A(c_A)-L_{A_1}(c_{A_1})-L_{A_2}(c_{A_2})\\
&\quad=\bar L_A(c_A)-\bar L_{A_1}(c_{A_1})-\bar L_{A_2}(c_{A_2})+C_A-C_{A_1}-C_{A_2},
\end{align*}
and $C_A=C_{A_1}+C_{A_2}$ by \ref{app:regional}(b), so it equals the weighted gain. Candidate splits and feasibility depend on $(\Xi,s_t)$ only, where $s_t$ is the current partition; leaf values are queries of type (iv) and gains of type (iii). The statement applies, and the two trees coincide.

\textbf{Corollary (b): gradient boosting.}
Consider squared-error boosting with $F_0\equiv c_A$ for $A$ the whole table, learning rate $\eta$, a fixed number of rounds, no row subsampling, and a weak learner that itself satisfies the statement above (for instance the tree of Corollary (a)). Suppose $F_t$ agrees on both tables. The row pseudo-residuals are $R_{ij}=Y_{ij}-F_t(X_i)$, with site mean $\bar R_i=\Ybar_i-F_t(X_i)$, which is the pseudo-residual of site $i$ in $(\bar{\mathcal D},\mathbf r)$, carried with weight $r_i$. Since $R_{ij}-\bar R_i=Y_{ij}-\Ybar_i$, identity \eqref{eq:app-regional} applied to the readings $R_{ij}$ gives, for every $h$,
\[
\sum_{i}\sum_j (R_{ij}-h(X_i))^2=\sum_i r_i(\bar R_i-h(X_i))^2+\sum_i W_i .
\]
The row-residual table is therefore the \Expanded table of the weighted site-residual table, the weak learner returns the same $h_t$, and $F_{t+1}=F_t+\eta h_t$ agrees. Induction over rounds completes the proof.

\begin{remark}
The conditions fail in practice under: bootstrap or subsampling of rows; minimum-leaf or minimum-split constraints counted in rows rather than in total weight; stopping rules that threshold the impurity $L_A(c_A)$ itself, which differs from $\bar L_A(c_A)$ by $C_A$; stochastic-gradient training whose number of steps depends on $n$; hyperparameters (for example a noise level) estimated from the table; and floating-point tie-breaking. In the balanced case $r_i=r$, $\bar L_A(g)=r\,L_A(g;\bar{\mathcal D},\mathbf 1)$, $r$ times the unweighted \Mean loss, so $L_A(g)+\lambda P(g)$ has the minimisers of the unweighted loss plus $(\lambda/r)P(g)$: the unweighted \Mean gives the same minimiser if penalties and count-based settings are divided by $r$.
\end{remark}

\textbf{Precision gain at fixed sites.}\label{app:precision}
Fix the function, sites and query, with balanced independent zero-mean reading errors of variance
$\sigma^2$. For a tree partition fixed independently of these errors, let the query's leaf
$A$ contain $k_A\ge1$ sites. Its mean prediction has conditional risk
\[
 \E_Y\!\left[\{\widehat f_r(x_*)-f(x_*)\}^2\right]
 =\left\{\frac{1}{k_A}\sum_{i:X_i\in A}f(X_i)-f(x_*)\right\}^2
   +\frac{\sigma^2}{r k_A}.
\]
The partition bias is fixed and the noise term decreases with $r$. Expanding readings or
weighting their means by $r$ gives the same leaf prediction.

With site-specific variances, the noise term is
$\sum_{i:X_i\in A}\sigma_i^2/(r k_A^2)$, which has the same $1/r$ dependence.

For independent Gaussian readings with known $\sigma_i^2>0$, the likelihood at site $i$ is
\[
 p(Y_{i1},\ldots,Y_{ir_i}\mid f(X_i))
 \propto \exp\!\left[-\frac{r_i\{\Ybar_i-f(X_i)\}^2}{2\sigma_i^2}\right],
\]
where the omitted factor is independent of $f(X_i)$. This is the likelihood of the
mean with variance $\sigma_i^2/r_i$, so a fixed GP prior gives the same latent posterior
under either representation. With known zero noise, copies impose the same exact constraint.

\subsection{Gain--cost identity}\label{app:gaincost}

\textbf{Proof of Proposition~\ref{prop:gaincost}.}
For each $s\in\{1,r\}$, conditional-expectation orthogonality under the repeated-measurement
law gives
\[
R_{\mathrm{rep}}(\widehat f_{\mathrm{IID},s})
=R_{\mathrm{rep}}(\widehat f_{\mathrm{rep},s})+\mathcal C_s.
\]
Subtracting the identities for $s=r$ and $s=1$ proves the proposition with the main-text
definition of $\mathcal B_r$. No coupling or nesting of $D_1$ and $D_r$ is required.
The following additional identities use a nested coupling to express the gain as an
expected squared difference of posterior means.

\textbf{Statement.}
All expectations are under the joint law of task, sites, readings and query. Let $\mathcal F_s:=\sigma(X_1,\dots,X_m,x_*,\,Y_{ij}:j\le s)$, so $\mathcal F_1\subseteq\mathcal F_r$, and read ``$D_s$-measurable'' as $\mathcal F_s$-measurable. Let $G=F(x_*)$ with $\E G^2<\infty$, $z_s:=\E[G\mid\mathcal F_s]$ and $R(g):=\E(g-G)^2$. For any square-integrable $\mathcal F_r$-measurable $a_r$,
\begin{equation}\label{eq:app-gaincost}
R(a_r)-R(z_1)=\E(a_r-z_r)^2-\E(z_r-z_1)^2 .
\end{equation}
Taking $a_r=z_r$ gives $\mathcal B_r=R(z_1)-R(z_r)=\E(z_r-z_1)^2\ge0$.
The nested coupling preserves the marginal laws of $D_1$ and $D_r$, so this is the
measurement gain defined in Proposition~\ref{prop:gaincost}.
For square-integrable $\hat f_r$ and $\hat f_1$, measurable with respect to $\mathcal F_r$ and $\mathcal F_1$,
\begin{equation}\label{eq:app-bridge}
R(\hat f_r)-R(\hat f_1)=-\E(z_r-z_1)^2+\E(\hat f_r-z_r)^2-\E(\hat f_1-z_1)^2 .
\end{equation}
If $\hat f_s$ also uses internal randomness $U_s$, \eqref{eq:app-bridge} holds provided $U_s$ is conditionally independent of $G$ given $\mathcal F_s$, for $s=1$ and $s=r$.

\begin{proof}
Let $g$ be square-integrable and $\mathcal F_s$-measurable. Then
\[
(g-G)^2=(g-z_s)^2+2(g-z_s)(z_s-G)+(z_s-G)^2 .
\]
The cross term is integrable by the Cauchy--Schwarz inequality, and $\E[(g-z_s)(z_s-G)\mid\mathcal F_s]=(g-z_s)(z_s-\E[G\mid\mathcal F_s])=0$. Taking expectations,
\begin{equation}\label{eq:app-pyth}
R(g)=R(z_s)+\E(g-z_s)^2 .
\end{equation}
Apply \eqref{eq:app-pyth} with $s=r$ to $g=a_r$ and to $g=z_1$, which is $\mathcal F_r$-measurable and square-integrable by Jensen's inequality: $R(a_r)=R(z_r)+\E(a_r-z_r)^2$ and $R(z_1)=R(z_r)+\E(z_1-z_r)^2$. Subtracting gives \eqref{eq:app-gaincost}. For \eqref{eq:app-bridge}, \eqref{eq:app-pyth} gives $R(\hat f_r)=R(z_r)+\E(\hat f_r-z_r)^2$ and $R(\hat f_1)=R(z_1)+\E(\hat f_1-z_1)^2$; subtract and insert $R(z_r)-R(z_1)=-\E(z_r-z_1)^2$. With internal randomness, replace $\mathcal F_s$ by $\mathcal F_s\vee\sigma(U_s)$ in the derivation of \eqref{eq:app-pyth}; conditional independence gives $\E[G\mid\mathcal F_s,U_s]=z_s$, so \eqref{eq:app-pyth} is unchanged.
\end{proof}

\begin{remark}
Both identities hold under the joint, prior-averaged law. They say nothing about the risk at a fixed $f$.
\end{remark}

\textbf{Fixed sites and an actual network.}
\label{app:network-bridge}
In protocol comparisons, $a_s=\widehat f_{\mathrm{IID},s}(x_*)$ denotes the IID posterior mean, for $s\in\{1,r\}$. With $z_s$ the conditional mean under the data-generating law as above, write $\mathcal C_s=\E(a_s-z_s)^2$ and $\mathcal B_r=R(z_1)-R(z_r)$ under that same law. Under the repeated-measurement (sampled-once) law, $z_s=\widehat f_{\mathrm{rep},s}(x_*)$, recovering the quantities in Proposition~\ref{prop:gaincost}. Under an externally fixed site protocol, $z_s$ is the fixed-site posterior mean and the IID rule can have $a_1\ne z_1$. Applying
\eqref{eq:app-pyth} at counts one and $r$ gives
$R(a_r)-R(a_1)=\mathcal C_r-\mathcal C_1-\mathcal B_r$.
For an actual predictor, write $d_r=a_r-z_r$ and $e_r=\widehat f_r-a_r$. Then
\begin{equation}\label{eq:bridge}
\begin{aligned}
R(\widehat f_r)-R(\widehat f_1)={}&-\mathcal B_r\\
&+\E(\widehat f_r-z_r)^2-\E(\widehat f_1-z_1)^2,
\end{aligned}
\end{equation}
and $\E(\widehat f_r-z_r)^2=\E d_r^2+\E e_r^2+2\E(d_re_r)$.
The cross term can reinforce or offset the reference shift. The fixed-function
bias/variance estimates in Table~\ref{tab:banks} use a different expectation and do
not identify these terms inside a TFM.

\subsection{Posterior odds under the two protocols}\label{app:odds}

\textbf{Statement.}
Let the task family be finite or countable with prior mass function $\pi$ (the prior $\Pi$ of Section~\ref{sec:protocolmismatch} restricted to such a family), write $p_\theta(x)$ for the input density and $p_\theta(y\mid x)$ for the response density of task $\theta$, and let $L_X(\theta):=\prod_i p_\theta(X_i)$ and $L_Y(\theta):=\prod_i\prod_{j\le r}p_\theta(Y_{ij}\mid X_i)$. Treating the $mr$ rows as independent draws from $p_\theta(x)p_\theta(y\mid x)$ gives the likelihood $\prod_i\prod_{j\le r}p_\theta(X_i)p_\theta(Y_{ij}\mid X_i)=L_X(\theta)^rL_Y(\theta)$; drawing each site once gives $L_X(\theta)L_Y(\theta)$; fixing the sites externally gives $L_Y(\theta)$. By Bayes' rule,
\[
\pi_{\mathrm{IID}}\propto\pi L_X^{r}L_Y,\qquad
\pi_{\mathrm{rep}}\propto\pi L_XL_Y,\qquad
\pi_{\mathrm{fix}}\propto\pi L_Y ,
\]
assuming the normalising sums are finite and positive (the sum for $\pi_{\mathrm{IID}}$ is not automatically finite for a countable family). Write $\mathrm{odds}_\bullet:=\pi_\bullet(\theta_1\mid D_r)/\pi_\bullet(\theta_0\mid D_r)$ and $\Lambda:=L_X(\theta_1)/L_X(\theta_0)$, and assume $\pi(\theta_k)L_X(\theta_k)L_Y(\theta_k)\in(0,\infty)$ for $k=0,1$.
\begin{equation}\label{eq:odds}
\frac{\mathrm{odds}_{\mathrm{IID}}}{\mathrm{odds}_{\mathrm{rep}}}=\Lambda^{r-1}.
\end{equation}
(a) $\mathrm{odds}_{\mathrm{IID}}/\mathrm{odds}_{\mathrm{rep}}=\Lambda^{r-1}$ and $\mathrm{odds}_{\mathrm{IID}}/\mathrm{odds}_{\mathrm{fix}}=\Lambda^{r}$.
(b) Fix the sites and let the readings at site $i$ be i.i.d.\ from $p_{\theta_0}(\cdot\mid X_i)$, independently or not across sites. Assume $\pi(\theta_0),\pi(\theta_1)>0$, $S_X:=\log\Lambda$ finite and $D_Y:=\sum_i\mathrm{KL}(p_{\theta_0}(\cdot\mid X_i)\,\|\,p_{\theta_1}(\cdot\mid X_i))<\infty$. Then, almost surely as $r\to\infty$, $r^{-1}\log\mathrm{odds}_{\mathrm{fix}}\to-D_Y$, $r^{-1}\log\mathrm{odds}_{\mathrm{rep}}\to-D_Y$ and $r^{-1}\log\mathrm{odds}_{\mathrm{IID}}\to S_X-D_Y$. No integrability condition beyond $D_Y<\infty$ is needed.
(c) If $p_\theta$ does not depend on $\theta$ and $0<L_X<\infty$, the three posteriors coincide for every $r$.

\begin{proof}
(a) Normalising constants cancel in odds, so with $\rho:=\pi(\theta_1)L_Y(\theta_1)/\{\pi(\theta_0)L_Y(\theta_0)\}$ we have $\mathrm{odds}_{\mathrm{IID}}=\rho\Lambda^{r}$, $\mathrm{odds}_{\mathrm{rep}}=\rho\Lambda$ and $\mathrm{odds}_{\mathrm{fix}}=\rho$; divide.

(b) Let $Z_{ij}:=\log\{p_{\theta_0}(Y_{ij}\mid X_i)/p_{\theta_1}(Y_{ij}\mid X_i)\}$, which takes values in $(-\infty,\infty]$ almost surely because $p_{\theta_0}(Y_{ij}\mid X_i)>0$ almost surely. Using $\log^+u\le u$, the negative part satisfies
\[
\E Z_{ij}^-\le\E\Big[\frac{p_{\theta_1}(Y_{ij}\mid X_i)}{p_{\theta_0}(Y_{ij}\mid X_i)}\Big]=\int_{\{p_{\theta_0}>0\}}p_{\theta_1}(y\mid X_i)\,dy\le1 ,
\]
so $\E Z_{ij}$ is well defined and equals the $i$th Kullback--Leibler term. These terms are non-negative with finite sum, so each is finite and $\E|Z_{ij}|<\infty$. For each $i$ the $Z_{ij}$ are i.i.d.\ in $j$, and the strong law gives $r^{-1}\sum_{j\le r}Z_{ij}\to\E Z_{i1}$ almost surely; summing over the finitely many sites, $r^{-1}\sum_i\sum_{j\le r}Z_{ij}\to D_Y$. Now
\[
\log\mathrm{odds}_{\mathrm{fix}}=\log\frac{\pi(\theta_1)}{\pi(\theta_0)}-\sum_i\sum_{j\le r}Z_{ij},
\]
and the first term does not depend on $r$, so $r^{-1}\log\mathrm{odds}_{\mathrm{fix}}\to-D_Y$. By (a), $\log\mathrm{odds}_{\mathrm{rep}}=\log\mathrm{odds}_{\mathrm{fix}}+S_X$ and $\log\mathrm{odds}_{\mathrm{IID}}=\log\mathrm{odds}_{\mathrm{fix}}+rS_X$; dividing by $r$ gives the limits $-D_Y$ and $S_X-D_Y$. In particular, if $S_X>D_Y$ the IID-rule odds in favour of the wrong task diverge.

(c) $L_X(\theta)$ is then a constant in $\theta$, and so is $L_X(\theta)^r$; both cancel on normalisation.
\end{proof}

\subsection{Replication in attention}\label{app:replication}

\textbf{Statement.}
Let $s_i:=q^\top k_i$ and $\mathrm{Att}:=\sum_i e^{s_i}v_i/\sum_i e^{s_i}$. If each pair $(k_i,v_i)$ is replaced by $r_i\ge1$ identical copies, the output equals attention over the original pairs with logits $s_i+\log r_i$. If $r_i=r$ for all $i$, the output equals $\mathrm{Att}$.

\begin{proof}
The $r_i$ copies of pair $i$ share the logit $s_i$ and the value $v_i$, so the replicated output is
\begin{equation}
\label{eq:replication}
\frac{\sum_i r_ie^{s_i}v_i}{\sum_i r_ie^{s_i}}=\frac{\sum_i e^{s_i+\log r_i}v_i}{\sum_i e^{s_i+\log r_i}} .
\end{equation}
If $r_i=r$, the factor $r$ cancels between numerator and denominator, leaving $\mathrm{Att}$.
\end{proof}

\begin{remark}
Theorem~\ref{app:out:thm:uniform-copy} gives sufficient operator-level conditions for the uniform-copy identity to extend to an entire attention network.
\end{remark}

\section{Proofs for the statistical reference}
\label{app:reference}

This section develops the gain--cost bounds and repetition thresholds for the Bayesian reference
of Section~\ref{sec:protocolmismatch}, extends the analysis to site coverage and a fixed variance
prior, and proves the network-replication result used in Section~\ref{sec:computation}.
Here $T=(F(X_i))_i$ denotes the vector of site values and $G$ the query target;
$\mathcal B_r$ and $\mathcal C_r$ are the measurement gain and protocol mismatch cost of
Proposition~\ref{prop:gaincost}, and $\mathcal X$ collects common site and query information.

\subsection{Joint likelihoods and a cancellation condition}
\label{app:out:protocol}
\label{app:protocolfull}
All posterior formulas assume a dominating measure and finite, strictly positive normalizers. For independently selected parent sites, the likelihood contains one $p_\theta(X_i)$ per parent. Conditional response independence gives the reading likelihood $L_Y$ at count $r$. Treating each reading as an independent joint row replaces the input exponent $1$ by $r$, proving Equations~\eqref{eq:rowpost}, \eqref{eq:reppost} and~\eqref{eq:odds}. For unequal counts, the discrepancy is $\prod_i p_\theta(X_i)^{r_i-1}$. For externally fixed sites the correct input exponent is zero. A site-selection likelihood cannot be inferred solely from the final table: it is part of the declared observation protocol.

If a query is sampled conditionally on $\theta$ with density $q_\theta(x_*)$, both posteriors additionally contain this factor once. If it is externally specified, neither does. An arbitrary dependence of query selection on the task or observations must likewise be modeled explicitly. The response distribution at the query is then integrated against the appropriate updated posterior.

\textbf{A sufficient cancellation condition.}
Suppose $\Pi=\Pi_X\otimes\Pi_Y$, $p_\theta(x)=p_{\theta_X}(x)$, and $p_\theta(y\mid x)=p_{\theta_Y}(y\mid x)$ for $\theta=(\theta_X,\theta_Y)$. If any informative query-input factor also depends only on $\theta_X$, integrating out $\theta_X$ gives the same posterior for $\theta_Y$ for every input exponent. This proves the factorized-prior claim in the body. More generally, for nuisance coordinates $W_X$ assume
\[
 p_\theta(z,w)=p_\theta(z)q(w),\qquad
 p_\theta(y\mid z,w)=p_\theta(y\mid z),
\]
where $q$ is task-independent, including independence from latent noise parameters. Then $\prod_iq(W_{X,i})^b$ is constant in posterior variables for every exponent $b$ and cancels. The same holds for a common query nuisance factor. This statement requires the factorization to be encoded in the model; it does not follow from a feature's empirical lack of predictive importance.

\subsection{Measurement gain with an unknown variance}
\label{app:out:measurement-gain}

For this measurement bound, let $F$ be a random function, with realizations $f$ as in Section~\ref{sec:protocol}, site values $T_i=F(X_i)$ and query target $G=F(x_*)$.
Condition first on common information $\mathcal X$. Readings follow
$Y_{ij}=T_i+\sqrt S\,\epsilon_{ij}$, where the $\epsilon_{ij}$ are IID standard Gaussian,
independent of $(T,G,\mathcal X,S)$, and the common reading-noise variance $S$ is positive almost surely.
Let $T$ have differentiable density $p$ on $\mathbb R^m$, finite Fisher information
$J=\E\|\nabla\log p(T)\|^2$, and the vanishing boundary terms needed for integration by parts. Let $g(t)=\E[G\mid T=t,\mathcal X]$ have a twice continuously differentiable extension to $\mathbb R^m$ with
$\|\nabla g\|\le L$ and $\|\nabla^2g\|_{\mathrm{op}}\le H_2$.
Assume $\E G^2<\infty$, $S\perp(T,G,\mathcal X)$, and $\E S^2<\infty$. Define
\[
 M_r=\frac1r\sum_{j=1}^rY_{\cdot j},\quad
 W_r=\sum_{i=1}^m\sum_{j=1}^r(Y_{ij}-M_{r,i})^2,\quad
 R_0=\E\operatorname{Var}(G\mid T,\mathcal X),
\]
\[
 K=\E\|\nabla g(T)\|^2,\quad Q_m=m(m+2),\quad
 C^{\mathrm{rem}}=L\sqrt J+\tfrac12H_2\sqrt{Q_m},
\]
where $M_{r,i}=\Ybar_i$ and $W_r=\sum_iW_i$ at count $r$. Define
\begin{equation}
 E_r=
 LH_2\sqrt{Q_m}\,\frac{\E S^{3/2}}{r^{3/2}}
 +\left(\frac{H_2^2Q_m}{4}+(C^{\mathrm{rem}})^2\right)
                  \frac{\E S^2}{r^2}.
 \label{app:out:eq:gain-remainder}
\end{equation}

\begin{lemma}[Finite measurement-gain bound]
\label{app:out:lem:gain}
For the posterior mean $z_r=\E[G\mid\mathcal X,M_r,W_r]$ that integrates $S$,
\begin{equation}
 \left|R(z_r)-R_0-\frac{\E S}{r}K\right|\le E_r,
 \qquad
 \left|\mathcal B_r-(1-1/r)\E[S]K\right|\le E_1+E_r.
 \label{app:out:eq:gain-bound}
\end{equation}
Conditional versions hold with conditional regularity constants and can be averaged whenever the displayed bounds are integrable.
\end{lemma}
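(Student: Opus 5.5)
Writing $R(z_r)=\E(z_r-G)^2$, I decompose it through the site values $T$. Since $(M_r,W_r)$ depends on $G$ only through $(T,S)$ and $\mathcal X$, we have $\E[G\mid T,\mathcal X,M_r,W_r]=g(T)$. The tower property then gives $z_r=\E[g(T)\mid\mathcal X,M_r,W_r]$, and orthogonality gives
\[
R(z_r)=R_0+\E\{g(T)-z_r\}^2 .
\]
It therefore suffices to show that $\E\{g(T)-\E[g(T)\mid\mathcal D_r]\}^2=\E[S]K/r+O(E_r)$, where $\mathcal D_r=(\mathcal X,M_r,W_r)$. The second inequality in \eqref{app:out:eq:gain-bound} then follows by subtracting the first at counts $1$ and $r$ and applying the triangle inequality: $\mathcal B_r=R(z_1)-R(z_r)$, and $\E[S]K-\E[S]K/r=(1-1/r)\E[S]K$.

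To estimate the posterior variance of $g(T)$, I Taylor-expand $g$ around the noisy observation $M_r=T+\sqrt{S/r}\,\bar\epsilon$, where $\bar\epsilon\sim\mathcal N(0,I_m)$ is independent of $(T,S)$. This gives
\[
g(T)=g(M_r)-\nabla g(M_r)^\top(M_r-T)+\rho ,\qquad |\rho|\le\tfrac12H_2\|M_r-T\|^2 .
\]
Because $g(M_r)$ is $\mathcal D_r$-measurable, it cancels in $g(T)-\E[g(T)\mid\mathcal D_r]$. The leading term is then $\nabla g(M_r)^\top\Delta$, with $\Delta=(M_r-T)-\E[M_r-T\mid\mathcal D_r]$.

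The key step is Tweedie's formula. Conditional on $S$, which given $W_r$ is independent of $M_r$ except through $S$, we have
\[
\E[M_r-T\mid M_r,S]=-(S/r)\nabla\log p_{S/r}(M_r),
\]
where $p_{S/r}$ is the density of $M_r$. This posterior shift is of order $S/r$, and its mean square is controlled by $(S/r)^2 J$, since the Fisher information of a Gaussian convolution is at most $J$. Using $\|\nabla g\|\le L$, this produces the $L\sqrt J$ part of $C^{\mathrm{rem}}$.

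The main term is therefore $\E[(\nabla g(M_r)^\top(M_r-T))^2]=\E[(S/r)\|\nabla g(M_r)\|^2]$, using independence of the noise from $T$ and $S$. Replacing $\nabla g(M_r)$ by $\nabla g(T)$ costs at most $H_2\|M_r-T\|$, which yields the cross term $LH_2\sqrt{Q_m}\,\E S^{3/2}/r^{3/2}$ together with a piece of order $S^2/r^2$. The Taylor remainder $\rho$ contributes $\tfrac14H_2^2\E\|M_r-T\|^4=\tfrac14H_2^2Q_m\E S^2/r^2$, since $\E\|\bar\epsilon\|^4=m(m+2)=Q_m$. I will combine the Tweedie shift and $\rho$ as a single remainder $e$ with $\|e\|_2\le C^{\mathrm{rem}}\sqrt{\E S^2}/r$. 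Expanding the square $(\text{main}+e)^2$ then gives exactly the constants in $E_r$, with the linear cross term controlled by Cauchy--Schwarz.

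The main obstacle is the conditioning on $W_r$ when $S$ is unknown. Since $S\perp(T,G,\mathcal X)$, I would condition on $(S,\mathcal X)$ first. Given $S$, the statistic $W_r$ is ancillary for $T$ ($W_r/S\sim\chi^2$ independently of $M_r$), so $\E[\cdot\mid\mathcal D_r]$ can be written as a mixture over the posterior of $S$ of the $S$-known conditional expectations. The posterior variance given $\mathcal D_r$ equals the $S$-known variance plus the variance across $S$ of the $S$-conditional means. That extra piece is a variance of the Tweedie shift, which is $O(S/r)$, so it folds into the $(C^{\mathrm{rem}})^2\E S^2/r^2$ term.

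I expect the bookkeeping here, keeping the $S$-mixture inside the stated constants and justifying the integration by parts behind Tweedie's formula (the assumed vanishing boundary terms), to be the delicate part. The conditional versions follow by running the same argument with all expectations taken conditionally.
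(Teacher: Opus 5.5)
Your first reduction $R(z_r)=R_0+\E\{g(T)-z_r\}^2$ matches the paper, and so does your final subtraction at counts $1$ and $r$. Your ingredients are also the right ones: Tweedie's formula with $J(p*\phi_v)\le J$, $\E\|\bar\epsilon\|^4=Q_m$, and independence of $W_r$ from $(T,M_r)$ given $S$.

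The gap is in the assembly. You put the Tweedie shift into a remainder $e$ and bound its cross term with the main term by Cauchy--Schwarz, and that cannot produce the stated $E_r$. Take $g$ linear. Then $H_2=0$ and $E_r=L^2J\,\E S^2/r^2$ has no $r^{-3/2}$ part. Your cross-term bound is nevertheless $2L\sqrt{JK\,\E[S]\,\E[S^2]}\,r^{-3/2}$, which dominates $E_r$ for large $r$ or small noise. So the claim that expanding $(\text{main}+e)^2$ gives exactly the constants in $E_r$ is false. Your route gives a remainder of the same order in $\nu$, but not the stated inequality.

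There are two further slips. First, $\E[(\nabla g(M_r)^\top(M_r-T))^2]=\E[(S/r)\|\nabla g(M_r)\|^2]$ is wrong, because $M_r$ is not independent of $M_r-T$. The identity holds with $\nabla g(T)$, which is where $K$ is defined. Second, swapping $\nabla g(M_r)$ for $\nabla g(T)$ costs $H_2\|M_r-T\|^2$. That gives $2LH_2\sqrt{Q_m}\,\E S^{3/2}/r^{3/2}$, twice the coefficient in $E_r$.

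The missing idea is a second orthogonal decomposition around the pivot $g(M_r)$, which is measurable with respect to $\mathcal D_r=(\mathcal X,M_r,W_r)$:
\[
\E\{g(T)-z_r\}^2=\E\{g(T)-g(M_r)\}^2-\E\{z_r-g(M_r)\}^2 .
\]
In your notation, the Tweedie shift minus $\E[\rho\mid\mathcal D_r]$ equals $g(M_r)-z_r=-\E[g(T)-g(M_r)\mid\mathcal D_r]$. Hence $\E[\{g(T)-g(M_r)\}\{g(M_r)-z_r\}]=-\E\{z_r-g(M_r)\}^2$ exactly, and there is no $r^{-3/2}$ interaction with the Tweedie term at all.

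The paper then treats the two pieces separately.
\begin{itemize}
\item For the first term, it expands $g(M_r)$ about $T$, rather than $g(T)$ about $M_r$. The linear part $\sqrt{S/r}\,\nabla g(T)^\top\bar\epsilon$ has mean square $\E[S]K/r$ by independence. Only the $\tfrac12H_2$ Taylor remainder crosses it, which gives the $LH_2\sqrt{Q_m}\,\E S^{3/2}/r^{3/2}$ and $\tfrac14H_2^2Q_m\E S^2/r^2$ terms.
\item For the subtracted term, it writes $z_r-g(M_r)=\E[\mu_{S/r}(M_r)-g(M_r)\mid\mathcal D_r]$, with $\mu_v(y)=\E[g(T)\mid T+\sqrt v\,\bar\epsilon=y]$. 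Conditional Jensen plus $\|\mu_v(Y)-g(Y)\|_2\le vC^{\mathrm{rem}}$ for $Y=T+\sqrt v\,\bar\epsilon$ (Tweedie, and Taylor about $Y$) bounds it by $(C^{\mathrm{rem}})^2\E S^2/r^2$.
\end{itemize}

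This also dissolves your ``main obstacle''. Jensen absorbs the posterior mixture over $S$ in one step, so no within-/between-$S$ variance bookkeeping is needed.
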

\begin{proof}
First fix $v>0$ solely for a calculation. Set $Y=T+\sqrt vN$, with $N\sim N(0,I_m)$ independent, and $\mu_v(Y)=\E[g(T)\mid Y]$. With $\phi_v$ the $N(0,vI_m)$ density, differentiating the convolution gives
\[
 \E[T-Y\mid Y]=v\nabla\log(p*\phi_v)(Y).
\]
Integration by parts also expresses the latter score as
$\E[\nabla\log p(T)\mid Y]$, so conditional Jensen gives
$J(p*\phi_v)\le J$. These are Gaussian score identities \citep{dytso2021derivative}.
Taylor expansion about $Y$, the score bound, and
$\E\|T-Y\|^4=v^2Q_m$ give
\[
 \|\mu_v(Y)-g(Y)\|_2
 \le vL\sqrt J+\tfrac12vH_2\sqrt{Q_m}=vC^{\mathrm{rem}}.
\]
The norm here and below is under the indicated joint experiment.

Given $S$, Gaussian orthogonal decomposition makes $W_r$ independent of $(T,M_r)$, and $G$ is conditionally independent of the reading errors given $T$. Consequently
\[
 z_r=\E[\mu_{S/r}(M_r)\mid M_r,W_r,\mathcal X],
 \qquad
 \E\{z_r-g(M_r)\}^2\le (C^{\mathrm{rem}})^2\E S^2/r^2.
\]
Orthogonal projection then yields
\[
 R(z_r)-R_0
 =\E\{g(T)-g(M_r)\}^2-\E\{z_r-g(M_r)\}^2.
\]
Taylor expansion about $T$ has linear term
$\sqrt{S/r}\nabla g(T)^\top N$, whose mean square is
$\E[S]K/r$. Its remainder has squared expectation at most
$H_2^2Q_m\E S^2/(4r^2)$. Cauchy--Schwarz bounds the absolute cross term by
$LH_2\sqrt{Q_m}\E S^{3/2}/r^{3/2}$, by first conditioning on $S$. Combining these bounds proves the first inequality. Subtracting its versions at $1$ and $r$ proves the second.
\end{proof}

If $S=\nu Z$ with a fixed positive law and $\E Z^2<\infty$, then $E_r=O(\nu^{3/2}+\nu^2)$ for each fixed $r$. The variance used inside the proof is an analytical conditioning variable. The evaluated $z_r$ uses only the observations and the prior for $S$.

\subsection{Weak input cues and finite-grid thresholds}
\label{app:out:local-thresholds}

\textbf{Joint law and regularity conditions.}
At fixed response sites, let $(T,G,H)$ have a proper task law with
$|G|\le M$ and $|H|\le H_0$. Common non-cue site/query information is denoted by $\mathcal X$. Let the cue noise $N_0\sim N(0,1)$, the standard Gaussian reading errors $\epsilon_{ij}$, and $Z>0$ be mutually independent and independent of the task, conditional on $\mathcal X$; the law of $Z$ is fixed, independent of $\mathcal X$, and has $\E Z^2<\infty$. Observe
$V=tH+N_0$ once and readings with variance $S=\nu Z$. Write $\rho_\nu$ for the law of $S$.
Only the cue likelihood is powered; any other input information is conditioned on once in both rules. With $T=(F(X_i))_i$, their joint posterior family for $(F,H,S)$ is
\begin{equation}
 \begin{split}
 \Pi_b(dF,dH,dS\mid\mathcal X,V,M_r,W_r)
 \propto{}&
 \Pi(dF,dH\mid\mathcal X)\rho_\nu(dS)
 e^{b(tVH-t^2H^2/2)} S^{-mr/2}\\
 &{}\times
 \exp\left[-\frac{W_r+r\|M_r-T\|^2}{2S}\right],
 \qquad b=1,r .
 \end{split}
 \label{app:out:eq:cue-posterior}
\end{equation}
Here $a_r$ and $z_r$ are the posterior means of $G=F(x_*)$ under $\Pi_r$ and $\Pi_1$, respectively.
Factors independent of the latent variables have been omitted. Sampling attributes
$U_i=(t/\sqrt m)H+\xi_i$ gives this cue because their aggregate
$m^{-1/2}\sum_iU_i=V$ is sufficient for $H$; the remaining components have task-independent likelihoods.

For each sufficiently small $t$ and almost every $(V,\mathcal X)$, impose the density, differentiability, and boundary assumptions of Lemma~\ref{app:out:lem:gain} on the task law conditional on $(V,\mathcal X)$. Let its constants be $L_t,H_{2,t},J_t$ and $C_t^{\mathrm{rem}}=L_t\sqrt{J_t}+\tfrac12H_{2,t}\sqrt{Q_m}$. Specifically require
\[
 \sup_{|t|\le t_0}\E_t[L_tH_{2,t}]<\infty,\qquad
 \sup_{|t|\le t_0}\E_t[H_{2,t}^2+(C_t^{\mathrm{rem}})^2]<\infty,
\]
and
\[
 K_t=\E_t\left\|\nabla_T\E_t[G\mid T,V,\mathcal X]\right\|^2
       \longrightarrow K_0 .
\]
These are sufficient uniformity assumptions, not consequences of bounded $G,H$ alone. They ensure that the averaged measurement-gain remainder is $o(\nu)$ along $t=\lambda\sqrt\nu$.

The zero-cue covariance constant is
\[
 C_0^2=\E_0\!\left[\operatorname{Cov}(G,H\mid T,\mathcal X)^2\right],\qquad C_0\ge0,
\]
where $\E_0$ denotes expectation under the zero-cue task law.

\begin{proposition}[Local repetition thresholds]
\label{prop:thresholds}
For the model and regularity conditions above, with $t=\lambda\sqrt\nu$, as $\nu\downarrow0$ on a fixed finite count grid,
\begin{equation}
\label{eq:threshold}
\begin{aligned}
&R_{\mathrm{rep}}(\widehat f_{\mathrm{IID},r})-R_{\mathrm{rep}}(\widehat f_{\mathrm{IID},1})\\
&\quad=\nu\{c_{\mathrm{cost}}(r-1)^2
-c_{\mathrm{gain}}(1-1/r)\}+o(\nu),
\end{aligned}
\end{equation}
where $c_{\mathrm{gain}}=\E[Z]K_0$ and $c_{\mathrm{cost}}=\lambda^2C_0^2$ are independent of $\nu$. If both are positive, put
$\eta=c_{\mathrm{gain}}/c_{\mathrm{cost}}$. For sufficiently small $\nu$, $r\ge2$ readings
beat one reading if $r(r-1)<\eta$; adding reading $r+1$ helps if
$r(r+1)(2r-1)<\eta$, with both counts on the grid. Strict reverse inequalities reverse
the comparisons; equality is undetermined.
\end{proposition}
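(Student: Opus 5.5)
The plan starts from the gain--cost identity of Proposition~\ref{prop:gaincost} in the form $R(a_r)-R(a_1)=\mathcal C_r-\mathcal C_1-\mathcal B_r$. Here $a_b$ is the posterior mean of $G$ under $\Pi_b$ in \eqref{app:out:eq:cue-posterior} with $b=r$ at count $r$. At count one the cue exponent is $1$, so the IID rule coincides with the repeated-measurement rule. This gives $a_1=z_1$ and $\mathcal C_1=0$. The two remaining terms are then expanded separately to order $\nu$.

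\textbf{Measurement gain.} I will apply Lemma~\ref{app:out:lem:gain} conditionally on $(V,\mathcal X)$, which is legitimate because the cue is observed once under the correct rule. This gives $|\mathcal B_r-(1-1/r)\E[S]K_t|\le E_1+E_r$ with $\E S=\nu\E Z$. Along $t=\lambda\sqrt\nu$, the uniformity assumptions make the averaged $E_r$ equal to $O(\nu^{3/2}+\nu^2)=o(\nu)$. Since $K_t\to K_0$, this yields $\mathcal B_r=\nu\,\E[Z]K_0(1-1/r)+o(\nu)$.

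\textbf{Mismatch cost.} The two posteriors differ only by the tilt $\exp\{(r-1)(tVH-t^2H^2/2)\}$ applied to $\Pi_1$. I will expand $a_r-z_r$ in the tilt strength. Write $\ell(H)=tVH-t^2H^2/2$. By exponential-family perturbation,
\[
a_r-z_r=(r-1)\operatorname{Cov}_{\Pi_1}(G,\ell(H))+O\bigl((r-1)^2\E_{\Pi_1}[\ell^2]\bigr)\text{-type remainder}.
\]
Because $|G|\le M$ and $|H|\le H_0$, and $V=O_p(1)$, one has $\ell=tVH+O(t^2)$ with $t=O(\sqrt\nu)$. The leading term is therefore $(r-1)tV\operatorname{Cov}_{\Pi_1}(G,H\mid\text{data})$, and the remainder is $O_p(\nu)$ in absolute value, hence $O(\nu^2)$ in mean square.

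Squaring and taking expectations gives $\mathcal C_r=(r-1)^2t^2\E[V^2\operatorname{Cov}_{\Pi_1}(G,H)^2]+o(\nu)$. As $\nu\to0$, two limits apply. The readings pin down $T$, so $\operatorname{Cov}_{\Pi_1}(G,H\mid\cdot)\to\operatorname{Cov}(G,H\mid T,\mathcal X)$. The cue becomes uninformative, so $V\to N_0$ independently of the task, and $\E V^2\to1$. Dominated convergence, using the boundedness of $G$ and $H$, then gives $\E[V^2\operatorname{Cov}^2]\to C_0^2$. With $t^2=\lambda^2\nu$, this yields $\mathcal C_r=\nu\lambda^2C_0^2(r-1)^2+o(\nu)$, and combining with the gain gives \eqref{eq:threshold}.

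\textbf{Main obstacle.} The hard step is making this convergence rigorous. The posterior covariance under $\Pi_1$ involves the unknown $S$ and the noisy means $M_r$, and it must converge in $L^2$ to the zero-cue conditional covariance given $T$. I would first try bounding the difference by the total-variation distance between the posterior of $(G,H)$ and its $T$-conditional law. That distance is controlled by the posterior concentration of $T$ at scale $\sqrt{\nu/r}$, combined with continuity of $\operatorname{Cov}(G,H\mid T=\cdot)$, which I may need to approximate by continuous functions in $L^2$.

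\textbf{Thresholds.} These are algebra on the finite grid, which makes the $o(\nu)$ terms uniform. The count $r$ beats $1$ when $c_{\mathrm{cost}}(r-1)^2<c_{\mathrm{gain}}(r-1)/r$, that is, when $r(r-1)<\eta$. Comparing $r+1$ with $r$, the change in the bracket is $c_{\mathrm{cost}}(2r-1)-c_{\mathrm{gain}}/(r(r+1))$, which is negative exactly when $r(r+1)(2r-1)<\eta$. Strict inequalities survive the $o(\nu)$ error for small $\nu$, while equality leaves the sign undetermined.
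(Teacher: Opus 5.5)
Your argument is correct and has the same architecture as the paper's proof. Both use the gain--cost identity with $a_1=z_1$ and $\mathcal C_1=0$, Lemma~\ref{app:out:lem:gain} applied conditionally on $(V,\mathcal X)$ for the gain, a first-order tilt expansion for the cost, $L^2$ convergence of a conditional covariance, and algebra on the finite grid.

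The real difference is the base point of the cost expansion. The paper writes both $a_r$ and $z_r$ as tilts of the \emph{zero-cue} posterior given $(\mathcal O_r,\mathcal X)$ and Taylor-expands each in $t$. Its leading term is therefore $(r-1)tVc_r$, where $c_r$ is a function of $(\mathcal O_r,\mathcal X)$ only. Replacing $V$ by $N_0$ costs $O(t^2)$, and independence of $N_0$ from $(\mathcal O_r,\mathcal X)$ gives $\|N_0c_r\|_2=C_r$ exactly. This yields the non-asymptotic two-sided bound \eqref{app:out:eq:cue-cost-bound}, which the paper later reuses for finite harm/benefit certificates.

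Expanding from $\Pi_1$ instead gives the coefficient $V\operatorname{Cov}_{\Pi_1}(G,H)$. Here $V$ and the covariance are dependent, because $\Pi_1$ conditions on $V$. You therefore need an extra asymptotic step: the tilt $e^{tVH-t^2H^2/2}$ tends to one and $V\to N_0$. You then obtain only the limit, not a finite bound. That step is sound, with two points to tighten:
\begin{enumerate}
\item Your \emph{dominated convergence} is really convergence in probability under the envelope $M^2H_0^2(|N_0|+t_0H_0)^2$.
\item The tilt remainder should be stated through the explicit envelope $(r-1)^2M(t|V|H_0+t^2H_0^2/2)^2$, since $O_p(\nu)$ alone does not give a mean-square bound.
\end{enumerate}

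For the step you flag as the obstacle, the paper avoids posterior concentration altogether. Let $u(T)=\E[U\mid T,\mathcal X]$ and let $\tilde u$ be a bounded Lipschitz function close to $u$ in $L^2$. The best-predictor property gives $\|\E[U\mid\mathcal O_r,\mathcal X]-u(T)\|_2\le\|\tilde u(M_r)-u(T)\|_2$. The right side is small because $\E\|M_r-T\|^2=m\nu\E Z/r\to0$. The paper applies this to each $U\in\{G,H,GH\}$.

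Your total-variation route, as stated, has two problems. First, it would need the conditional law of $(G,H)$ given $T$ to vary continuously in total variation, which is not assumed. Second, approximating only $\operatorname{Cov}(G,H\mid T=\cdot)$ would miss the term $\operatorname{Cov}(g(T),h(T)\mid\mathcal O_r,\mathcal X)$ in the law of total covariance, where $g,h$ are the conditional means of $G,H$ given $(T,\mathcal X)$. Working with the three conditional means, as the paper does, handles both.
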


\textbf{A finite bound for the cue cost.}
Let $\mathcal O_r=(M_r,W_r)$ and, under the zero-cue task law, define
\[
 c_r=\operatorname{Cov}(G,H\mid\mathcal O_r,\mathcal X),
 \qquad C_r=\|c_r\|_2.
\]
For $|t|\le t_0$ put
\[
 J_r^\circ=MH_0^2
 \left[2r+4(r^2+1)(3^{1/4}+2t_0H_0)^2\right].
\]
Then
\begin{equation}
 \left[(r-1)|t|C_r-J_r^\circ t^2\right]_+^2
 \le\mathcal C_r
 \le\left[(r-1)|t|C_r+J_r^\circ t^2\right]^2,
 \label{app:out:eq:cue-cost-bound}
\end{equation}
where $[u]_+=\max(u,0)$.

\begin{proof}
Conditional on $(\mathcal O_r,\mathcal X)$, let $\Psi_b(t,v)$ be the mean of $G$ after tilting the zero-cue conditional law by
$\exp\{b(tvH-t^2H^2/2)\}$. Differentiating a normalized expectation gives
$\partial_t\Psi_b(0,v)=bv c_r$.
For a general $t$, set $A=b(vH-tH^2)$. The second derivative is
\[
 \E_b[(G-\E_bG)(A-\E_bA)^2]
       +\operatorname{Cov}_b(G,-bH^2).
\]
Its absolute value is at most
$8Mb^2H_0^2(|v|+t_0H_0)^2+2MbH_0^2$.
Taylor's theorem for $b=r$ and $b=1$ therefore bounds the difference of their remainders. Under the actual experiment, $V=N_0+tH$ and
$\|V\|_4\le3^{1/4}+t_0H_0$. Replacing $V$ in the first-order difference by $N_0$ adds at most
$(r-1)MH_0^2t^2$, using $|c_r|\le MH_0$. Combining terms gives
\[
 \|a_r-z_r-(r-1)tN_0c_r\|_2\le J_r^\circ t^2.
\]
The zero-cue conditional covariance is a function of $\mathcal O_r,\mathcal X$, and $N_0$ is independent of these variables. Thus
$\|N_0c_r\|_2=C_r$. The direct and reverse triangle inequalities prove Equation~\eqref{app:out:eq:cue-cost-bound}.
\end{proof}

\begin{proof}[Proof of Proposition~\ref{prop:thresholds}]
For any bounded $U\in\{G,H,GH\}$, reading-noise independence gives
\[
 \E[U\mid\mathcal O_r,\mathcal X]
 =\E[\E(U\mid T,\mathcal X)\mid\mathcal O_r,\mathcal X].
\]
At fixed $\mathcal X$, approximate the bounded measurable function
$u(T)=\E(U\mid T,\mathcal X)$ in $L^2$ by bounded Lipschitz functions. Since
$\E\|M_r-T\|^2=m\nu\E Z/r\to0$, a Lipschitz function of $M_r$ approximates the same function of $T$. The conditional expectation, being the best $L^2$ predictor from $\mathcal O_r$, has no larger error. The approximation and then the noise limit show
$\E[U\mid\mathcal O_r,\mathcal X]\to\E[U\mid T,\mathcal X]$ in $L^2$.
Boundedness permits averaging over $\mathcal X$. Applying the result to the three listed $U$ proves $C_r\to C_0$.

Equation~\eqref{app:out:eq:cue-cost-bound} with $t=\lambda\sqrt\nu$ now gives
\[
 \mathcal C_r/\nu\longrightarrow
       (r-1)^2\lambda^2C_0^2 .
\]
Lemma~\ref{app:out:lem:gain}, conditional on $(V,\mathcal X)$ and averaged using the stated uniformity assumptions, gives
\[
 \mathcal B_r/\nu\longrightarrow(1-1/r)\E[Z]K_0.
\]
There are only finitely many counts, so convergence is simultaneous on a fixed finite grid.
Write the scaled leading terms in Equation~\eqref{eq:threshold} as
\begin{equation}\label{app:out:eq:local-coefficients}
A:=\nu\,\E[Z]K_0,\qquad B:=t^2C_0^2=\nu\lambda^2C_0^2.
\end{equation}
The main-text coefficients are $c_{\mathrm{gain}}=\E[Z]K_0=A/\nu$ and
$c_{\mathrm{cost}}=\lambda^2C_0^2=B/\nu$.
Substituting the two limits in Proposition~\ref{prop:gaincost} gives that expansion,
with $A/B=\E[Z]K_0/(\lambda^2C_0^2)$ when $B>0$.

Under $K_0C_0>0$ and $B>0$ (equivalently $\lambda\ne0$ here), let
$\eta=\E[Z]K_0/(\lambda^2C_0^2)>0$. For $r\ge2$, the leading contrast divided by $\nu$ factors as
\[
 \lambda^2C_0^2\,\frac{r-1}{r}\{r(r-1)-\eta\}.
\]
The adjacent-count difference divided by $\nu$ tends to
\[
 (2r-1)\lambda^2C_0^2-\frac{\E[Z]K_0}{r(r+1)}.
\]
Its sign is that of $r(r+1)(2r-1)-\eta$. A strict nonzero limit bounds the remainder away from a sign change for sufficiently small $\nu$, proving the claims. At equality the leading expansion does not determine a sign.
\end{proof}

\textbf{Margins on a tested grid.}
For finitely many proposed comparisons, suppose the smallest absolute nonzero leading contrast after division by $\nu$ is $\kappa>0$. Uniform convergence gives a common $\nu_0$ for which every remainder in those comparisons has magnitude less than $\kappa\nu/2$. All their leading signs then hold for $0<\nu<\nu_0$. The value of $\nu_0$ requires a finite remainder calculation; the asymptotic theorem does not supply a universal numerical cutoff.
For two tested counts $s>r$, the leading increment is
\[
 (s-r)\left\{B(s+r-2)-\frac{A}{rs}\right\}.
\]
For $B>0$, this gives the corresponding grid condition $rs(r+s-2)\lessgtr A/B$. These formulas do not justify letting the largest count grow with $1/\nu$.

\textbf{Finite covariance approximation.}
If $g(T)=\E[G\mid T,\mathcal X]$, $h(T)=\E[H\mid T,\mathcal X]$, and
$c(T)=\operatorname{Cov}(G,H\mid T,\mathcal X)$ have uniform Lipschitz constants $L_g,L_h,L_c$, then
\begin{equation}
 |C_r-C_0|
 \le L_c\sqrt{m\E S/r}
       +L_gL_h\sqrt{Q_m\E S^2}/r.
 \label{app:out:eq:covariance-approximation}
\end{equation}
Indeed, total covariance gives
$c_r=\E[c(T)\mid\mathcal O_r,\mathcal X]
 +\operatorname{Cov}(g(T),h(T)\mid\mathcal O_r,\mathcal X)$.
Projection bounds the first term's $L^2$ distance from $c(T)$ by
$L_c\|M_r-T\|_2$. The independent-pair covariance identity bounds the absolute second term by
$L_gL_h\operatorname{tr}\operatorname{Var}(T\mid\mathcal O_r,\mathcal X)$, hence by
$L_gL_h\E[\|T-M_r\|^2\mid\mathcal O_r,\mathcal X]$.
Conditional Jensen and the fourth Gaussian moment prove the displayed bound.
Together with Equations~\eqref{app:out:eq:gain-bound} and
\eqref{app:out:eq:cue-cost-bound}, this gives finite lower and upper intervals
$[\mathcal B_L,\mathcal B_U]$ and $[\mathcal C_L,\mathcal C_U]$.
The inequalities $\mathcal C_L>\mathcal B_U$ and
$\mathcal C_U<\mathcal B_L$ certify harm and benefit, respectively.

\textbf{A common weak cue and varying population noise.}\label{app:out:varying-noise}
Set $t=\delta$ and $\nu=c\delta^2$, keeping the zero-cue task law, the law of $Z$, and $c>0$ fixed. The proposition with $\lambda=1/\sqrt c$ yields
\[
 \frac{R(a_r)-R(z_1)}{\delta^2}
 \longrightarrow
 (r-1)^2C_0^2-c(1-1/r)\E[Z]K_0.
\]
Thus increasing $c$ increases the leading benefit at fixed weak cue. This changes the variance prior used by each reference predictor. It is not the physical-noise comparison with one predictor prior below.

\subsection{Coverage, unresolved values, and stable determination}
\label{app:out:coverage}

\textbf{A pointwise coverage envelope.}
For fixed sites and exact site values $T$, consider the fiber
\[
 \mathcal F_X(T)=\{f:f(X_i)=T_i\text{ for all }i\}.
\]
Assume every function in the prior support satisfies
$|f(x)-f(x')|\le Ld_{\mathcal X}(x,x')^\alpha$, with
$0<\alpha\le1$, in a specified metric. Let
$h=\min_i d_{\mathcal X}(x_*,X_i)$. Two functions in the fiber agree at a nearest site, so the triangle inequality gives
$|f(x_*)-f'(x_*)|\le2Lh^\alpha$.
Every posterior supported on the same fiber places its query mean in an interval of this length. In particular, if $g_1(T,\mathcal X)$ and $g_r(T,\mathcal X)$ are the means under the exact-value correct and powered-input posteriors,
\begin{equation}
 b_r^2:=\E\{g_r(T,\mathcal X)-g_1(T,\mathcal X)\}^2
       \le4L^2\E h^{2\alpha}.
 \label{app:out:eq:coverage-shift}
\end{equation}
This is an exact-value bound for fixed $r$ and a specified averaging law; it is not by itself a finite-noise risk bound.

The same nearest-site value confines $G$ to an interval of radius $Lh^\alpha$ conditional on $(T,\mathcal X)$, and therefore
$\operatorname{Var}(G\mid T,\mathcal X)\le L^2h^{2\alpha}$.
With $|H|\le H_0$, conditional Cauchy--Schwarz gives
\begin{equation}
 C_0^2\le H_0^2L^2\E_0 h^{2\alpha}.
 \label{app:out:eq:coverage-cue}
\end{equation}
The subscript $0$ denotes the zero-cue law defining $C_0$. Equation~\eqref{app:out:eq:coverage-shift} can instead be averaged under the original cue-informed design law. Both inequalities allow correlations among function values.

\textbf{A sampling rate requires conditional coverage.}
Suppose the domain has diameter $D<\infty$ in $d_{\mathcal X}$. The sites are i.i.d.\ exogenous draws, or conditionally i.i.d.\ given the task with a bound uniform over tasks. A random query is independent of those sites conditional on the same information. Assume, uniformly over possible queries and the conditioned information,
\[
 \Pr\{X_i\in B(x,\tau)\}\ge c \tau^{d_{\mathrm{eff}}},
 \qquad 0<\tau\le h_0 ,
\]
with $c,h_0>0$. Then
$\Pr\{h>\tau\}
\le e^{-mc \tau^{d_{\mathrm{eff}}}}$ for $0<\tau\le h_0$.
Integrating the tail of $h^{2\alpha}$ yields
\begin{equation}
 \E h^{2\alpha}
 \le
 \Gamma(1+2\alpha/d_{\mathrm{eff}})
       (mc)^{-2\alpha/d_{\mathrm{eff}}}
 +D^{2\alpha}e^{-mc h_0^{d_{\mathrm{eff}}}}.
 \label{app:out:eq:coverage-rate}
\end{equation}
The last term may be omitted when the bound holds through the domain diameter. To see the integral, extend
$\int_0^{h_0}2\alpha \tau^{2\alpha-1}e^{-mc \tau^{d_{\mathrm{eff}}}}\,d\tau$
to infinity and substitute $v=mc \tau^{d_{\mathrm{eff}}}$. The remaining bounded-domain tail is at most the displayed exponential term.
A marginal input-density bound without the relevant conditional independence is insufficient for the product bound.

The exponent $d_{\mathrm{eff}}$ describes neighborhood probabilities in a metric that controls function variation. It need not equal the number of columns or the number of active coordinates of a particular teacher. These upper envelopes do not order the difference $\mathcal C_r-\mathcal B_r$ across dimensions; such an ordering would require control of both terms under a specified family.

\textbf{A smooth region model with a positive protocol mismatch cost.}
Let $K\ge2$ disjoint plateau regions have equal probability for site selection and query selection. For independent equiprobable signs $H_1,\ldots,H_K$, set $f_H=uH_j$ on region $j$, with $u>0$, and join the plateaus smoothly outside the design support. This defines smooth functions for each fixed separated design support, with target variance $u^2$. Each site has an independently sampled cue vector
$U_i=(t/\sqrt m)H+\xi_i$, $\xi_i\sim N(0,I_K)$, whose aggregate is
$V=tH+N$. Responses depend on the region and $H$ but not additionally on the realized cue coordinates. A proper positive variance prior is independent of all signs, cues, and region labels.

Let $J$ be the query region. With probability
$p_0=(1-1/K)^m$, no measured site lies in region $J$. On this event, no reading or residual statistic contains information about $H_J$. Its posterior odds from the cue are $\exp(2tV_J)$ under the correct protocol and $\exp(2rtV_J)$ under the IID rule. Hence
\[
 z_1=z_r=u\tanh(tV_J),\qquad
 a_r=u\tanh(rtV_J).
\]
Put
\[
 \delta_r(t)=\E\{\tanh(rtV_J)-\tanh(tV_J)\}^2,
 \quad V_J=tH_J+N_J .
\]
For $t>0$ and $r>1$, $\delta_r(t)>0$; it increases strictly with real $r>1$ because the absolute difference increases for every nonzero $V_J$.

\begin{proposition}[Finite-noise and low-noise region comparisons]
\label{app:out:prop:regions}
In this model, let $p_{\mathrm{noise}}=\E\Phi(-u/\sqrt S)$. Then
\begin{equation}
 R(a_r)-R(z_1)
 \ge u^2\{p_0\delta_r(t)-4(1-p_0)p_{\mathrm{noise}}\}.
 \label{app:out:eq:region-finite}
\end{equation}
Thus $p_0>4p_{\mathrm{noise}}/\{\delta_r(t)+4p_{\mathrm{noise}}\}$ suffices for harm. If $S=\nu Z$ for any fixed proper law on $Z>0$, and $u,t,K,m,r$ are fixed, then
\begin{equation}
 \lim_{\nu\downarrow0}\{R(a_r)-R(z_1)\}
 =u^2(1-1/K)^m\delta_r(t).
 \label{app:out:eq:region-limit}
\end{equation}
\end{proposition}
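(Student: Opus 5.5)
We will work with the decomposition \eqref{eq:app-pyth} rather than the gain--cost form directly. Here $z_1$ is the correct posterior mean at count one, and $R(a_r)-R(z_1)$ compares the IID rule at count $r$ with it. Apply \eqref{eq:app-pyth} at level $s=r$ to $g=a_r$ and to $g=z_1$. This gives
\[
R(a_r)-R(z_1)=\E(a_r-z_r)^2-\E(z_r-z_1)^2=\mathcal C_r-\mathcal B_r .
\]
It therefore suffices to bound $\mathcal C_r$ below and $\mathcal B_r$ above.

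\textbf{Cost.} We restrict to the event $E_0$ that no site lies in the query region $J$; it has probability $p_0$. On $E_0$, independence of region labels, signs, cues and noise implies that the readings and residuals carry no information about $H_J$ beyond the cue. Hence $a_r-z_r=u\{\tanh(rtV_J)-\tanh(tV_J)\}$. The pair $(H_J,N_J)$ is independent of the design event, because region labels are sampled independently of the signs. So $\E[(a_r-z_r)^2\mathbf 1_{E_0}]=u^2p_0\delta_r(t)$ and $\mathcal C_r\ge u^2p_0\delta_r(t)$.

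\textbf{Gain.} Also by \eqref{eq:app-pyth}, $\mathcal B_r=R(z_1)-R(z_r)\le R(\tilde g)-R(z_r)\le R(\tilde g)$ for any $\mathcal F_1$-measurable competitor $\tilde g$ (and in fact $\le R(\tilde g)-R(z_r)$). On $E_0$ we have $z_1=z_r$, so only $E_0^c$ contributes. On $E_0^c$, take a site $i$ in region $J$ and set $\tilde g=u\,\mathrm{sign}(Y_{i1})$, with $\tilde g=z_1$ on $E_0$. A more careful version keeps $z_r$ and uses the conditional Pythagoras on $E_0^c$ only. The one-reading sign estimator errs with probability $\Phi(-u/\sqrt S)$, and its squared error is then $(2u)^2$. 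Averaging gives $\E[(\tilde g-G)^2\mathbf 1_{E_0^c}]\le 4u^2(1-p_0)p_{\mathrm{noise}}$. The main obstacle is making this localization rigorous. We will write $\mathcal B_r=\E(z_r-z_1)^2$, restricted to $E_0^c$ since $z_r=z_1$ on $E_0$. We then bound it by $\E[(z_1-G)^2\mathbf 1_{E_0^c}]\le \E[(\tilde g-G)^2\mathbf 1_{E_0^c}]$. This uses $\E[(z_r-z_1)^2\mid\cdot]\le\E[(z_1-G)^2\mid\cdot]$ and the optimality of $z_1$ conditionally on $\mathcal F_1$. That optimality holds because $E_0$ is $\mathcal F_1$-measurable, as region membership is site information. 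Combining the two bounds gives \eqref{app:out:eq:region-finite}. Rearranging $p_0\delta_r>4(1-p_0)p_{\mathrm{noise}}$ gives the stated sufficient condition.

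\textbf{Low-noise limit.} With $S=\nu Z$, $p_{\mathrm{noise}}=\E\Phi(-u/\sqrt{\nu Z})\to0$ by dominated convergence, so the lower bound tends to $u^2p_0\delta_r(t)$. For the matching upper bound, $R(a_r)-R(z_1)\le\mathcal C_r$. On $E_0^c$, both $a_r$ and $z_r$ equal $u\,\E_b[H_J\mid\cdot]$ under their respective posteriors, which is bounded by $u$. As $\nu\to0$ the readings in region $J$ pin down $H_J$, because the likelihood ratio $\exp(\pm 2u\sum_j Y_{ij}/S)$ dominates the fixed cue power $\exp(2rtV_J)$. Thus both posterior means converge to $uH_J$ in $L^2$, using boundedness and dominated convergence. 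The contribution of $E_0^c$ to $\mathcal C_r$ therefore vanishes. This leaves $\mathcal C_r\to u^2p_0\delta_r(t)$, and the lower and upper bounds together yield \eqref{app:out:eq:region-limit}.
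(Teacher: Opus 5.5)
Your finite-noise bound is correct and rests on the same two facts as the paper's proof. The first is the exact excess $u^2p_0\delta_r(t)$ on the unobserved-region event $E_0$. The second, on $E_0^c$, is the comparison of \Single's Bayes risk with the one-reading sign predictor. Write $R(g;A)=\E[(g-G)^2\mathbf 1_A]$. The paper splits $R(a_r)-R(z_1)$ directly over $E_0$ and $E_0^c$ and drops $R(a_r;E_0^c)\ge0$. Your bookkeeping through $\mathcal C_r-\mathcal B_r$, with $\mathcal B_r\le R(z_1;E_0^c)$, is equivalent. The lower half of your low-noise sandwich is also fine.

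The gap is in the upper half of the limit: the claim that $a_r\to uH_J$ in $L^2$ on $E_0^c$.

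\textbf{Why your justification does not work.} Your reason is that $\exp(\pm2u\sum_jY_{ij}/S)$ dominates $\exp(2rtV_J)$. That is a known-variance calculation, but neither predictor observes $S$. Both integrate $S$ against the prior, which is the law of $\nu Z$ and therefore shrinks together with the physical noise. After this integration the posterior of $H$ no longer factorizes over regions. So the IID tilt $e^{(r-1)tH^\top V}$ on the other signs also moves the $H_J$ marginal, and you would still have to show that the variance-integrated evidence from the readings overwhelms it. Your appeal to dominated convergence also presupposes a coupling across $\nu$ under which pointwise convergence holds, and you have not specified one.

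\textbf{How the paper closes it.} The paper never analyses the IID posterior directly.
\begin{itemize}
\item For the correct posterior, concentration follows from Bayes optimality. Nested projection and your sign-predictor bound give $R(z_r;E_0^c)\le R(z_1;E_0^c)\le 4u^2(1-p_0)p_{\mathrm{noise}}\to0$.
\item Let $q_{\mathrm{rep}}$ be the correct posterior probability of the sign opposite the realized $H_J$. Then $R(z_r;E_0^c)=2u^2\E[\mathbf 1_{E_0^c}q_{\mathrm{rep}}]$, so this expectation also tends to zero.
\item The quadratic cue term is constant over sign vectors. Hence the IID posterior's density relative to the correct one is at most $C(V)=\exp\{2(r-1)t\sum_j|V_j|\}$. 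This bound is a.s.\ finite and its law does not depend on $\nu$.
\item It follows that $\E[\mathbf 1_{E_0^c}q_{\mathrm{IID}}]\le c\,\E[\mathbf 1_{E_0^c}q_{\mathrm{rep}}]+\Pr\{C(V)>c\}$ for every $c>0$. Also $(a_r-G)^2\le 4u^2q_{\mathrm{IID}}$.
\item Letting $\nu\downarrow0$ and then $c\uparrow\infty$ gives $R(a_r;E_0^c)\to0$.
\end{itemize}
Together with $R(z_r;E_0^c)\to0$, this makes the $E_0^c$ part of $\mathcal C_r$ vanish and closes your sandwich.
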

\begin{proof}
On the unobserved-region event, posterior orthogonality makes the excess risk exactly $u^2\delta_r(t)$. On its complement $E$, the sign of one reading in the query region is an available predictor with conditional risk $4u^2p_{\mathrm{noise}}$. \Single's Bayes risk is no larger, while $a_r$ has nonnegative risk. Weighting the two events proves Equation~\eqref{app:out:eq:region-finite}.

For the limit, dominated convergence gives
$p_\nu=\E\Phi(-u/\sqrt{\nu Z})\to0$. Hence
$R(z_1;E)\le4u^2\Pr(E)p_\nu\to0$, and nested projection gives
$R(z_r;E)\le R(z_1;E)$. If $q_1$ denotes the correct posterior probability of the sign opposite the realized $H_J$, averaging also over the true sign gives
$\E[1_Eq_1]=R(z_r;E)/(2u^2)\to0$.

The density ratio of the IID posterior to the correct posterior is at most
$C(V)=\exp\{2(r-1)t\sum_j|V_j|\}$: the quadratic cue term is constant across sign vectors, and the range of the remaining log tilt is bounded by the displayed exponent. If $q_{\mathrm{IID}}$ is its probability of the wrong sign, then for every $c>0$,
\[
 \E[1_Eq_{\mathrm{IID}}]
 \le c\,\E[1_Eq_1]+\Pr\{C(V)>c\}.
\]
The law of $V$ is independent of $\nu$, and $C(V)<\infty$ almost surely. First take $\nu\downarrow0$, then $c\uparrow\infty$. Since
$(a_r-G)^2\le4u^2q_{\mathrm{IID}}$, its observed-region risk also vanishes. Only the exact unobserved-region contribution remains.
\end{proof}

At fixed signal amplitude and aggregate cue strength, this model's limiting harm increases with $K$ and $r$, and decreases with $m$. Comparisons over $m$ keep $t$ fixed by using the per-site cue strength $t/\sqrt m$. These are properties of this constructed family. Increasing its number of separated plateaus does not keep a global H\"older constant fixed on an arbitrary fixed domain, so it is not a universal dimension monotonicity theorem.

\textbf{Stable determination yields a smaller finite-noise cost.}
Condition on the sampled inputs under their correct task law. Let
$\ell=\log L_X$ have finite essential oscillation $\Omega$, and assume for any two functions in the latent support
\[
 |G-G'|\le L_G\|T-T'\|,
 \qquad
 |\ell-\ell'|\le L_\ell\|T-T'\|.
\]
In particular, both the query value and input evidence are determined by exact site values. For independent common variance $S$ with $\E S^2<\infty$,
\begin{equation}
 \mathcal C_r
 \le
 \frac{L_G^2L_\ell^2 Q_m\E S^2}{r^2}
 \left\{\frac{e^{(r-1)\Omega}-1}{\Omega}\right\}^2
 \quad(\Omega>0).
 \label{app:out:eq:stable-cost}
\end{equation}
To prove the bound, tilt the correct posterior by $e^{v\ell}$,
$0\le v\le r-1$. The derivative of its query mean is
$\operatorname{Cov}_v(G,\ell\mid D_r)$.
The independent-pair identity
$\operatorname{Cov}(A,B)=\tfrac12\E[(A-A')(B-B')]$
bounds its absolute value by
$L_GL_\ell\operatorname{tr}\operatorname{Var}_v(T\mid D_r)$.
The tilted density ratio is at most $e^{v\Omega}$, so this is at most
\[
 L_GL_\ell e^{v\Omega}
       \E[\|T-M_r\|^2\mid D_r].
\]
Integrate over $v$ and use conditional Jensen with
$\E\|T-M_r\|^4=Q_m\E S^2/r^2$ to prove the inequality.
When $\Omega=0$, the likelihood is constant and the actual cost is zero.
Under $S=\nu Z$, a fixed $r\ge2$, and Lemma~\ref{app:out:lem:gain} with $\E\|\nabla g(T)\|^2>0$, the cost is $O(\nu^2)$ and the measurement gain is positive of order $\nu$. Thus repetition helps for sufficiently small $\nu$ under these stronger stability assumptions.

\subsection{One fixed variance prior and varying physical noise}
\label{app:out:fixed-noise}

Changing $\rho_\nu$ in a matched Bayes calculation and changing the physical variance with $\rho$ fixed are different experiments. Here all predictions use one fixed proper variance prior. Conditional on sampled sites and any common input information, put
$P_b(dF)\propto\Pi(dF)L_X(F)^b$, $b=1,r$, and let
$p_b$ be the density of $T$ under $P_b$ and
$g_b(t)=\E_{P_b}[G\mid T=t]$. Assume $|G|\le M$, the $p_b$ are bounded and continuous and positive at almost every true $T$, and $g_b$ is continuous at those values. The variance is independent under the predictor prior and has density
\[
 \rho(s)=c\,s^{a-1}(1+o(1))\quad(s\downarrow0),
 \qquad a,c>0.
\]
The physical experiment draws the function under $P_1$ and independent readings with variance $s_0$, which is used only for this analysis. Write $R_{s_0}(g)$ for the squared-error risk under this physical law.

\begin{proposition}[Fixed-prior low-physical-noise limit]
\label{app:out:prop:fixed-noise}
Let $r>1$ be fixed and $k=m(r-1)>2a$. Under the coupling
$M_r=T+\sqrt{s_0/r}N$ and $W_r=s_0\chi$,
$N\sim N(0,I_m)$, $\chi\sim\chi_k^2$ independent, each posterior with exponent $b=1,r$ satisfies
\[
 \mathcal L_b(S/W_r\mid M_r,W_r)
 \longrightarrow \operatorname{InvGamma}(k/2-a,1/2)
\]
in total variation almost surely as $s_0\downarrow0$.
Moreover $a_r\to g_r(T)$ and $z_r\to g_1(T)$ in $L^2$, whereas \Single tends in $L^2$ to
\[
 \mu_1(T),\qquad
 \mu_1(y)=
 \frac{\int_0^\infty\rho(s)[(p_1g_1)*\phi_s](y)\,ds}
      {\int_0^\infty\rho(s)[p_1*\phi_s](y)\,ds}.
\]
Consequently,
\begin{equation}
 \lim_{s_0\downarrow0}\{R_{s_0}(a_r)-R_{s_0}(z_1)\}
 =
 \underbrace{\E_{P_1}\{g_r(T)-g_1(T)\}^2}_{b_r^2}
 -
 \underbrace{\E_{P_1}\{\mu_1(T)-g_1(T)\}^2}_{e_1^2}.
 \label{app:out:eq:fixed-noise-limit}
\end{equation}
The inverse-gamma convention is a density proportional to
$z^{-\gamma-1}e^{-\beta/z}$ for shape $\gamma$ and scale $\beta$.
\end{proposition}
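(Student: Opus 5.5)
The plan is to work with the joint posterior of $(T,S)$ under the exponent $b\in\{1,r\}$ and pass to the scale-free variable $U=S/W_r$. After integrating out the rest of $F$ given $T$, the posterior of $(T,S)$ given $(M_r,W_r)$ has density proportional to
\[
p_b(t)\,\rho(s)\,s^{-mr/2}\exp\!\left[-\frac{W_r+r\|M_r-t\|^2}{2s}\right].
\]
I will integrate out $t$ first. The inner integral equals $(2\pi s/r)^{m/2}\,(p_b*\phi_{s/r})(M_r)$. Substituting $s=W_rU$ then gives a density in $U$ proportional to
\[
\frac{\rho(W_rU)}{c\,(W_rU)^{a-1}}\;U^{a-1}\,U^{-m(r-1)/2}\,e^{-1/(2U)}\;(p_b*\phi_{W_rU/r})(M_r).
\]
Factors depending only on $W_r$ are dropped. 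As $s_0\downarrow0$ we have $W_r=s_0\chi\to0$ and $M_r\to T$. The first factor tends to one by the small-$s$ form of $\rho$. The last factor tends to $p_b(T)>0$, by continuity of $p_b$ at $T$ and because the Gaussian smoothing scale shrinks. The pointwise limit is therefore proportional to $U^{-(k/2-a)-1}e^{-1/(2U)}$, which is the inverse-gamma density with shape $k/2-a$ and scale $1/2$. It is integrable precisely because $k>2a$.

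To upgrade pointwise convergence to total variation, I will apply Scheffé's lemma. This requires convergence of the normalizers, which I will get by dominated convergence. The domination is the step I expect to be the main obstacle, because $\rho$ is controlled only near zero.

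I will split the range of $U$ into three parts.
\begin{itemize}
\item For $W_rU\le\varepsilon$, the ratio $\rho(W_rU)/(W_rU)^{a-1}$ is bounded by $2c$.
\item For $W_rU>\varepsilon$, the factor $\rho(W_rU)(W_rU)^{-mr/2}$ is bounded by $\rho(W_rU)\,\varepsilon^{-mr/2}$, and $\rho$ is integrable. After the change of variables, the mass of this region carries a factor $W_r^{\,k/2-a}\to0$ relative to the main term. I will check this by undoing the substitution on that region.
\item For all $U$, the convolution is bounded by $\sup p_b<\infty$.
\end{itemize}
Together these give the dominating function $U^{a-1-m(r-1)/2}e^{-1/(2U)}$ on the main region, plus a vanishing remainder. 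The convergence holds almost surely along the coupling, since the argument is pathwise in $(T,N,\chi)$.

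For the posterior means, I will write $\E_b[G\mid M_r,W_r]$ as the $U$-posterior average of
\[
\frac{\bigl((p_bg_b)*\phi_{W_rU/r}\bigr)(M_r)}{(p_b*\phi_{W_rU/r})(M_r)}.
\]
Both convolutions shrink to point evaluation at $T$, using continuity of $p_b$ and $g_b$ there. The ratio is bounded by $M$, and the total variation convergence just established controls the averaging. So the mean tends almost surely to $g_b(T)$, and boundedness gives $L^2$ convergence. The case $b=r$ yields $a_r\to g_r(T)$, and $b=1$ yields $z_r\to g_1(T)$.

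\Single has only one reading per site, so there is no residual statistic and $S$ is not identified. Its posterior mean is exactly $\mu_1(Y_{\cdot1})$ with $Y_{\cdot1}=T+\sqrt{s_0}N$. The denominator of $\mu_1$ lies in $(0,\sup p_1]$. The numerator and denominator are continuous in $y$ at almost every $T$, by dominated convergence over $s$: for $s$ bounded away from zero, $\phi_s$ is smooth; for small $s$, we are near point evaluation, where continuity of $p_1$ and $p_1g_1$ applies. Hence \Single converges to $\mu_1(T)$ almost surely, and in $L^2$ by the bound $M$.

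For the risk identity, $G$ has conditional mean $g_1(T)$ given $T$ under the physical law $P_1$, and the reading noise is independent. Orthogonality then gives
\[
\lim R_{s_0}(a_r)=\E\operatorname{Var}(G\mid T)+b_r^2,\qquad
\lim R_{s_0}(z_1)=\E\operatorname{Var}(G\mid T)+e_1^2,
\]
where $z_1$ is \Single. Subtracting these two limits yields the displayed equation.
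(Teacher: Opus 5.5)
Your proposal is correct and follows the paper's proof essentially step for step. You integrate out $T$ to get the variance density $\rho(s)s^{-k/2}e^{-W_r/(2s)}[p_b*\phi_{s/r}](M_r)$ and rescale by $s=W_rU$. You dominate the region $s\le\varepsilon$ with $\rho(s)\le 2cs^{a-1}$ and $\|p_b\|_\infty$, and show the region $s>\varepsilon$ carries relative mass $O(W_r^{k/2-a})$. You then obtain the posterior means from the ratio of convolutions given $S$, treat \Single as the fixed mixture $\mu_1$, and conclude by orthogonality of $G-g_1(T)$. Your Scheff\'e step and your argument combining total-variation convergence with a bounded integrand are just more explicit versions of the paper's ``dominated convergence, then normalization'' and ``posterior concentration'' steps.
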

\begin{proof}
Integrating the function out of the reading likelihood gives a variance density proportional to
\[
 \rho(s)s^{-k/2}e^{-W_r/(2s)}
                 [p_b*\phi_{s/r}](M_r).
\]
Fix a coupled realization with $\chi>0$ and a continuity point $T$ where $p_b(T)>0$. Put $\gamma=k/2-a>0$ and substitute $s=W_rz$. Multiplying the unnormalized density in $z$ by $W_r^\gamma$ gives pointwise convergence to
$c\,p_b(T)z^{-\gamma-1}e^{-1/(2z)}$.
For $s<\delta$, the near-zero expansion bounds $\rho(s)$ by a constant times $s^{a-1}$, and the convolution is at most $\|p_b\|_\infty$. This gives an integrable envelope in $z$. The normalized contribution from $s\ge\delta$ is $O(W_r^\gamma)$, since $s^{-k/2}\le\delta^{-k/2}$ and $\rho$ is proper. Dominated convergence, followed by normalization, proves the total-variation limit.

The scaled posterior is tight and $W_r\to0$, so the posterior for $S$ concentrates at zero. Conditional on $S=s$, the query mean is
$[(p_bg_b)*\phi_{s/r}](M_r)/[p_b*\phi_{s/r}](M_r)$.
Continuity at $T$ makes this tend to $g_b(T)$ when $s\to0$ and $M_r\to T$. The posterior concentration and boundedness of $G$ give almost sure and $L^2$ convergence of the mixed mean.

For \Single, $k=0$ and there is no residual-variance factor. Its posterior mean is the fixed mixture $\mu_1(M_1)$. The convolutions are continuous for every $s>0$, bounded by constants depending only on $\|p_1\|_\infty,M$, and integrated against a proper $\rho$. Its denominator is positive at the relevant $T$. Thus $\mu_1(M_1)\to\mu_1(T)$, again in $L^2$. Finally, conditional on $T$ under $P_1$, the residual $G-g_1(T)$ is orthogonal to either limiting prediction minus $g_1(T)$. Subtracting the resulting two risks proves Equation~\eqref{app:out:eq:fixed-noise-limit}.
\end{proof}

The degree condition is sufficient; this proposition does not classify the boundary $k=2a$. With compact upper variance support and $k>2a+2$, the same integral argument weighted by $z$ gives
$\E[S\mid M_r,W_r]\sim W_r/(k-2a-2)$.
The term $e_1^2$ measures \Single's remaining point-prediction error from mixing over noise, rather than coverage of a predictive interval.

\textbf{A finite comparison at a physical variance.}
For a general proper joint law, let $\E_s$ denote expectation under its regular conditional law given $S=s$, and set $R_s(g)=\E_s[(g-G)^2]$. The evaluated predictors still integrate variance. For each count $r'\in\{1,r\}$, define the analytical conditional reference
$z_{r'}^s=\E[G\mid D_{r'},S=s]$ and
\[
 Q_{r'}(s)^2=\E_s(z_{r'}-z_{r'}^s)^2,\quad
 \Delta_{\mathrm{IID}}(s)=\E_s(a_r-z_r)^2,\quad
 B_s=\E_s(z_r^s-z_1^s)^2.
\]
Projection conditional on $s$ gives the exact identity
\begin{equation}
 \begin{split}
 R_s(a_r)-R_s(z_1)
 ={}&\Delta_{\mathrm{IID}}(s)+2\E_s[(a_r-z_r)(z_r-z_r^s)]\\
   &{}+Q_r(s)^2-Q_1(s)^2-B_s .
 \end{split}
 \label{app:out:eq:physical-noise-identity}
\end{equation}
Indeed it first equals
$\E_s(a_r-z_r^s)^2-Q_1(s)^2-B_s$, and expansion gives the display. Direct and reverse triangle inequalities therefore yield
\begin{equation}
 \begin{split}
 (\sqrt{\Delta_{\mathrm{IID}}(s)}-Q_r(s))^2-Q_1(s)^2-B_s
 &\le R_s(a_r)-R_s(z_1)\\
 &\le(\sqrt{\Delta_{\mathrm{IID}}(s)}+Q_r(s))^2-Q_1(s)^2-B_s .
 \end{split}
 \label{app:out:eq:physical-noise-bounds}
\end{equation}
These formulas show explicitly why matched-prior orthogonality does not decompose every term after conditioning on a physical noise value. Only the analytical reference $z_{r'}^s$ is given $s$; no operational predictor or model-selection rule receives it.

\subsection{Numerical verification inside the reference models}
\label{app:out:numerical}

Numerical experiments in the specified Bayesian models with integrated unknown variance verify the approximations and illustrate their scope. The data-generating variance is used for simulation and independent scoring, while each predictor uses its measured sites, counts, means, residual sums of squares, and declared prior.

\textbf{A nonlinear query and non-Gaussian site values.}
Draw $T_i=\operatorname{atanh}(U_i)$ for independent $U_i$ uniform on $[-1,1]$, and an independent equiprobable sign $H$. Define
\[
 G=\frac1{\sqrt m}\sum_i\tanh(T_i)+\frac H2,\qquad
 V=tH+N_0,\qquad
 S\sim\operatorname{Uniform}(\nu/2,3\nu/2).
\]
The query is nonlinear in the site values, whose density is logistic rather than Gaussian. The target variance is $1/3+1/4$. Direct differentiation and $\E U_i^2=1/3$, $\E U_i^4=1/5$ give
\[
 K_0=\frac8{15},\qquad C_0^2=\frac14,\qquad
 L=1,\quad H_2=\frac4{3\sqrt{3m}},\quad J=\frac{4m}{3}.
\]
For example, $\partial_i g(T)=\{1-\tanh^2(T_i)\}/\sqrt m$ and the score of each $T_i$ density is $-2\tanh(T_i)$, giving the displayed gradient energy and Fisher information. Finite site/query values can be embedded by smooth interpolation; the calculation does not require a linear function of covariates.

The count study crosses $m=2$, four population variances
$\nu\in\{.02,.005,.00125,.0003125\}$, eight counts
$r\in\{1,2,3,4,6,8,12,16\}$, and four leading ratios
$\eta=A/B\in\{3,12,56,240\}$. At each $(\nu,\eta)$ the cue strength is
$t=\sqrt{\nu K_0/(\eta C_0^2)}$ because $\E Z=1$. Thus the ratios and variances are crossed, not paired in the listed order. There are 128 related conditions and 16,384 paired tasks per variance scale. Counts share their first reading; additional sufficient statistics are generated conditionally for each count. Variance integration uses Gauss--Legendre quadrature and site-value integration uses Gauss--Hermite quadrature.

At the smallest scale, all 25 nonboundary contrasts have the predicted sign: 13 improvements and 12 deteriorations. Bounds combining the finite measurement remainder with explicit upper and lower sums for the one-dimensional cue integral certify these same signs. Across all 112 nonbaseline conditions, 86 intervals certify a sign and 26 contain zero. An interval containing zero is an inconclusive certificate, not a zero-effect result.

In this factorized example the reading benefit is independent of cue strength. Figure~\ref{app:out:fig:theory-count} therefore combines the measurement-gain estimates with a deterministic cue-cost integral at the common value $t=.02$. For population mean variances
$.0003125,.00125,.005,.02$, the best tested counts are $1,2,3,4$ and the first harmful tested counts are $2,4,6,12$, respectively; the local leading curve gives the same grid choices. All variances remain hidden from the task-specific predictions. Different colors use different population priors and hence different reference predictors.

\begin{figure}[h]
 \centering
 \includegraphics[width=.72\linewidth]{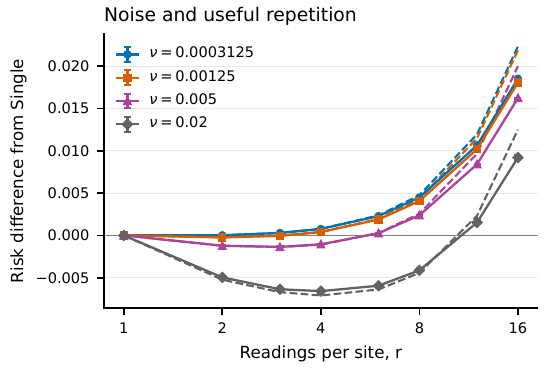}
 \caption{Count thresholds within a specified unknown-noise Bayesian model.
 Markers show the IID-posterior risk minus \Single risk, computed from the integrated reading benefit and cue cost for the nonlinear query model with $m=2$ and fixed cue strength $t=.02$.
 Dashed curves show Equation~\eqref{eq:threshold}.
 Colors vary the population mean variance $\nu$ and the corresponding prior used by the predictor.
 Error bars are pointwise $1.96$ Monte Carlo standard errors of the estimated benefit; cue costs are integrated deterministically.
 The plot verifies this reference calculation and does not report TFM predictions.}
 \label{app:out:fig:theory-count}
\end{figure}

\textbf{Region coverage with inferred variance.}
The region study enumerates all $2^K$ sign functions for
$K\in\{2,4,8\}$, $m\in\{2,4\}$, $r\in\{4,16\}$, and
$\nu\in\{.003,.03,.3\}$, using 8,192 tasks per condition.
Here $u=1$, $t=.25$, and the proper variance prior is
$\operatorname{InvGamma}(4,3\nu)$, whose mean is $\nu$.
Figure~\ref{app:out:fig:theory-coverage} uses all twelve combinations at $\nu=.003$ and compares direct paired risks with Equation~\eqref{app:out:eq:region-limit}. Unobserved-region predictions agree with the exact $\tanh$ expression to within $1.5\times10^{-14}$.
The numerical evaluations of Equation~\eqref{app:out:eq:region-finite} use uniform variance priors; the inverse-gamma posterior experiment above evaluates the low-noise region limit.

\begin{figure}[h]
 \centering
 \includegraphics[width=.75\linewidth]{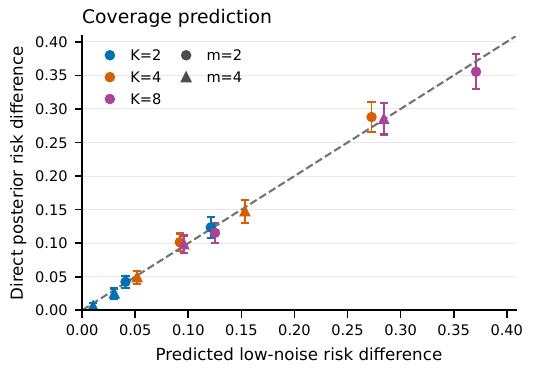}
 \caption{Coverage limit within the unknown-variance region model.
 Direct paired IID-minus-\Single risks at population mean variance $\nu=.003$ are plotted against $u^2(1-1/K)^m\delta_r(t)$, with $u=1$ and $t=.25$.
 All combinations $K=2,4,8$, $m=2,4$, and $r=4,16$ are included.
 The dashed line is equality; bars are pointwise $1.96$ Monte Carlo standard errors over 8,192 tasks.
 Each predictor integrates the same $\operatorname{InvGamma}(4,3\nu)$ prior within its condition.}
 \label{app:out:fig:theory-coverage}
\end{figure}

\textbf{A fixed prior at decreasing physical variance.}
A separate study holds a $\operatorname{Uniform}(0,1)$ variance prior fixed and uses $m=2$, $r\in\{4,8\}$, cue strengths $.03,.2$, and physical variances
$.03,.003,.0003,.00003$, with 2,048 tasks in each of the 16 conditions. It includes both signs of $b_r^2-e_1^2$ in Equation~\eqref{app:out:eq:fixed-noise-limit}. At physical variance $.00003$ and $r=8$, the two risk differences are approximately $-.034563$ and $+.038098$, approaching limits $-.034564$ and $+.038096$. Refining the \Single quadrature changes predictions by less than $7.5\times10^{-8}$. This fixed-prior experiment has a different interpretation from Figure~\ref{app:out:fig:theory-count}.

\subsection{Uniform replication in an entire attention network}
\label{app:out:uniform-copy}

This subsection concerns exact copies of complete observed rows. Fresh noisy readings need not have identical encoded values and do not satisfy the premises.

\begin{lemma}[A fixed-attention identity]
\label{app:out:lem:attention-copy}
For fixed query $q$, keys $k_i$, values $v_i$, and finite logits
$s_i$, uniformly replacing every allowed key/value pair by $r$ identical copies leaves normalized exponential attention unchanged.
Here $s_i$ may include a fixed score function of $q,k_i$ and a fixed pairwise bias. Masked keys are omitted and the allowed pool is nonempty.
\end{lemma}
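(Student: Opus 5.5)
The plan is to reduce the lemma to the same cancellation already used in the replication identity of Appendix~\ref{app:replication}, after first making precise what ``uniform replacement'' does to the logits. Let $\mathcal I$ denote the nonempty set of allowed indices, with masked keys omitted. Write the attention output as
\[
\mathrm{Att}=\frac{\sum_{i\in\mathcal I}e^{s_i}v_i}{\sum_{i\in\mathcal I}e^{s_i}}.
\]
The first step is to check that every copy $(i,c)$, for $c=1,\ldots,r$, carries exactly the logit $s_i$. The score part is a fixed function of $(q,k_i)$, and the copy's key equals $k_i$. The pairwise bias is fixed and is assigned to the copy as it was to the original pair, so it is unchanged. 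The copy's value is $v_i$. Masking is inherited as well: a copy is allowed if and only if its original is allowed. Hence the replicated allowed pool is $\mathcal I\times\{1,\ldots,r\}$, which is still nonempty.

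The second step is the computation. The replicated output is
\[
\frac{\sum_{i\in\mathcal I}\sum_{c=1}^r e^{s_i}v_i}{\sum_{i\in\mathcal I}\sum_{c=1}^r e^{s_i}}=\frac{r\sum_{i\in\mathcal I}e^{s_i}v_i}{r\sum_{i\in\mathcal I}e^{s_i}}.
\]
The denominator is finite and positive because the logits are finite and $\mathcal I$ is nonempty, so the common factor $r$ cancels and the output equals $\mathrm{Att}$. This is exactly Equation~\eqref{eq:replication} with $r_i=r$. Equivalently, it is the shift $s_i\mapsto s_i+\log r$, which softmax ignores. The cancellation also holds if numerically stabilised softmax subtracts $\max_i s_i$, since the maximum over the replicated pool is unchanged.

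The main obstacle is not the algebra. It is making sure the hypotheses rule out count dependence: the logits must not depend on the pool size. Under QASS, for example, the query scaling depends on the row count, so $s_i$ would change with $r$; this falls outside ``fixed score function'', and the statement should be read as excluding it. Likewise, a bias that differs between a copy and its original, or a mask applied asymmetrically across copies, would violate the premise. I would state these conditions explicitly as part of what ``fixed'' means, and then the argument above is complete.
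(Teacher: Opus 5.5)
Your proposal is correct and matches the paper's proof, which applies Equation~\eqref{eq:replication} to the allowed pool with $r_i=r$ and cancels the common positive factor $r$ in the numerator and denominator. Your extra checks are sound elaborations of what the paper leaves implicit in the word ``fixed'': that copies inherit their logits, biases and masks, that the denominator is positive, and that QASS-style count dependence is excluded.
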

\begin{proof}
Apply Equation~\eqref{eq:replication} to the allowed pool with $r_i=r$.
Both numerator and denominator have the same positive factor $r$, which cancels.
\end{proof}

Fixed logits are only a local condition: in a network, duplication can alter preprocessing, queries, keys, values, and the set of competing tokens. The following theorem gives sufficient conditions for every operator in a whole computation.

\begin{theorem}[Whole-network uniform-replication invariance]
\label{app:out:thm:uniform-copy}
Consider a finite attention-based computation graph with fixed parameters in exact arithmetic. Run it on a context and fixed queries, and on a second context in which every observed row is copied the same positive integer number of times. Assume the following correspondence is defined at every stage: each context-row state and its row-local feature/token states have identical copies, while query states and any global or inducing states have designated corresponding states. Require all of the following.
\begin{enumerate}
\item \emph{Preprocessing and initial states.}
Preprocessing commutes with this correspondence. In particular, data-derived centering, scaling, feature construction, target transformations, and any metadata used in computation have the same values on corresponding states. Initial copies receive the same embeddings, fingerprints, position or type encodings, and any auxiliary inputs as their originals. Fixed query and inducing states are initially unchanged.
\item \emph{Pointwise and row-local operators.}
Every affine map, activation, featurewise normalization, residual addition, gate, concatenation, and within-row feature operation uses the same parameters and the same corresponding arguments. A normalization over a row or feature embedding has the same coordinates and constants. Any operation aggregating over rows obeys condition 4 instead.
\item \emph{Attention pools, scores, and masks.}
For each corresponding target state and each head, the allowed source pool is either unchanged or obtained by replacing every original allowed source by the same number $c\ge1$ of copies. This number may depend on the attention operation, head, and target, but is common to all sources in that pool. Corresponding allowed source/target pairs have identical key, value, and query transformations, additive biases, temperatures, and other logit transformations. The masks preserve exactly those pairs and their stated multiplicities. Each output is normalized exponential attention over its allowed pool, followed only by the matched maps and head combinations in condition 2.
\item \emph{Other pooling and global statistics.}
Any remaining reduction is either over an unchanged corresponding collection or is invariant to uniform replication of its entire collection. Permitted examples are a mean with its matching normalization, a maximum, and functions of a normalized empirical measure. Unnormalized sums, cardinalities, sample variances with a Bessel correction, and pools that mix copied and uncopied items require a separate verification that their outputs are unchanged; they are not covered merely by permutation invariance.
\item \emph{Randomness and adaptive control.}
The computation is deterministic, or random variables are coupled so that corresponding originals and all their copies receive identical stochastic transformations and preserve conditions 1--4. Adaptive routing, retrieval, truncation, and stopping use the same corresponding decisions and maintain the permitted source pools. Independent dropout masks or row subsampling do not satisfy this condition in general.
\item \emph{Readout.}
The final prediction applies the same map to corresponding fixed query states and any invariant pooled states, followed by the same inverse target transformation. It does not introduce a new dependence on multiplicity.
\end{enumerate}
Then every corresponding query and global state agrees throughout the computation, all copies of each context state agree with the original, and the final query predictions are identical.
\end{theorem}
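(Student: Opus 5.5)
We argue by induction over a topological order of the finite computation graph. The invariant, call it $(\star)$, states that after each operator has been evaluated: every query state and every global or inducing state in the copied run equals its designated counterpart in the original run; and every copy of a context-row state (or of a row-local feature or token state) equals the state of its original row. Condition~1 gives $(\star)$ for the initial states. Preprocessing commutes with the correspondence, and copies receive the same embeddings and auxiliary inputs, so copies agree with their originals and fixed query and inducing states are unchanged. Condition~5 lets us treat the computation as deterministic. We couple all randomness so that originals and copies receive identical draws, and we fix the adaptive decisions to match, so every later case argument applies pathwise.

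For the inductive step we go through the operator types. For pointwise and row-local operators (condition~2), the output at a state is a fixed function, with matched parameters, of arguments that are equal by $(\star)$, so the outputs are equal. This covers both copies versus originals and query versus query. For an attention operator (condition~3), we fix a target state and a head. By $(\star)$ and the matched transformations, the target's query, and every source's key, value, bias and temperature, coincide with those in the original run. The allowed pool is either unchanged, in which case the output is literally the same expression, or it consists of $c$ copies of each original source with identical logits and values. In the latter case Lemma~\ref{app:out:lem:attention-copy} (equivalently Equation~\eqref{eq:replication} with $r_i=c$) shows that the normalized exponential output is unchanged. The same reasoning applies to each copied target row, so copies of a row attend to identical pools and agree with their original. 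Matched head combination then finishes the case by condition~2. For the remaining reductions (condition~4), either the collection is unchanged and the output is the same function, or the output is invariant under uniform replication of the collection, for example via the normalized empirical measure; in both cases the global state matches. Condition~6 then transfers $(\star)$ at the final query states and pooled states to equality of the predictions.

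The main obstacle is bookkeeping rather than calculation. We must make the correspondence precise enough that the induction is well founded. Concretely, we define a map $\kappa$ from states of the copied run to states of the original run, sending each copy to its original and each query or global state to its designated partner. We then check that every operator's argument list in the copied run equals $\kappa$ applied to the corresponding argument list, up to the uniform multiplicity allowed in a pool. The subtle point is that the multiplicity $c$ may vary across operations, heads and targets. The lemma is applied separately for each (operation, head, target) triple, so only uniformity within a single pool is required, and that is exactly what condition~3 stipulates. We will also note explicitly that masks preserve multiplicities. Without this, a pool could mix copied and uncopied sources, and the factor $c$ would no longer cancel between numerator and denominator.
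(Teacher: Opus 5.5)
Your proposal is correct and follows essentially the same argument as the paper: induction over a topological order of the graph, with condition~1 as the base case, conditions~2--5 carrying agreement through each operator (applying Lemma~\ref{app:out:lem:attention-copy} separately to each attention pool with its own multiplicity $c$), and condition~6 turning agreement of the final states into identical predictions. Your explicit correspondence map $\kappa$ spells out bookkeeping that the paper's proof leaves implicit.
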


\begin{proof}
At the initial stage the correspondence holds by condition 1. Order the finite computation graph topologically and suppose all inputs of the next operator satisfy it.
Operators in condition 2 preserve equality because they apply identical functions to identical arguments.
For an attention operation, condition 3 makes corresponding logits and values identical. If its source pool is unchanged, the result is immediate. Otherwise each original numerator contribution and each denominator contribution occurs exactly $c$ times. Lemma~\ref{app:out:lem:attention-copy} gives the same attention output for every corresponding target. This applies head by head, and condition 2 preserves equality after combining heads and residual paths.
Condition 4 preserves every other pooled or global state, while condition 5 supplies the same realized transformations and control decisions when randomness or adaptation is present. Thus the induction advances through every node. Condition 6 then gives identical predictions.
\end{proof}

\textbf{What the conditions allow and exclude.}
An unchanged inducing query may attend all uniformly copied context rows and remain invariant; copied row queries may then attend the unchanged inducing pool. Likewise, a row-local feature aggregator can be repeated once per row without changing its output. In contrast, a softmax pool containing $r$ copies of every context key but only one fixed special key has changed relative multiplicities and generally changes its output. A causal mask based on sequence position, newly sampled row fingerprints, count-dependent attention temperature, or a count-dependent preprocessing statistic can also invalidate the induction. Merely fixing a count input is sufficient only if all the other conditions hold.

For example, for $m>1$ scalars copied $r$ times, the unbiased sample variance is
\[
 s_{\mathrm{copied}}^2
 =\frac{r(m-1)}{mr-1}s_{\mathrm{original}}^2.
\]
This follows by replicating the centered sum of squares and using the two denominators. A scaler based on this quantity changes its scale despite identical empirical proportions; a population-variance scaler with denominator equal to the number of entries is invariant. This distinction explains why preprocessing is part of the theorem.

The theorem provides an architectural sufficient condition. Its fixed-attention lemma alone is not a theorem about a checkpoint. Whether a particular TFM satisfies the conditions, and which violated conditions matter for prediction, are empirical and implementation-specific questions addressed by the network interventions.

\section{Experimental protocol}
\label{app:protocol}

Code, experiment configurations, saved results and reproduction instructions are available at
\url{https://github.com/drizmiz/PFN-Repeated-Measurement-Supp}.

\subsection{Models and learners}

\textbf{Tabular foundation models.} All six use their default configurations with the
full-context settings below, without task-specific tuning or subsampling of rows; the largest
table has $6{,}144$ rows. TabPFN-v3: \texttt{tabpfn} 8.5.0, checkpoint
\texttt{tabpfn-v3-regressor-v3\_default}, eight ensemble members. TabPFN-v3.5: \texttt{tabpfn}
9.0.0. TabICLv2: \texttt{tabicl} 2.2.0, checkpoint \texttt{tabicl-regressor-v2-20260212}.
TabDPT \citep{tabdpt}: \texttt{tabdpt} 1.3.1, eight ensemble members, retrieval context size
set to cover the whole table. LimiX-2: the configuration without retrieval. TabFM: version 1.0.0 with its default
$32$ ensemble members and its row limit raised to $8{,}192$. A fresh model object is built for
every fit.

\textbf{Conventional learners.} Gaussian process: a constant times a Mat\'ern-$5/2$ kernel
plus white noise, kernel and noise level fitted by marginal likelihood from one starting point,
targets normalised, fitted on the rows; on real covariates the inputs are standardised with
the mean and SD of the sites. Gradient boosting: scikit-learn's~\citep{pedregosa2011scikit}
\texttt{GradientBoostingRegressor} at its defaults. Random forest: $200$ trees with bootstrap
resampling of rows, with as many draws as input rows per tree. Under equal counts,
a given site is absent with probability $(1-1/m)^{mr}$ for \Expanded and $(1-1/m)^m$
for \Single. MLP: two hidden layers of $64$ units, at most $3{,}000$ iterations.

\subsection{Designs}

\textbf{Synthetic.} Six one-dimensional functions on $[-2,2]$: quadratic, absolute value,
linear, hyperbolic tangent, step and sine, each standardised to unit variance over the query
grid. Three multivariate families in $d=2,5,10$ with twenty-four sites: additive; an
interaction (Friedman) function; and one function of two coordinates with $d-2$ added
independent coordinates that it ignores, for which the two active coordinates, the readings
and the truth are identical across $d$. Queries are $400$ points (a grid in one dimension,
fresh draws otherwise), the measured sites, and in one dimension the midpoints of the gaps
between adjacent sites.

\textbf{Semi-synthetic.} The covariates of five real regression datasets: diabetes
\citep{efron2004lars}, wine quality \citep{cortez2009wine}, California housing
\citep{pace1997housing}, concrete strength \citep{yeh1998concrete} and energy efficiency
\citep{tsanas2012energy}, in $8$ to $11$ dimensions. A
teacher that is not a TFM (gradient boosting, or a kernel ridge regression for energy
efficiency) is fitted on $70\%$ of the rows; the sites are drawn from those rows, the queries
are up to $300$ held-out rows, and independent Gaussian readings are generated around the
standardised teacher with a noise SD equal to a multiplier times the teacher's residual SD.

\textbf{Axes.} The default design has twelve sites, noise SD $0.15$ (multiplier $0.25$),
one dimension, random sites and equal counts. One property is varied at a time: readings
$r\in\{1,2,4,8,16,32,64,128,256\}$ on the default design, and $r\in\{1,4,64\}$ on every other
axis; sites $m\in\{6,12,24,48,96\}$; noise SD $\in\{0,0.05,0.15,0.30,0.50\}$ or multipliers
$\{0,0.05,0.25,0.5,1\}$; random or evenly spaced sites; dimension $d\in\{2,5,10\}$. This gives
$114$ equal-count designs ($60$ one-dimensional, $45$ on real covariates, $9$ multivariate),
each on twenty seeds. Counts are nested prefixes of one bank of readings per seed, and the
sites and queries of a seed are the same at every count. A further axis draws unequal counts
per site from $\{1,4,16,64\}$, either at random or clustered by position, with a balanced
control of the same total.

The illustrative quadratic design in Figure~\ref{fig:failure} uses noise SD $0.10$,
with the same grid of repetition counts.

\textbf{Summaries.} A design specifies a configuration; a cell is its realization at one
seed (one dataset--split pair for real replicated data). A design is called harmed when the \Expanded table has a larger error
than \Single in at least $11$ of the $20$ seeds. Shares in Figure~\ref{fig:overview} pool the
seeds of the datasets on one axis value. All comparisons are paired within a seed. Designs along an axis share functions and seeds;
these designs are not independent replications.
A median paired ratio is $\mathrm{median}_s(\mathrm{MSE}_{E,s}/\mathrm{MSE}_{S,s})$ over
seeds or splits $s$; medians of errors are computed separately.

\subsection{Real replicated data}
\label{app:realdata}

\begin{table}[h]
  \caption{Real datasets with repeated observations, after retaining sites with at least two
  finite, non-identical responses at one covariate vector. $d$ counts covariates that vary;
  $R$ is the largest count used; the
  context holds half of the sites that have at least $R$ readings, at most $150$.}
  \label{tab:realdesc}
  \centering\small
  \begin{tabular}{llcccc}
    \toprule
    dataset & a reading is & $d$ & sites & $R$ & context sites \\
    \midrule
    SGEMM kernel timings       & one run of a kernel configuration          & 14 & 2{,}994 & 4  & 150 \\
    Kernel Tuner convolution   & one launch of a kernel configuration       & 7  & 3{,}000 & 16 & 150 \\
    Kernel Tuner dedispersion  & one launch of a kernel configuration       & 6  & 3{,}000 & 32 & 150 \\
    FCNet slice localization   & one training run with its own random seed  & 9  & 3{,}000 & 4  & 150 \\
    Bayes-factor surface       & one Monte Carlo replicate                  & 2  & 80      & 4  & 40 \\
    G.\ aparine bioassay       & one pot at a dose and formulation          & 2  & 21      & 8  & 10 \\
    Hydraulic test rig         & one load cycle at a set condition          & 4  & 144     & 8  & 72 \\
    Superconductivity          & one recorded critical temperature          & 81 & 2{,}374 & 8  & 78 \\
    Traffic speeds             & one car in a survey session                & 3  & 84      & 64 & 42 \\
    CPS wages                  & one worker with given covariates           & 6  & 3{,}488 & 32 & 63 \\
    \bottomrule
  \end{tabular}
\end{table}

\textbf{Computer experiments.}
The SGEMM data \citep{paredes2017sgemm} give the running time of a parameterised OpenCL
matrix-multiplication kernel (the kernel of CLTune, \citealp{nugteren2015cltune}) on one
workstation GPU, measured four times for each feasible configuration; a site is one configuration
(its $14$ tuning parameters), a reading is the logarithm of one run time, and we use a fixed random
subset of $3{,}000$ configurations. The Kernel Tuner data are brute-force tuning caches from the
benchmark hub of \citet{willemsen2025benchmarkhub}, produced with Kernel Tuner
\citep{vanwerkhoven2019kerneltuner} for the kernels of \citet{lurati2024hip}; each configuration
provides repeated launches, each recorded as the logarithm of one launch time. The analysed
convolution input contains $31$ launches after dropping the warm-up; dedispersion contains
$32$. The median across sites of lag-one Pearson correlation in source order is $0.19$ and
$0.36$, respectively. The FCNet data \citep{klein2019fcnet} record,
for each configuration of a two-layer network ($9$ hyperparameters) on the slice-localisation
task, the validation error after $100$ epochs of four training runs with different random seeds
on one fixed data split; a reading is the logarithm of one run's validation error, and we use a
fixed random subset of $3{,}000$ configurations. The Bayes-factor surface \citep{franck2020bayes}
gives, for $80$ settings of the two parameters of a Gamma prior, the logarithm of a Bayes factor
estimated by five Monte Carlo replicates on the same data, distributed in the R package
\texttt{hetGP}.

\textbf{Physical and biological experiments.}
The \emph{Galium aparine} bioassay \citep{cabanne1999galium}, distributed in the R package drc
\citep{ritz2019drc}, records the dry matter per pot of plants sprayed with a herbicide alone or
with an adjuvant; a site is one (dose, formulation) pair and a reading is one pot. The hydraulic
data \citep{helwig2015hydraulic} come from a test rig that repeats constant load cycles while the
conditions of its cooler, valve, pump and accumulator are set to $144$ combinations; a site is one
combination and a reading is the mean efficiency factor of one stable load cycle; each
site holds a block of consecutive cycles. The
superconductivity data of \citet{hamidieh2018superconductor} list critical temperatures with $81$
features computed from each material's chemical formula; a site is one distinct feature vector
with at least two different recorded temperatures, and a reading is the temperature of one
sample, so the spread within a site includes differences between samples.

\textbf{Surveys.}
The traffic data, distributed with \citet{davison1997bootstrap}, give the speeds of cars at pairs
of road locations before and after a warning sign was erected; a site is one (location, sign,
period) survey session and a reading is one car. The wage data are the 1988 Current Population
Survey sample of \citet{bierens2001cps}; a site is one combination of the six recorded covariates
and a reading is the wage of one worker.

These ten datasets cover five computer experiments, three physical or biological experiments
and two surveys, with separate runs, units or samples sharing the recorded covariates at a
site. We retain sites with at least two finite responses and discard sites whose responses
are all identical. Table~\ref{tab:realdesc} reports the retained sites; for SGEMM the filter
leaves $2{,}994$ of the $3{,}000$ sampled configurations. Categorical covariates are ordinal-coded.

\textbf{Counts and site splits.}
Let $r_i^{\mathrm{avail}}$ be the available count at retained site $i$. We choose the largest power of two $R$
for which at least $\min(24,\text{number of retained sites})$ sites have $r_i^{\mathrm{avail}}\ge R$.
For superconductivity we use $R=8$, giving $78$ context sites from $156$ eligible sites,
compared with $22$ from $44$ at the rule's $R=16$. For each of twenty seeded splits, the eligible group
($r_i^{\mathrm{avail}}\ge R$) and the lower-count group ($2\le r_i^{\mathrm{avail}}<R$) are independently permuted and each
divided in half, with an odd extra site assigned to test. The context is the first eligible
half, capped at $150$ sites. Test sites are sampled uniformly, capped at $150$, from the
pooled second halves of both groups; the first lower-count half is unused. A site thus never
contributes to both context and test. The context and test sites are shared across counts,
representations and learners within a split. The four lowest-ratio datasets use $150$
context sites; the other six use $10$--$78$. The controlled designs vary noise at fixed
site count and site count at fixed noise (Section~\ref{sec:noisecoverage}).

\textbf{Readings, targets and scaling.}
At each context site, one seeded random ordering of its readings supplies nested prefixes of
$r\in\{1,2,4,\ldots,R\}$. \Single uses the first reading; \Expanded and \Mean use the same first
$r$ readings. All counts and representations, including \Single, use a common response mean
and sample SD (divisor $mR-1$), computed from the context's first $R$ readings per site.
Reported MSE is in units of this common variance. The test target is the mean of \emph{all}
available readings at that site, an empirical mean rather than an observed latent truth.
The within-site SD ratio is the pooled within-site SD divided by the SD of all readings in the
retained dataset. The twenty splits share sites and are not independent replications.

\section{Representation and count-reset comparisons}
\label{app:side}

These experiments isolate one comparison each on the designs, seeds and model configurations of the
main grid (Appendix~\ref{app:protocol}); unless stated otherwise $m=12$, the noise SD is $0.15$
(multiplier $0.25$ on real covariates) and every model runs in its shipped configuration.

\FloatBarrier
\subsection{Rows against means for conventional learners}

\begin{table}[htbp]
  \caption{Proposition~\ref{prop:reduction} in practice. Fits (of $440$: six one-dimensional
  functions and five real-covariate sets, twenty seeds, two count configurations) in which the
  prediction from the means equals the prediction from the rows to $10^{-6}$ at every query, and
  in parentheses at the sites only. Equal counts are $r=4$ and $64$; unequal counts are the grid's
  random and clustered schemes. Weighted: count-weighted means $(\bar{\mathcal D},\mathbf r)$ (for the Gaussian processes,
  each mean's noise variance divided by its count).}
  \label{tab:e1}
  \centering\small
  \begin{tabular}{lcccc}
    \toprule
    & \multicolumn{2}{c}{equal counts} & \multicolumn{2}{c}{unequal counts} \\
    learner & weighted & unweighted & weighted & unweighted \\
    \midrule
    least squares               & 440 (440) & 440 (440) & 440 (440) & 40 (40) \\
    kernel ridge, fixed kernel  & 440 (440) & 0 (0)     & 440 (440) & 0 (0) \\
    GP, fixed kernel and noise  & 440 (440) & 0 (0)     & 440 (440) & 0 (0) \\
    regression tree             & 241 (440) & 241 (440) & 240 (440) & 240 (438) \\
    gradient boosting           & 240 (440) & 240 (440) & 240 (440) & 0 (0) \\
    forest without bootstrap    & 240 (440) & 240 (440) & 240 (440) & 240 (438) \\
    \midrule
    forest with bootstrap       & 0 (0)     & 0 (0)     & 0 (0)     & 0 (0) \\
    MLP                         & 220 (220) & 0 (0)     & 103 (103) & 0 (0) \\
    GP, noise estimated         & 16 (133)  & 0 (0)     & 0 (0)     & 0 (0) \\
    \bottomrule
  \end{tabular}
\end{table}

The count-weighted tree-based learners agree everywhere in one dimension and at the sites on
real covariates. Their between-site differences in eight to eleven dimensions are consistent
with split ties: different covariates can partition the twelve sites into the same groups,
allowing floating-point tie-breaking to select different continuations. The MLP
agrees exactly at $r=4$, when its table fits in one batch, and not at $r=64$. With equal counts,
the GP that estimates its noise level has a median maximum prediction difference of
$10^{-4}$ from the means-with-variances fit, against $0.02$ from means alone. With unequal
counts, these differences rise to $0.060$ and $0.169$, respectively.

\FloatBarrier
\subsection{Count reset on the main designs}

\begin{table}[!htbp]
  \caption{Count reset in TabPFN-v3: median error at fresh inputs of \Single, \Expanded,
  \Mean, full count reset (all $27$ count arguments set to $m$) and embedding-only
   reset (the three embedding modules), and cells in which \Mean beats full count reset. One-dimensional synthetic: six functions, $120$ cells per row; dimension axis: three
   families, $m=24$, $60$ cells; semi-synthetic: five datasets, $100$ cells; real data: twenty
  splits at the largest count.}
  \label{tab:e2}
  \centering\small
  \setlength{\tabcolsep}{4pt}
  \begin{tabular}{@{}llcccccc@{}}
    \toprule
    & setting & \Single & \Expanded & \Mean & \shortstack{full\\reset} & \shortstack{embedding\\only} & mean better \\
    \midrule
    1-D, $r=4$  & noise 0    & 0.0088 & 0.028 & 0.0088 & 0.0089 & 0.021 & 36 \\
                & noise 0.05 & 0.015 & 0.021 & 0.016 & 0.015 & 0.018 & 59 \\
                & noise 0.15 & 0.039 & 0.028 & 0.031 & 0.028 & 0.029 & 64 \\
                & noise 0.30 & 0.104 & 0.047 & 0.050 & 0.069 & 0.040 & 77 \\
                & noise 0.50 & 0.194 & 0.084 & 0.082 & 0.176 & 0.076 & 94 \\
    1-D, $r=64$ & noise 0    & 0.0088 & 0.053 & 0.0088 & 0.0084 & 0.038 & 36 \\
                & noise 0.05 & 0.015 & 0.049 & 0.0094 & 0.0135 & 0.037 & 61 \\
                & noise 0.15 & 0.039 & 0.045 & 0.010 & 0.032 & 0.042 & 90 \\
                & noise 0.30 & 0.104 & 0.047 & 0.013 & 0.076 & 0.114 & 107 \\
                & noise 0.50 & 0.194 & 0.052 & 0.017 & 0.174 & 0.382 & 115 \\
    \midrule
    dimension, $r=64$ & $d=2$  & 0.062 & 0.103 & 0.023 & 0.072 & 0.052 & 60 \\
                      & $d=5$  & 0.281 & 0.347 & 0.254 & 0.263 & 0.269 & 45 \\
                      & $d=10$ & 0.354 & 0.490 & 0.349 & 0.356 & 0.367 & 33 \\
    \midrule
    semi-synthetic, $r=64$ & mult.\ 0    & 0.405 & 0.542 & 0.405 & 0.424 & 0.440 & 44 \\
                           & mult.\ 0.25 & 0.439 & 0.528 & 0.416 & 0.410 & 0.438 & 50 \\
                           & mult.\ 0.5  & 0.513 & 0.522 & 0.414 & 0.449 & 0.456 & 64 \\
                           & mult.\ 1    & 0.698 & 0.524 & 0.424 & 0.561 & 0.647 & 83 \\
    \midrule
    real data & SGEMM ($r=4$)              & 0.105 & 0.161 & 0.104 & 0.102 & 0.136 & 2 \\
              & G.\ aparine ($r=8$)        & 0.388 & 0.199 & 0.320 & 0.302 & 0.159 & 9 \\
              & hydraulic rig ($r=8$)      & 0.178 & 0.074 & 0.071 & 0.177 & 0.066 & 20 \\
              & traffic ($r=64$)           & 0.338 & 0.218 & 0.243 & 0.323 & 1.412 & 19 \\
              & CPS wages ($r=32$)         & 0.898 & 0.675 & 0.565 & 0.724 & 0.903 & 18 \\
              & KT convolution ($r=16$)    & 0.223 & 0.259 & 0.218 & 0.223 & 0.258 & 10 \\
              & KT dedispersion ($r=32$)   & 0.303 & 0.290 & 0.295 & 0.307 & 0.593 & 17 \\
              & FCNet slice ($r=4$)        & 0.245 & 0.281 & 0.243 & 0.236 & 0.296 & 11 \\
              & Bayes factor ($r=4$)       & 0.025 & 0.016 & 0.017 & 0.022 & 0.013 & 17 \\
              & superconductivity ($r=8$)  & 0.226 & 0.185 & 0.213 & 0.209 & 0.192 & 7 \\
    \bottomrule
  \end{tabular}
\end{table}

Full count reset brings exact-copy error close to \Single: at $r=64$ the medians are $0.0084$
and $0.0088$, against \Expanded's $0.053$. Full count reset beats \Single in $84$ of $120$ cells and loses
in $36$. At noise SD $0.15$ its error stays between $0.025$ and $0.039$ for counts from two to
$256$, while \Mean falls from $0.034$ to $0.010$.

At noise SDs $0.30$ and $0.50$, embedding-only reset helps at $r=4$ but reverses at $r=64$
in the one-dimensional designs. It lowers error from $0.047$ to
$0.040$ and from $0.084$ to $0.076$ at $r=4$, but raises it from $0.047$ to $0.114$ and from
$0.052$ to $0.382$ at $r=64$. On real data it lowers error on G.\ aparine ($0.199$ to
$0.159$), SGEMM ($0.161$ to $0.136$), the hydraulic rig ($0.074$ to $0.066$) and Bayes factors
($0.016$ to $0.013$).
Its largest losses are on traffic ($0.218$ to $1.412$), KT dedispersion ($0.290$ to $0.593$)
and CPS wages ($0.675$ to $0.903$); it also raises FCNet error from $0.281$ to $0.296$ at $r=4$.
Resetting only the embedding count therefore leaves substantial downstream responses to
repetition. Full count reset also gives up useful gains: it is worse than \Expanded on $13$--$20$
of $20$ splits in each of the seven datasets where the rows help TabPFN-v3 on most splits,
and on at most $5$ splits in the three where they hurt. On traffic, full count reset is worse
than \Expanded on all $20$ splits.

\textbf{Count reset with irrelevant features.}
For the family with irrelevant features ($m=24$, $r=64$, noise SD $0.15$, twenty seeds per
dimension), Table~\ref{tab:padding-count} compares arms within each cell before taking
the median. Full count reset reduces most of the extra error at fixed signal dimension.

\begin{table}[htbp]
  \centering\small
  \caption{TabPFN-v3 with irrelevant features: median paired fresh-input MSE ratios to
  \Single, and cells in which full count reset beats \Expanded. Each dimension has twenty
  seed cells; all arms share the sites, readings and queries within a cell.}
  \label{tab:padding-count}
  \begin{tabular}{rccc}
    \toprule
    $d$ & \Expanded/\Single & full/\Single & full wins \\
    \midrule
    2  & 1.600 & 1.045 & 15/20 \\
    5  & 3.100 & 1.051 & 19/20 \\
    10 & 3.146 & 1.151 & 20/20 \\
    \bottomrule
  \end{tabular}
\end{table}

Across the three dimension families pooled, full count reset beats \Expanded in $47$, $51$
and $52$ of $60$ cells at $d=2,5,10$; these counts pool additive and interaction targets
with designs using irrelevant features, whereas Table~\ref{tab:padding-count} isolates the added irrelevant features.

\FloatBarrier
\subsection{Bias and variance over independent reading banks}

Independent reading banks separate precision gain from changes in between-site prediction.
For $B=8$ banks at fixed sites and fresh queries $q$, write
$\widehat\mu_q=B^{-1}\sum_b\widehat f_{bq}$ and
$s_q^2=(B-1)^{-1}\sum_b(\widehat f_{bq}-\widehat\mu_q)^2$. With $Q$ queries and truth $f_q$,
the finite-bank estimates are
\[
 \widehat{\mathrm{Bias}^2}=\frac1Q\sum_q
 \left\{(\widehat\mu_q-f_q)^2-\frac{s_q^2}{B}\right\},\qquad
 \widehat{\mathrm{Var}}=\frac1Q\sum_q s_q^2.
\]
Their sum equals the mean MSE over banks within each design. The corrected squared-bias
estimate can be negative; medians of the two components need not sum to the median total MSE.

\begin{table}[htbp]
  \caption{What averaging removes. Squared bias and variance of the prediction at fresh inputs,
  estimated from eight independent banks of $64$ readings at the same sites; median over $60$ cells
  (six one-dimensional functions times ten site seeds, $m=12$, noise SD $0.15$).}
  \label{tab:banks}
  \centering\small
  \setlength{\tabcolsep}{3pt}
  \begin{tabular}{lcccccc}
    \toprule
    & \multicolumn{2}{c}{\Single} & \multicolumn{2}{c}{\Expanded} & \multicolumn{2}{c}{\Mean} \\
    model & bias$^2$ & var & bias$^2$ & var & bias$^2$ & var \\
    \midrule
    TabPFN-v3   & 0.023 & 0.013 & 0.041 & 0.0005 & 0.007 & 0.0005 \\
    TabPFN-v3.5 & 0.100 & 0.016 & 0.240 & 0.0013 & 0.035 & 0.0006 \\
    TabICLv2   & 0.072 & 0.018 & 0.107 & 0.0004 & 0.063 & 0.0005 \\
    TabDPT     & 0.082 & 0.015 & 0.064 & 0.0006 & 0.036 & 0.0004 \\
    LimiX-2    & 0.041 & 0.014 & 0.200 & 0.0037 & 0.027 & 0.0013 \\
    TabFM      & 0.241 & 0.008 & 0.150 & 0.0007 & 0.220 & 0.0003 \\
    \bottomrule
  \end{tabular}
\end{table}

\begin{table}[htbp]
  \caption{Squared bias and variance at fresh inputs over eight independent banks of $64$ readings
  at the same sites, finite-bank corrected; median over cells (one-dimensional: six functions
  times ten site seeds, noise SD $0.50$; semi-synthetic: five datasets times ten site
  seeds, noise multiplier $0.25$).}
  \label{tab:e3}
  \centering\small
  \setlength{\tabcolsep}{4pt}
  \begin{tabular}{llcccccc}
    \toprule
    & & \multicolumn{2}{c}{\Single} & \multicolumn{2}{c}{\Expanded} & \multicolumn{2}{c}{\Mean} \\
    designs & model & bias$^2$ & var & bias$^2$ & var & bias$^2$ & var \\
    \midrule
    1-D, noise 0.5 & TabPFN-v3   & 0.112 & 0.110 & 0.043 & 0.0033 & 0.012 & 0.0036 \\
                   & TabPFN-v3.5 & 0.273 & 0.073 & 0.278 & 0.0028 & 0.049 & 0.0040 \\
                   & TabICLv2   & 0.121 & 0.126 & 0.147 & 0.0031 & 0.062 & 0.0045 \\
                   & TabDPT     & 0.273 & 0.098 & 0.138 & 0.0033 & 0.047 & 0.0032 \\
                   & LimiX-2    & 0.118 & 0.093 & 0.318 & 0.0073 & 0.029 & 0.0046 \\
                   & TabFM      & 0.302 & 0.067 & 0.196 & 0.0041 & 0.225 & 0.0018 \\
    \midrule
    semi-synthetic & TabPFN-v3   & 0.418 & 0.018 & 0.508 & 0.0011 & 0.405 & 0.0004 \\
                   & TabPFN-v3.5 & 0.465 & 0.011 & 0.538 & 0.0008 & 0.475 & 0.0004 \\
                   & TabICLv2   & 0.434 & 0.025 & 0.603 & 0.0003 & 0.439 & 0.0005 \\
                   & TabDPT     & 0.501 & 0.013 & 0.531 & 0.0006 & 0.500 & 0.0002 \\
                   & LimiX-2    & 0.384 & 0.034 & 0.516 & 0.0074 & 0.381 & 0.0016 \\
                   & TabFM      & 0.498 & 0.009 & 0.666 & 0.0007 & 0.508 & 0.0002 \\
    \bottomrule
  \end{tabular}
\end{table}

\Expanded captures much of the precision gain while increasing between-site bias. At
one-dimensional noise SD $0.50$, its variance medians are $0.0028$--$0.0073$, compared with
\Mean's $0.0018$--$0.0046$ and $13$--$41$ times smaller than \Single's. On the semi-synthetic
designs its squared-bias median exceeds \Single's for all six TFMs, by $0.03$--$0.17$, whereas
\Mean's stays within about $0.013$ of \Single's (Table~\ref{tab:e3}). \Expanded has higher squared
bias than \Mean for five of six TFMs at one-dimensional noise SD $0.50$ (TabFM is the exception),
and for all six on the semi-synthetic designs.
The one-dimensional designs at noise SD $0.15$ are in Table~\ref{tab:banks}. At the measured
sites every table's bias is near zero with $64$ readings and the variance is the noise of a site
mean. On the semi-synthetic designs bias dominates every table: TabPFN-v3's median total MSE over banks is $0.408$ for \Mean against $0.478$ for \Single, and nearly all of it is bias, the error of carrying the site
values between sites.

\FloatBarrier
\subsection{Predictive coverage and the count encoding}
\label{app:mr}

Gaussian MeanRestored (Equation~\eqref{eq:gaussian-mr}) and the unequal-count GP precision comparison use the pooled reading-variance estimate, for $\sum_i(r_i-1)>0$,
\begin{equation}
\widehat\sigma^2=\frac{\sum_{i=1}^{m}\sum_{j=1}^{r_i}(Y_{ij}-\bar Y_i)^2}
  {\sum_{i=1}^{m}(r_i-1)},
\label{eq:mr-noise}
\end{equation}
where $\bar Y_i$ is the sample mean at site $i$. For balanced counts $r_i=r>1$, the denominator is $m(r-1)$. MR postprocesses the frozen \Mean output.
The experiment retains the model's original point prediction
for MSE; MR changes the distributional output, with the predictive median as its center.

Table~\ref{tab:e4cov} evaluates future readings at fresh inputs under Gaussian,
homoscedastic reading noise. For each model there are $120$ synthetic cells
($6$ functions $\times$ $20$ seeds) and $100$ semi-synthetic cells
($5$ real-covariate datasets $\times$ $20$ seeds). Each cell is one function--seed
or dataset--seed pair. Each has $m=12$ sites; the synthetic noise SD is $0.15$, and
the semi-synthetic noise SD is $0.25$ times the residual SD used to construct the task.
Scores average over $400$ synthetic or $300$ semi-synthetic fresh queries.
For each method, let $Q_\alpha(x)$ denote its predictive quantile, including the quantiles
of $F_{\mathrm{MR}}$ for MR. At a query $x$ with true mean $f(x)$ and generating
reading-noise SD $\sigma$, coverage is
$\Phi((Q_{0.95}(x)-f(x))/\sigma)-\Phi((Q_{0.05}(x)-f(x))/\sigma)$,
then averaged over queries within a cell. The generating
$\sigma$ is used for scoring; MR estimates its added variance from the context readings.
CRPS is approximated by twice the mean pinball loss at the $19$ quantile levels
$0.05,0.10,\ldots,0.95$, averaged over queries and $64$ generated future readings per
query. The same future-reading bank is shared across methods and counts within a
cell. Table entries are medians across cells; each CRPS ratio is formed within
a cell before taking its median. MR/\Mean compares the Gaussian approximation plus
added variance with the original \Mean distribution. MR/\Expanded also changes the
representation of the same readings; MR/\Single additionally uses $64$ readings per site
instead of one. In model order TabPFN-v3, TabPFN-v3.5, TabICLv2, TabDPT, MR has strictly
lower CRPS than \Mean in $87,70,106,65$ of $120$ synthetic cells and $47,43,55,63$ of
$100$ semi-synthetic cells.

\begin{table}[htbp]
  \caption{Future-reading prediction at fresh inputs, $r=64$, under Gaussian,
  homoscedastic reading noise. Coverage entries are median probabilities for nominal
  $90\%$ intervals; CRPS entries are median paired ratios of Gaussian MR to the named
  comparator, using the $19$-quantile approximation (lower is better).
  Each model has $120$ function--seed and $100$ real-covariate dataset--seed pairs,
  with $m=12$ sites, noise SD $0.15$ (synthetic) or residual-SD multiplier $0.25$
  (semi-synthetic), and $400$ or $300$ fresh queries, respectively.
  MR is defined in Equation~\eqref{eq:gaussian-mr}; \Single uses $r=1$.
  Exp. denotes \Expanded.
  LimiX-2 and TabFM return point predictions only.}
  \label{tab:e4cov}
  \centering\small
  \setlength{\tabcolsep}{3pt}
  \begin{tabular}{lccccccc}
    \toprule
    & \multicolumn{4}{c}{Coverage} & \multicolumn{3}{c}{CRPS: MR/comparator} \\
    \cmidrule(lr){2-5}\cmidrule(lr){6-8}
    model & \Single & Exp. & \Mean & MR & Exp. & \Single & \Mean \\
    \midrule
    \multicolumn{8}{l}{One-dimensional} \\
    TabPFN-v3 & 0.87 & 0.91 & 0.43 & 0.92 & 0.926 & 0.810 & 0.906 \\
    TabPFN-v3.5 & 0.96 & 0.99 & 0.68 & 0.94 & 0.684 & 0.808 & 0.983 \\
    TabICLv2 & 0.67 & 0.98 & 0.57 & 0.92 & 0.819 & 0.835 & 0.948 \\
    TabDPT & 0.93 & 0.90 & 0.65 & 0.96 & 0.966 & 0.853 & 0.988 \\
    \midrule
    \multicolumn{8}{l}{Semi-synthetic} \\
    TabPFN-v3 & 0.91 & 0.86 & 0.89 & 0.91 & 0.914 & 0.981 & 1.001 \\
    TabPFN-v3.5 & 0.93 & 0.94 & 0.92 & 0.94 & 0.964 & 0.987 & 1.005 \\
    TabICLv2 & 0.82 & 0.90 & 0.84 & 0.87 & 0.869 & 0.957 & 0.996 \\
    TabDPT & 0.92 & 0.91 & 0.93 & 0.97 & 0.987 & 0.967 & 0.990 \\
    \bottomrule
  \end{tabular}
\end{table}

\begin{table}[htbp]
  \caption{The count encoding under unequal counts (the grid's random and clustered schemes): median
  error at fresh inputs of \Mean, \Mean with the row of site $i$ copied $r_i$
  times, and \Expanded, and cells (of $120$ one-dimensional, $100$ semi-synthetic) in which the
  copies are worse than \Mean.}
  \label{tab:e4enc}
  \centering\small
  \setlength{\tabcolsep}{4pt}
  \begin{tabular}{llcccc@{\hskip 1.5em}cccc}
    \toprule
    & & \multicolumn{4}{c}{one-dimensional} & \multicolumn{4}{c}{semi-synthetic} \\
    model & scheme & mean & copies & \Expanded & worse & mean & copies & \Expanded & worse \\
    \midrule
    TabPFN-v3   & random    & 0.027 & 0.054 & 0.040 & 89  & 0.444 & 0.490 & 0.489 & 65 \\
               & clustered & 0.026 & 0.083 & 0.039 & 97  & 0.429 & 0.491 & 0.501 & 69 \\
    TabPFN-v3.5 & random    & 0.078 & 0.169 & 0.163 & 83  & 0.473 & 0.491 & 0.524 & 61 \\
               & clustered & 0.070 & 0.182 & 0.128 & 90  & 0.473 & 0.482 & 0.492 & 54 \\
    LimiX-2    & random    & 0.054 & 0.156 & 0.096 & 108 & 0.410 & 0.477 & 0.483 & 66 \\
               & clustered & 0.055 & 0.107 & 0.080 & 96  & 0.394 & 0.460 & 0.476 & 72 \\
    TabICLv2   & random    & 0.090 & 0.100 & 0.127 & 70  & 0.450 & 0.522 & 0.556 & 60 \\
               & clustered & 0.088 & 0.080 & 0.093 & 55  & 0.449 & 0.488 & 0.561 & 63 \\
    TabDPT     & random    & 0.056 & 0.064 & 0.076 & 64  & 0.534 & 0.447 & 0.459 & 30 \\
               & clustered & 0.061 & 0.048 & 0.067 & 54  & 0.521 & 0.448 & 0.476 & 36 \\
    TabFM      & random    & 0.259 & 0.201 & 0.142 & 43  & 0.507 & 0.462 & 0.599 & 32 \\
               & clustered & 0.237 & 0.226 & 0.212 & 63  & 0.500 & 0.489 & 0.616 & 25 \\
    \bottomrule
  \end{tabular}
\end{table}

\FloatBarrier
\subsection{Attention concentration on the main designs}

\begin{table}[htbp]
  \caption{Site-pooled attention in TabPFN-v3 (one ensemble member, $r=64$, six functions, twenty
  seeds): median $n_{\mathrm{eff}}$ over $120$ cells (each the median over $400$ fresh queries of the
  median jointly over heads and layers; uniform attention gives $12$), cells in which it falls with the
  rows and rises under full count reset, and the error of the same fits. Full count reset's $n_{\mathrm{eff}}$ exceeds
  \Single's in $114/120$ cells at SD $0.05$ and $120/120$ at SD $0.15$.}
  \label{tab:e6}
  \centering\small
  \setlength{\tabcolsep}{4pt}
  \begin{tabular}{lcccccccc}
    \toprule
    & \multicolumn{3}{c}{$n_{\mathrm{eff}}$} & & & \multicolumn{3}{c}{fresh-input MSE} \\
    noise SD & \Single & \Expanded & full & Exp.\ $<$ \Single & full $>$ Exp. & \Single & \Expanded & full \\
    \midrule
    0.05 & 10.91 & 10.56 & 11.12 & 112/120 & 120/120 & 0.017 & 0.051 & 0.010 \\
    0.15 & 10.89 & 10.70 & 11.19 & 94/120  & 120/120 & 0.042 & 0.051 & 0.029 \\
    \bottomrule
  \end{tabular}
\end{table}

Full count reset raises site-pooled $n_{\mathrm{eff}}$ above \Expanded in $120/120$ cells at each
noise level, and above \Single in $114/120$ at SD $0.05$ and $120/120$ at SD $0.15$.
The indices are zero-based physical ICL blocks. The change is concentrated in layers $18$, $19$, $21$ and $22$, where the
median falls from $6.3$, $4.7$, $6.8$ and $5.5$ with one reading to $3.0$, $2.3$, $3.8$ and $2.7$
with the rows at noise SD $0.05$; full count reset restores $6.5$, $4.9$, $7.7$ and $6.2$. The step
function is the exception at noise SD $0.15$, where the rows lower $n_{\mathrm{eff}}$ in only $8$ of
$20$ cells.

\begin{table}[htbp]
  \centering\small
  \caption{Attention concentration and prediction error in the same one-member
  fits as Table~\ref{tab:e6}: counts over $120$ function--seed cells per noise level. A down-arrow denotes \Expanded's
  site-pooled $n_{\mathrm{eff}}$ below \Single; harm denotes \Expanded's fresh-input MSE
  above \Single. Cells summarize $400$ fresh queries each.}
  \label{tab:e6-joint}
  \begin{tabular}{lrrrr}
    \toprule
    & \multicolumn{2}{c}{$n_{\mathrm{eff}}\downarrow$}
    & \multicolumn{2}{c}{$n_{\mathrm{eff}}\not\downarrow$} \\
    noise SD & harm & gain & harm & gain \\
    \midrule
    0.05 & 74 & 38 & 6  & 2  \\
    0.15 & 43 & 51 & 11 & 15 \\
    \bottomrule
  \end{tabular}
\end{table}
The resulting harm counts are $80/120$ and $54/120$; these are the same one-member fits
as Table~\ref{tab:e6}, rather than the standard-ensemble risk grid. No query is counted as
an independent replication.

\FloatBarrier
\section{Additional results}
\label{app:results}

\FloatBarrier
\subsection{The design of Figure~\ref{fig:failure} for all six models}

\begin{figure*}[h]
  \centering
  \includegraphics[width=\textwidth]{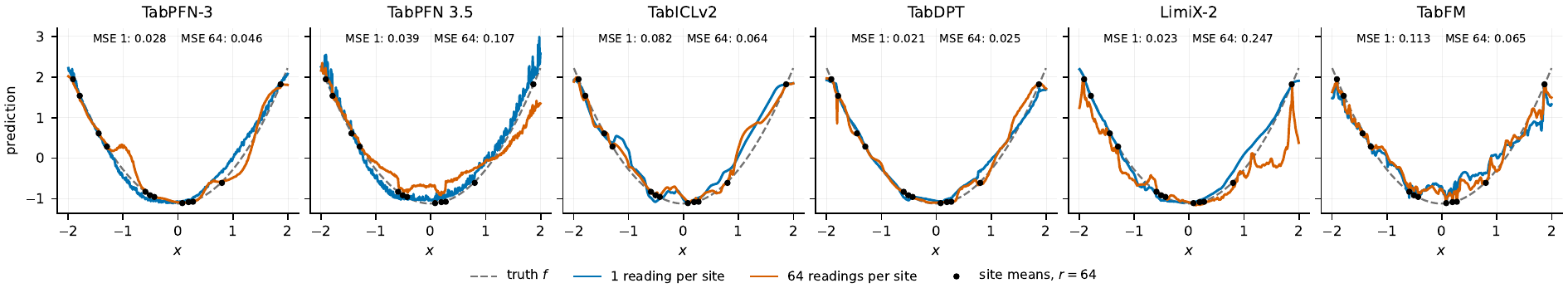}
  \caption{Fitted functions on the left design of Figure~\ref{fig:failure} (quadratic, twelve
  sites, noise SD $0.10$) with one reading per site (blue) and $64$ readings per site entered as
  rows (orange); the numbers in each panel are the two errors. The points are the site means at $r=64$,
  which every model passes through; the changes are between them.}
  \label{fig:fits}
\end{figure*}

\begin{table}[htbp]
  \caption{Raw MSE at $r=1$ and $r=256$ on the design of Figure~\ref{fig:failure} (noise SD
  $0.10$), for all ten learners: median over $20$ seeds, with the number of seeds in which more readings made the
  error worse.}
  \label{tab:failure}
  \centering\small
  \setlength{\tabcolsep}{5pt}
  \begin{tabular}{lccc}
    \toprule
    learner & MSE$(1)$ & MSE$(256)$ & worse \\
    \midrule
    Gaussian process  & 0.014 & 0.0002 & 0/20 \\
    Gradient boosting & 0.172 & 0.158 & 6/20 \\
    Random forest     & 0.182 & 0.158 & 4/20 \\
    MLP               & 0.043 & 0.018 & 1/20 \\
    \midrule
    TabPFN-v3          & 0.017 & 0.070 & 18/20 \\
    TabPFN-v3.5        & 0.062 & 0.379 & 20/20 \\
    TabICLv2          & 0.134 & 0.145 & 13/20 \\
    TabDPT            & 0.066 & 0.089 & 15/20 \\
    LimiX-2           & 0.047 & 0.306 & 20/20 \\
    TabFM             & 0.365 & 0.327 & 2/20 \\
    \bottomrule
  \end{tabular}
\end{table}

\FloatBarrier
\subsection{Where the error changes}

\begin{figure}[htbp]
  \centering
  \includegraphics[width=\textwidth]{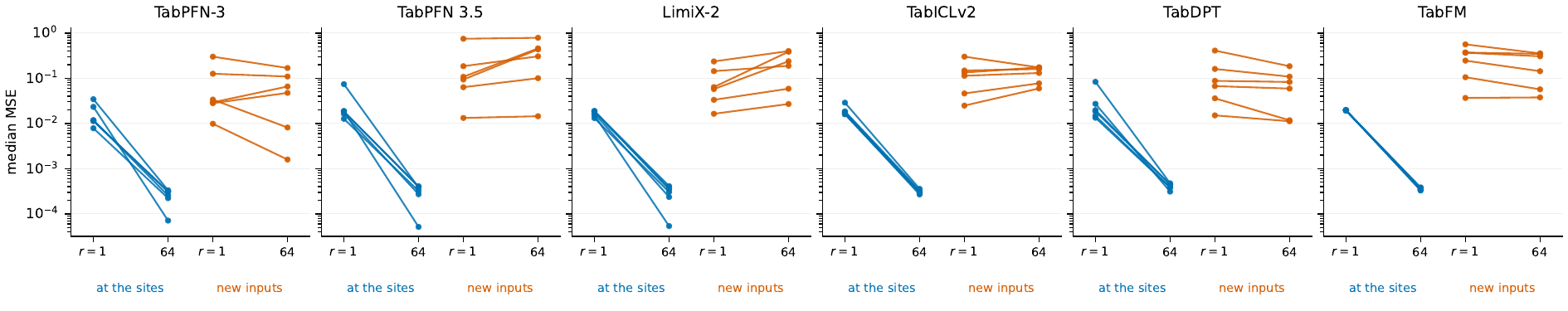}
  \caption{Error at the measured sites (blue) and at fresh inputs (orange) with one reading and
  with $64$ readings as rows; one line per synthetic function, $m=12$, $\sigma=0.15$, medians
  over twenty seeds. At the sites the error of every model falls by a factor of $35$ or more.}
  \label{fig:site}
\end{figure}

Table~\ref{tab:harmcounts} counts the harmed designs per learner, and
Table~\ref{tab:query} gives the comparison of Section~\ref{sec:noisecoverage} under the three
query laws.

\begin{table}[htbp]
  \caption{Designs in which \Expanded at $r=64$ is worse than \Single at fresh inputs
  in at least $11$ of $20$ seeds. Parentheses give the number of tested designs in
  each family; designs sharing a function or seed are not independent replications.}
  \label{tab:harmcounts}
  \centering\small
  \setlength{\tabcolsep}{2pt}
  \begin{tabular}{lrrrrrr|rrrr}
    \toprule
    family & LimiX-2 & TabICLv2 & TabPFN-v3.5 & TabPFN-v3 & TabFM & TabDPT & forest & GP (rows) & MLP & boosting \\
    \midrule
    1-D (60) & 39 & 26 & 35 & 14 & 8 & 4 & 10 & 0 & 0 & 0 \\
    Real covariates (45) & 25 & 34 & 24 & 30 & 33 & 17 & 34 & 19 & 8 & 6 \\
    Multivariate (9) & 9 & 9 & 4 & 8 & 7 & 4 & 9 & 3 & 0 & 0 \\
    \midrule
    All (114) & 73 & 69 & 63 & 52 & 48 & 25 & 53 & 22 & 8 & 6 \\
    \bottomrule
  \end{tabular}
\end{table}

\begin{table}[htbp]
  \caption{Query location. Function--seed pairs (of $120$: six one-dimensional functions,
  twenty seeds, default design) in which \Expanded at $r=64$ is worse than \Single, by where
  the prediction is scored.}
  \label{tab:query}
  \centering\small
  \begin{tabular}{lccccccc}
    \toprule
    scored at & TabPFN-v3 & TabPFN-v3.5 & LimiX-2 & TabICLv2 & TabDPT & TabFM & forest \\
    \midrule
    measured sites & 0 & 0 & 0 & 0 & 0 & 0 & 0 \\
    gap midpoints  & 37 & 78 & 91 & 59 & 16 & 30 & 111 \\
    fresh inputs   & 50 & 93 & 102 & 70 & 29 & 22 & 57 \\
    \bottomrule
  \end{tabular}
\end{table}

\FloatBarrier
\subsection{Coverage and spacing}

\begin{table}[htbp]
  \caption{Left: quadratic, $r=64$, seeds (of $20$) in which \Expanded is worse than \Single,
  by the number of sites. Right: share of function--seed pairs harmed with evenly spaced and
  with random sites ($m=12$, six one-dimensional functions).}
  \label{tab:coverage}
  \centering\small
  \begin{tabular}{lccccc@{\hskip 2em}cc}
    \toprule
    & \multicolumn{5}{c}{sites $m$} & \multicolumn{2}{c}{spacing} \\
    model & 6 & 12 & 24 & 48 & 96 & even & random \\
    \midrule
    TabPFN-v3   & 5 & 16 & 8 & 1 & 0   & 0.23 & 0.42 \\
    TabPFN-v3.5 & 6 & 18 & 11 & 9 & 7  & 0.78 & 0.78 \\
    LimiX-2    & 11 & 20 & 20 & 10 & 3 & 0.88 & 0.85 \\
    TabICLv2   & 9 & 12 & 6 & 5 & 2   & 0.32 & 0.58 \\
    TabDPT     & 1 & 9 & 8 & 5 & 2    & 0.07 & 0.24 \\
    TabFM      & 2 & 3 & 0 & 0 & 0    & 0.12 & 0.18 \\
    \bottomrule
  \end{tabular}
\end{table}

\FloatBarrier
\subsection{Irrelevant features, all models}

\begin{table}[!htbp]
  \caption{Irrelevant features ($m=24$, noise $0.15$, $r=64$, twenty seeds): median error of
  \Single, \Expanded and \Mean, median paired \Expanded/\Single ratio, and seeds in which
  \Expanded is worse than \Single.}
  \label{tab:paddingfull}
  \centering\small
  \resizebox{\textwidth}{!}{%
  \begin{tabular}{llccccc}
    \toprule
    model & $d$ & \Single & \Expanded & \Mean & ratio & worse \\
    \midrule
    TabPFN-v3   & 2 / 5 / 10 & 0.051 / 0.069 / 0.084 & 0.076 / 0.204 / 0.259 & 0.012 / 0.022 / 0.026 & 1.60 / 3.10 / 3.15 & 15 / 18 / 19 \\
    TabPFN-v3.5 & 2 / 5 / 10 & 0.066 / 0.099 / 0.124 & 0.070 / 0.101 / 0.133 & 0.028 / 0.045 / 0.065 & 1.00 / 1.00 / 1.04 & 10 / 10 / 13 \\
    LimiX-2    & 2 / 5 / 10 & 0.040 / 0.050 / 0.046 & 0.045 / 0.088 / 0.175 & 0.014 / 0.016 / 0.015 & 1.11 / 1.61 / 3.06 & 13 / 17 / 20 \\
    TabICLv2   & 2 / 5 / 10 & 0.083 / 0.193 / 0.243 & 0.172 / 0.279 / 0.424 & 0.060 / 0.136 / 0.202 & 1.89 / 1.80 / 1.69 & 19 / 18 / 18 \\
    TabDPT     & 2 / 5 / 10 & 0.079 / 0.113 / 0.120 & 0.072 / 0.175 / 0.282 & 0.046 / 0.060 / 0.084 & 0.97 / 1.46 / 1.72 & 8 / 17 / 20 \\
    TabFM      & 2 / 5 / 10 & 0.138 / 0.182 / 0.211 & 0.172 / 0.256 / 0.297 & 0.118 / 0.145 / 0.164 & 1.11 / 1.31 / 1.30 & 15 / 18 / 17 \\
    Forest & 2 / 5 / 10 & 0.174 / 0.243 / 0.315 & 0.220 / 0.288 / 0.387 & 0.171 / 0.255 / 0.328 & 1.16 / 1.08 / 1.18 & 16 / 13 / 14 \\
    \bottomrule
  \end{tabular}}
\end{table}

\FloatBarrier
\subsection{The Gaussian process under three representations}

\begin{table}[htbp]
  \caption{The Gaussian process: designs (of those fitted) in which the table at $r=64$ is
  worse than \Single in at least $11$ of $20$ seeds.}
  \label{tab:gp}
  \centering\small
  \begin{tabular}{lccc}
    \toprule
    & on the rows & means with variances & means only \\
    \midrule
    One-dimensional synthetic & 0 of 60 & 0 of 60 & 0 of 54 \\
    Real covariates           & 19 of 45 & 18 of 45 & 5 of 40 \\
    Multivariate synthetic ($d=2,5,10$) & 3 of 9 & 0 of 9 & 0 of 9 \\
    \bottomrule
  \end{tabular}
\end{table}

``On the rows'' is the fit used throughout the paper; its kernel and noise parameters
are estimated by marginal likelihood. ``Means with variances'' uses the site means
and fixes each mean's noise variance to the pooled reading variance of the readings
divided by $r$, estimating only kernel parameters by marginal likelihood.
``Means only'' gives the process \Mean and lets it estimate one noise level
from the means; without the within-site scatter this is a different estimator, and its
designs exclude the noise-free ones, where \Mean equals \Single.

The median absolute difference between the first two fits' fresh-input MSEs, relative
to the MSE of the means-with-variances fit, is $0.07\%$ across $3{,}120$ paired
one-dimensional fits pooled over the tested counts $r>1$.
The three harmed multivariate designs use ten-dimensional inputs. In a diagnostic on
the ten-dimensional additive target, trying five initial length-scales on three seeds reached
higher marginal likelihoods and changed all three seed comparisons with \Single from losses to gains.
On real covariates, comparisons against \Single yield $18$ of $45$ harmed designs
for the GP given means and estimated variances and $19$ for the row fit
(Table~\ref{tab:gp}). Deterioration thus persists with an aggregated representation
that explicitly supplies the estimated precision of each mean.

\begin{table}[htbp]
  \caption{GP aggregation on the four actual datasets with the lowest within-site SD ratios,
  at their largest count.
  The row fit estimates kernel and noise parameters from \Expanded; the means+variance fit
  uses site means with pooled empirical within-site variance divided by $r$, and estimates
  only kernel parameters. Harm counts compare each fit with the same \Single GP on $20$
  paired splits. The last column is the median of
  $|\mathrm{MSE}_{\rm rows}-\mathrm{MSE}_{\rm means+var.}|/\mathrm{MSE}_{\rm means+var.}$.}
  \label{tab:gp-real}
  \centering\small
  \setlength{\tabcolsep}{3pt}
  \begin{tabular}{lccc}
    \toprule
    & \multicolumn{2}{c}{harmed splits} & relative MSE gap \\
    dataset ($r$) & rows & means+var. & median \\
    \midrule
    SGEMM ($4$)           & 14/20 & 14/20 & $5.0\times10^{-7}$ \\
    KT convolution ($16$) & 19/20 & 19/20 & $7.2\times10^{-7}$ \\
    KT dedispersion ($32$)& 17/20 & 17/20 & $1.4\times10^{-6}$ \\
    FCNet slice ($4$)     & 11/20 & 11/20 & $3.4\times10^{-5}$ \\
    \bottomrule
  \end{tabular}
\end{table}

\FloatBarrier
\subsection{The three tables}
\label{app:three-tables}

\begin{figure*}[t]
  \centering
  \includegraphics[width=\textwidth]{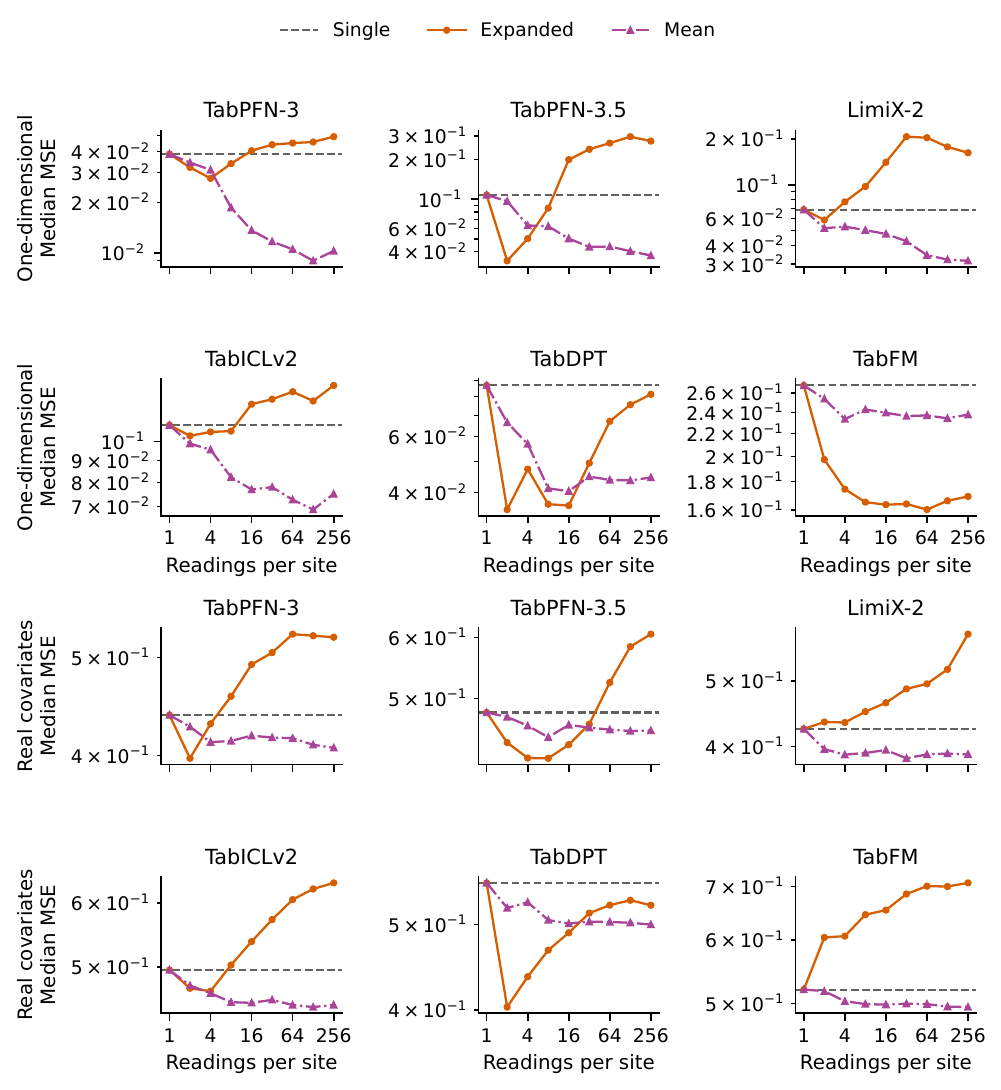}
  \caption{\Single, \Expanded and \Mean against readings per site. Marginal median
  fresh-input MSE over six one-dimensional functions (top, noise SD $0.15$, $120$
  cells) and five real covariate datasets (bottom, noise multiplier $0.25$, $100$
  cells), twenty seeds each, $m=12$. \Single is constant; curves start at its $r=1$
  value. Table~\ref{tab:three} lists the $r=4,64,256$ medians from the same fits.}
  \label{fig:three}
\end{figure*}

\begin{table}[htbp]
  \caption{Marginal median fresh-input MSE on the default designs of Figure~\ref{fig:three}:
  six one-dimensional functions at noise SD $0.15$ ($120$ cells) and five real covariate
  datasets at noise multiplier $0.25$ ($100$ cells), twenty seeds each, $m=12$.
  \Single, and \Expanded and \Mean at $4$, $64$ and $256$ readings.}
  \label{tab:three}
  \centering\small
  \begin{tabular}{llccccccc}
    \toprule
    & & & \multicolumn{3}{c}{\Expanded} & \multicolumn{3}{c}{\Mean} \\
    & model & \Single & 4 & 64 & 256 & 4 & 64 & 256 \\
    \midrule
    one-dimensional & TabPFN-v3   & 0.039 & 0.028 & 0.045 & 0.049 & 0.031 & 0.010 & 0.010 \\
    synthetic       & TabPFN-v3.5 & 0.107 & 0.050 & 0.265 & 0.274 & 0.063 & 0.044 & 0.037 \\
                    & LimiX-2    & 0.069 & 0.077 & 0.204 & 0.162 & 0.053 & 0.034 & 0.032 \\
                    & TabICLv2   & 0.109 & 0.105 & 0.131 & 0.136 & 0.096 & 0.073 & 0.075 \\
                    & TabDPT     & 0.087 & 0.047 & 0.067 & 0.081 & 0.057 & 0.044 & 0.045 \\
                    & TabFM      & 0.268 & 0.174 & 0.160 & 0.169 & 0.233 & 0.237 & 0.238 \\
    \midrule
    real            & TabPFN-v3   & 0.439 & 0.430 & 0.528 & 0.524 & 0.412 & 0.416 & 0.407 \\
    covariates      & TabPFN-v3.5 & 0.479 & 0.418 & 0.525 & 0.606 & 0.461 & 0.455 & 0.454 \\
                    & LimiX-2    & 0.424 & 0.434 & 0.495 & 0.587 & 0.388 & 0.389 & 0.389 \\
                    & TabICLv2   & 0.496 & 0.467 & 0.606 & 0.635 & 0.464 & 0.449 & 0.449 \\
                    & TabDPT     & 0.558 & 0.436 & 0.526 & 0.526 & 0.530 & 0.503 & 0.500 \\
                    & TabFM      & 0.520 & 0.607 & 0.701 & 0.707 & 0.503 & 0.499 & 0.495 \\
    \bottomrule
  \end{tabular}
\end{table}

The positive-noise sweep contains $480$ paired cells per model: six functions, four
noise SDs ($0.05$, $0.15$, $0.30$, $0.50$) and twenty seeds, with $m=12$ random sites.
At $r=64$, \Mean beats \Single in $430$, $418$, $426$, $425$, $422$ and $376$ cells
(TabPFN-v3, TabPFN-v3.5, TabICLv2, TabDPT, LimiX-2, TabFM). At $r=4$, \Expanded beats
\Mean in $243$, $315$, $214$, $338$, $128$ and $354$ of these same cells.
Counts share sites and nested reading banks; cells sharing a seed across noise levels
are dependent.

\textbf{Response to noise in the site values.} With noisy means (noise SD $0.5$, four
readings, so the noise variance of a mean is $0.0625$) the squared error of \Mean at
the sites is $0.063$ for TabFM and $0.049$ to $0.059$ for LimiX-2, which reproduce the noisy
means, and $0.023$ to $0.039$ for TabPFN-v3, which shrinks them as a Gaussian process does
($0.021$ to $0.033$). With nearly exact means (noise SD $0.05$, $64$ readings) LimiX-2 and
TabFM are within $10^{-4}$ of them, while TabDPT misses by $0.060$ on the sine and TabPFN-v3.5
by $0.063$.

\textbf{Unequal counts.} \Mean is worse on the unequal designs than on the
balanced controls with comparable reading budgets in $80$ to $96$ of $120$ cells for every model (TabPFN-v3: $0.0130$ to
$0.0265$); a Gaussian process fitted to the same means goes from $0.0141$ to $0.0272$. A
paired GP reference uses the same means and pooled reading variance in two fits, with
kernel parameters estimated in each. Giving every mean a common noise variance
$\widehat\sigma^2/\bar r$ yields $0.0678$ median MSE; giving site $i$ variance
$\widehat\sigma^2/r_i$ yields $0.0234$. The ordinary learned-noise GP above is a separate
comparison. Balanced controls round their per-site count to the nearest integer to match
the total budget; the totals can differ by at most $m/2$ readings.

The six-model count encoding is in Table~\ref{tab:e4enc}. A separate TabPFN-v3 probe with
counts alternating between $64$ and one reading also tried two sub-means per site, which changed
little, and adding $\log r_i$ to the attention logits, which raised the error to $0.08$ and
$0.75$ for alternating and clustered counts, respectively.

\FloatBarrier
\subsection{Real data, per model}

\begin{table}[htbp]
  \caption{Three real datasets at the largest count: median error over twenty splits for
  \Single, \Expanded and \Mean, and splits in which \Expanded is worse than \Single.}
  \label{tab:realmodels}
  \centering\small
  \begin{tabular}{lcccc@{\hskip 1.35em}cccc@{\hskip 1.35em}cccc}
    \toprule
    & \multicolumn{4}{c}{SGEMM ($150\times4$)} & \multicolumn{4}{c}{Kernel Tuner conv.\ ($150\times16$)} & \multicolumn{4}{c}{Hydraulic rig ($72\times8$)} \\
    & S & E & M & worse & S & E & M & worse & S & E & M & worse \\
    \midrule
    TabPFN-v3      & 0.105 & 0.161 & 0.104 & 20 & 0.223 & 0.259 & 0.218 & 15 & 0.178 & 0.074 & 0.071 & 2 \\
    TabPFN-v3.5    & 0.097 & 0.089 & 0.097 & 2  & 0.211 & 0.216 & 0.203 & 13 & 0.199 & 0.110 & 0.078 & 6 \\
    LimiX-2       & 0.091 & 0.116 & 0.093 & 20 & 0.207 & 0.222 & 0.200 & 17 & 0.141 & 0.057 & 0.045 & 4 \\
    TabICLv2      & 0.112 & 0.232 & 0.111 & 20 & 0.208 & 0.252 & 0.198 & 17 & 0.102 & 0.051 & 0.047 & 2 \\
    TabDPT        & 0.086 & 0.162 & 0.086 & 20 & 0.229 & 0.250 & 0.223 & 16 & 0.185 & 0.097 & 0.102 & 3 \\
    TabFM         & 0.151 & 0.151 & 0.151 & 18 & 0.221 & 0.240 & 0.214 & 15 & 0.151 & 0.100 & 0.043 & 8 \\
    \midrule
    Random forest & 0.349 & 0.381 & 0.350 & 13 & 0.256 & 0.347 & 0.251 & 20 & 0.133 & 0.053 & 0.044 & 1 \\
    Gradient boosting & 0.286 & 0.291 & 0.291 & 11 & 0.246 & 0.236 & 0.237 & 8 & 0.156 & 0.038 & 0.038 & 0 \\
    MLP           & 0.407 & 0.405 & 0.404 & 9  & 0.326 & 0.367 & 0.334 & 16 & 0.220 & 0.136 & 0.136 & 3 \\
    \bottomrule
  \end{tabular}
\end{table}

The Gaussian process fitted on the rows is worse than with one reading on $14$, $19$ and $17$
of $20$ splits on SGEMM and the two Kernel Tuner sets, and on $5$ of $20$ on the hydraulic
rig. On the wage data two models do badly with the rows through a shift in the level of
their predictions that we have not diagnosed (TabICLv2 $0.930$ to $2.888$, worse on $16$ of
$20$ splits; TabFM $1.011$ to $1.280$, $13$ of $20$).

\FloatBarrier
\subsection{Mechanism diagnostics on separate designs}
\label{app:independent-computation}

\paragraph{Count-dependent attention scaling.}
Query-aware scalable softmax uses the following learned query transformation
\citep{nakanishi2025ssmax,tabpfn3,tabiclv2}:
\begin{equation}
\label{eq:qassmax}
\widetilde q_n=q\odot b(\log n)\odot[1+\tanh h(q)].
\end{equation}
Here $q$ is an attention-query vector, $n$ the row count, $b,h$ learned maps and
$\odot$ elementwise multiplication.
\paragraph{Separate diagnostic designs.}

These diagnostics use designs of their own, stated in each paragraph; they are separate from the
main grid and from the representation and count-reset comparisons of Appendix~\ref{app:side}. The linear exact-copy study uses twelve noiseless sites, $64$ copies, six site
seeds and one TabPFN-v3 ensemble member. Near/middle slopes average the outer/central
fifths of each of eleven gaps; MSE is over the observed hull. Arithmetic means over the six cells
are reported. With row fingerprints disabled, native slopes remain $0.695/1.343$,
so flattening persists without that route.

\begin{table}[htbp]
\centering\small
\caption{Linear exact-copy interventions on the same table and model weights.
Twelve sites, $64$ copies, six site seeds, one ensemble member; arithmetic means over six cells.}
\label{tab:linear-count-scopes}
\begin{tabular}{lrrr}
\toprule Scope & Near slope & Middle slope & MSE\\\midrule
Native & 0.688 & 1.354 & 0.001345\\
Embedding-only & 0.803 & 1.141 & 0.000203\\
ICL only & 0.943 & 1.069 & 0.000196\\
Full & 0.996 & 0.999 & 0.000065\\\bottomrule
\end{tabular}
\end{table}

Shared context keys and values carry embedding-only reset: replacing both reproduces
that reset's result, whereas query-only re-encoding raises MSE to $0.006453$.
With query/key states fixed, a numerical count change raises mean site-attention entropy
from $2.147$ to $2.305$; this entropy is distinct from inverse-Simpson $n_{\rm eff}$.
For uniform copies, unbiased sample variance changes by the factor
$r(m-1)/(mr-1)$. Jointly fixing numerical count, fingerprints and this scaling restores
predictions to within $3.6$--$4.4\times10^{-5}$ of \Single across the six cells.

\begin{figure}[htbp]
  \centering
  \includegraphics[width=\linewidth]{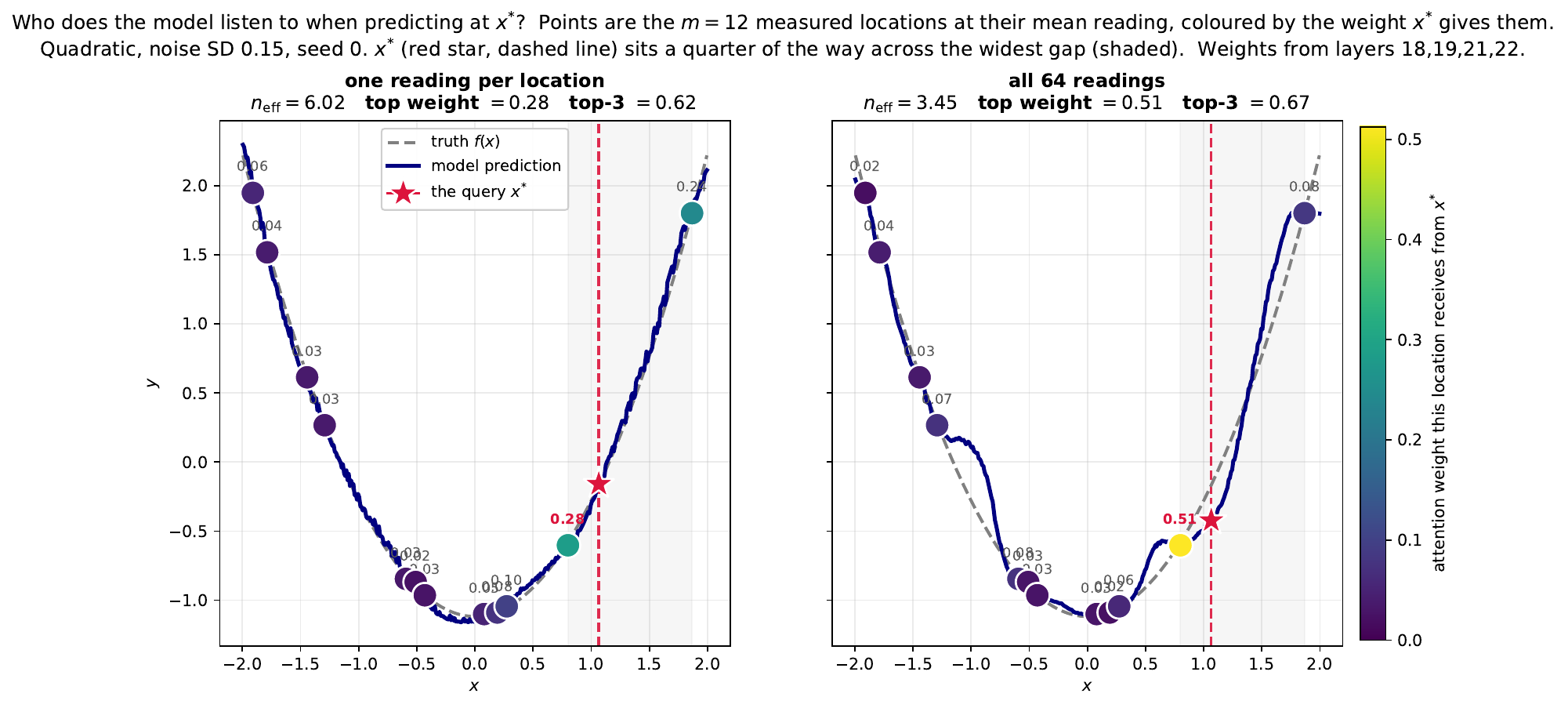}
  \caption{Inside the table of Figure~\ref{fig:failure}. The twelve sites of its left design
  (here at the grid's default noise SD $0.15$),
  drawn at their mean reading and coloured by the pooled attention weight one query (star)
  places on each. \textbf{Left:} one reading per site. \textbf{Right:} $64$ readings. One
  site's weight rises from $0.28$ to $0.51$ and the prediction at the query moves from
  $-0.16$ to $-0.42$ against a truth of $-0.17$.}
  \label{fig:weightmap}
\end{figure}

The attention illustration under full count reset in Figure~\ref{fig:attncontrol} uses independent readings, quadratic
target, SD $0.05$, seed 4102, one CPU ensemble member and $201$ queries. Context attention
is renormalized before site pooling; inverse-Simpson values are computed per captured
head/ICL operation before their median. Capture indices in this illustration denote
forward operations, rather than the zero-based physical block indices used with Table~\ref{tab:e6}. The separate window summaries use four functions
and three shared site seeds; their medians are $11.01/10.13/11.06$ for
\Single/\Expanded/full reset at SD $0.05$ and $10.98/10.59/11.45$ at SD $0.5$.
Windows and functions sharing a seed are dependent. These statistics use a different
batch and reduction from the $240$-cell attention study below.

The single-query example in Figure~\ref{fig:weightmap} belongs to the primary quadratic
design, not to the independent-reading attention illustration. It illustrates a change without
identifying attention weights as the sole cause of that change.
For the independent-reading surface in Figure~\ref{fig:surfaces}, local response is
$S_r(A,h)=\E_{x\in A,v}[\widehat f_r(x+hv)-\widehat f_r(x)]^2/
\E_{x\in A,v}[f(x+hv)-f(x)]^2$, where $v$ is a unit direction and $h$ a step relative
to local spacing. Constant-target directions use absolute prediction energy.
Transition width is the shortest contiguous neighboring-site path fraction containing
$80\%$ of total prediction variation. In the displayed design, near-site response is
$2.075/8.584/3.810$ and width $0.631/0.822/0.863$ for \Expanded/\Single/\Mean;
all response ratios exceed one. Fresh MSE is $0.410/0.467/0.564$. \Expanded has less local
variation than the other representations, not less than the target.
\begin{figure*}[t]
  \centering
  \includegraphics[width=\textwidth]{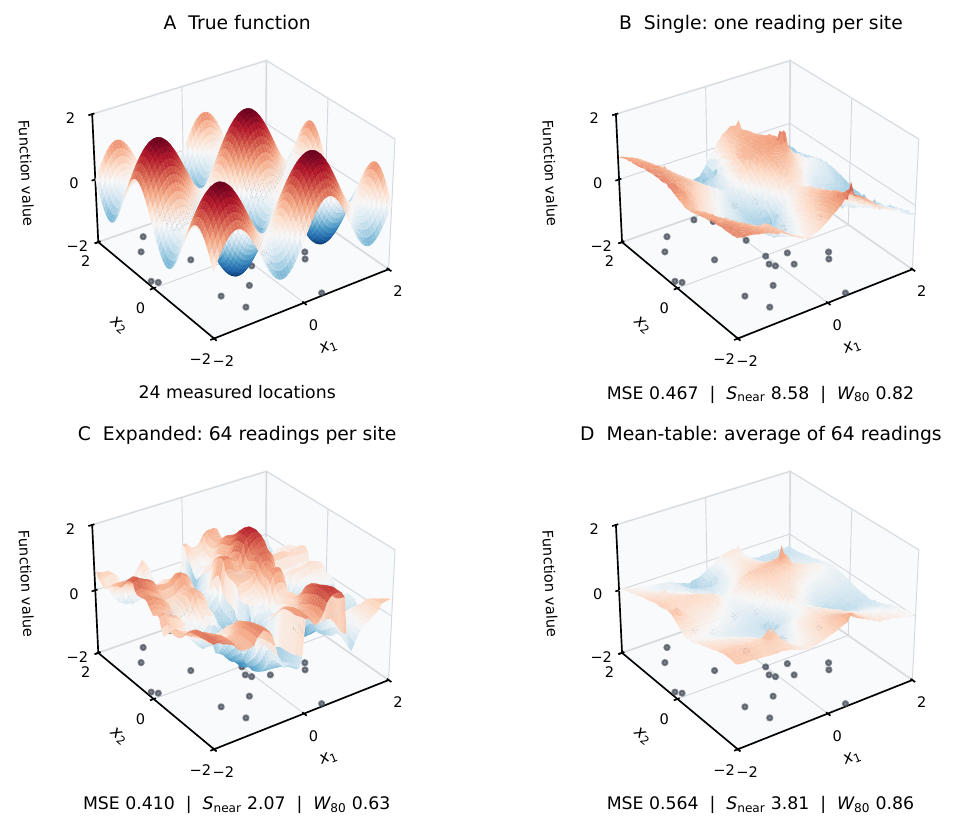}
  \caption{Two-input prediction surfaces with independent readings. Additive target, $24$ sites,
  noise SD $0.5$, $r=64$; \Single uses the first reading. Height and colour share scales on a
  $61\times61$ grid; fresh errors use $512$ separately drawn queries. $S_{\rm near}$ denotes $S_r(A,h)$ on the near-site probes; $W_{80}$ is the mean $80\%$-variation path fraction over eight neighboring-site paths. One design and one bank of
  readings, eight-member TabPFN-v3 (v3 checkpoint, \texttt{tabpfn} 9.0.0).}
  \label{fig:surfaces}
\end{figure*}

The high-dimensional exact-copy controls use additive and interaction targets, $48$ sites,
five or twenty active inputs, and $r=64$ noiseless copies. Full count reset was fixed on
six development designs before evaluation on twelve new designs. The TabDPT scale control
uses a count of $m+64$: $m$ site keys plus TabDPT's $64$ thinking rows. On twelve held-out five-dimensional designs
($48$ sites, $r=64$ noiseless copies, eight ensemble members), full count reset in TabPFN-v3 raises the
additive error from $0.599$ to $0.764$ and lowers the interaction error from $1.209$ to $1.072$;
TabDPT's scale control moves both the other way ($0.712$ to $0.565$ and $1.230$ to $1.381$), and
narrower transitions accompany lower additive and higher interaction error in both models. On
the noisy designs of the main grid's dimension axis, full count reset lowers the pooled median
fresh-input MSE relative to \Expanded at each tested dimension (Table~\ref{tab:e2}).
These interventions and the two-input illustration have different sampling
units; neither is an independent confirmation cohort.

\FloatBarrier
\subsection{Exact-copy contrasts on the main grid}
\label{app:copy-contrast}
At noise SD zero, \Expanded supplies $64$ identical latent values at each of twelve random
sites. Table~\ref{tab:copy-contrast} pairs \Expanded and \Single within each function--seed
cell, then takes the median of the twenty ratios. Both backends use their main-grid
package configurations, including TabDPT's default ensemble and full retrieved context
and TabFM's full context (Appendix~\ref{app:protocol}); these differ from the one-member
attention probes and the high-dimensional scale intervention above.

\begin{table}[htbp]
  \centering\small
  \caption{Main-grid exact copies: median paired fresh-input MSE ratio
  \Expanded/\Single, twelve sites, $64$ copies, twenty seed cells per function.}
  \label{tab:copy-contrast}
  \begin{tabular}{lrr}
    \toprule
    function & TabDPT & TabFM \\
    \midrule
    quadratic & 1.243 & 1.000 \\
    absolute value & 0.712 & 0.999 \\
    linear & 1.420 & 1.009 \\
    hyperbolic tangent & 1.023 & 1.007 \\
    step & 0.932 & 1.003 \\
    sine & 0.484 & 0.999 \\
    \bottomrule
  \end{tabular}
\end{table}

\FloatBarrier
\subsection{Inside TabPFN-v3}\label{app:inside-tabpfn3}

Prior work links spectral capacity to context sample count in TabPFN~2 and~2.5 \citep{zheng2025} and uses
activation patching and head ablation to localize computation in TabPFN~2.5 \citep{gupta2026}.
Our TabPFN-v3 diagnostics hold input sites fixed while changing repetition or internal count inputs.

\begin{figure}[htbp]
  \centering
  \includegraphics[width=0.75\textwidth]{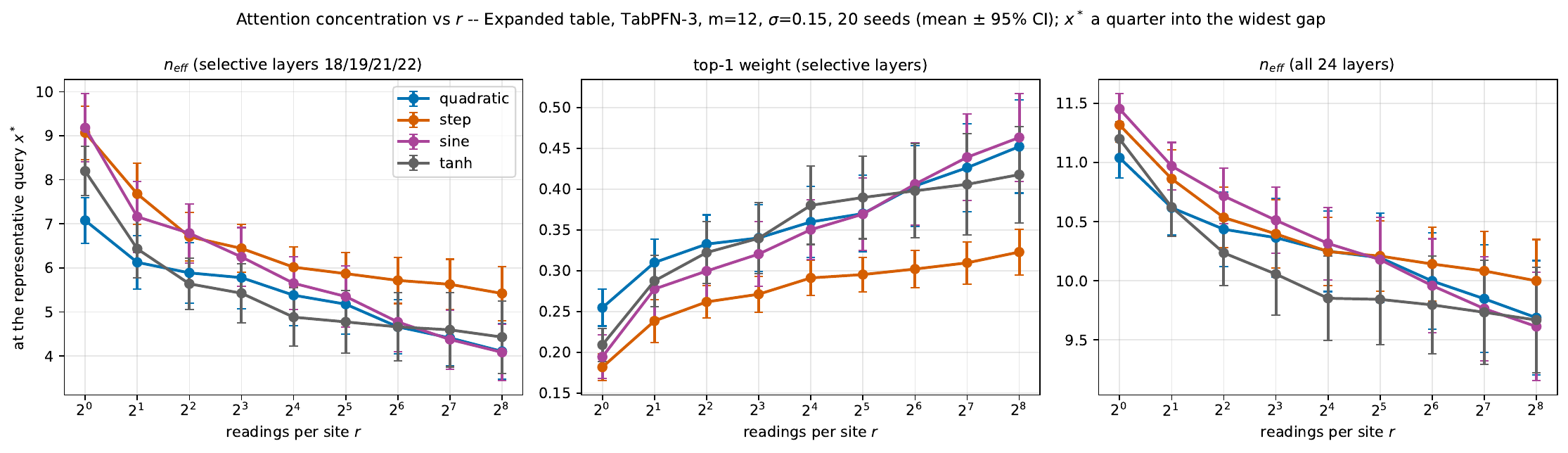}
  \caption{Effective number of attended sites and top pooled weight at a representative query
  as $r$ grows, TabPFN-v3, \Expanded table, $m=12$, $\sigma=0.15$; mean and $95\%$ interval over
  twenty seeds. Selective layers are $18,19,21,22$; the all-layer curves pool all $24$ ICL layers.}
  \label{fig:attn_r}
\end{figure}

\textbf{Readout.} With $\omega_i(x)$ the pooled attention on site $i$ and $\Ybar_i$ its
mean reading, the reconstruction $\sum_i\omega_i(x)\Ybar_i$ is correlated with the model's
output over the query grid at $0.996$ on the one-dimensional designs and at $0.974$ to
$0.983$ on real covariates in eight to eleven dimensions, using every layer in which rows
meet. Weights shuffled across sites, or taken from another query, reconstruct at
$|\rho|<0.02$. The relation is affine and not exact: the slope of output on reconstruction is
$2.0$ with one reading and $1.27$ with $64$, because attention averages value vectors and
not site means.

\textbf{Concentration.} The $240$-cell summaries (three functions, eight noise levels, ten seeds, $r=64$) are medians over cells using ICL layers $18,19,21,22$, selected for spatial selectivity under \Single. The weight on the most attended site rises from $0.26$ to $0.44$ and
$n_{\mathrm{eff}}$ falls from $6.6$ to $4.1$. On twenty seeds the fall is monotone in
$\log r$ (quadratic: $7.1$ to $4.1$; lower at $r=256$ than at $r=1$ in $77$ of $80$
function--seed pairs), and the largest step is the first (Figure~\ref{fig:attn_r}). Under
\Mean the same query moves from $6.8$ to $6.2$.

\textbf{Interventions on the attention.}
The transplant probe uses TabPFN-v3 (\texttt{tabpfn} 8.5.0, the default v3 regressor
checkpoint), one ensemble member, fingerprints on and float32 inference. It compares
$r=1$ and $r=64$ at $m=12$ sites: four functions (quadratic, tanh, step, sine), noise SDs
$0.05,0.15,0.30$ and six seeds give $72$ cells. MSE is averaged over $400$ fresh query
inputs against the true function. The all-layer intervention replaces the test-query
attention at each of the $24$ ICL layers with \Single's per-head site weights, spread
uniformly over each site's $64$ rows; the other computations use the \Expanded pass.
Transplanting each pass's own weights changes predictions by at most
$6.7\times10^{-5}$ over these cells.

\Expanded is worse than \Single in $40/72$ cells. All-layer transplant improves $18$ of
those $40$: $12/12$ on step, $0/16$ on quadratic, $2/4$ on tanh and $4/8$ on sine.
Restricting transplant to layers $18,19,21,22$ improves $29/40$, but closes at least half
the error gap in only $2/40$; all-layer transplant does so in $11/40$.
These layer indices are zero-based. A separate attention-temperature experiment that
fully undoes concentration beats plain averaging in $5$ of $20$ cells.

\textbf{Values and the embedding stage.} Within one site the value vectors of the rows
spread $1.6$ times the spread between sites at the readout layers. The embedding stage
does not create this: rows of one site enter it $0.05$ apart, in units of the between-site
spread, and leave it $0.003$ apart; the in-context layers re-separate them, to $0.72$ by the
last layer (Figure~\ref{fig:embedding}). With identical rows, the three count modules of the
embedding stage displace each site by $0.52$ of the between-site spread. Under embedding-only
reset, the in-context modules rebuild the displacement from $0.23$ at the first layer to
$0.95$ at the last.

A separate risk probe pools $144$ cells: four functions, noise SDs $0$, $0.05$ and $0.30$,
six seeds, with and without fingerprints. It uses one ensemble member and float32 inference,
giving $576$ fits across four arms. Embedding-only reset lowers error in $121/144$ cells,
with median paired MSE ratio $0.68$ to \Expanded. Function medians fall from $0.183$ to
$0.066$ on quadratic and $0.011$ to $0.003$ on tanh, but rise from $0.280$ to $0.283$ on
sine. Full count reset has median paired ratio $0.95$ to \Single and improves on \Single
in $84/144$ cells. Table~\ref{tab:e2} evaluates these resets on the main designs.

\begin{figure}[htbp]
  \centering
  \includegraphics[width=\textwidth]{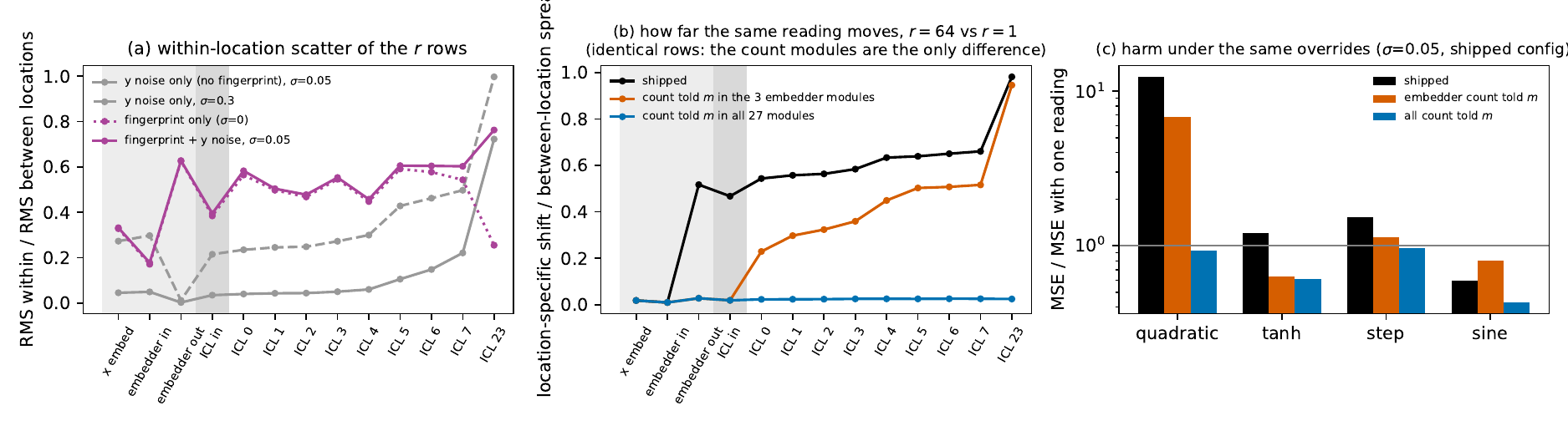}
  \caption{The embedding stage of TabPFN-v3 under repetition ($m=12$, $r=64$, four functions,
  six seeds, medians). Spread of one site's rows relative to the spread between sites along
  the forward pass; displacement of a site between the $r=64$ and $r=1$ contexts under count
  overrides; and the resulting error relative to \Single.}
  \label{fig:embedding}
\end{figure}

\textbf{Bias and variance.} The decomposition over independent reading banks for all six
models is in Table~\ref{tab:banks} and Table~\ref{tab:e3}.

\FloatBarrier
\subsection{Fine-tuning trials}
\label{app:finetune}

The best fine-tuned \Expanded result improves modestly while losing much of the model's
benefit from averaging. In the first trial \Expanded MSE falls from $0.0648$ to $0.0621$,
with gains in $23$ of $36$ cells, but \Mean MSE rises from $0.0079$ to $0.0481$. Even this best
trained \Expanded median remains about eight times the shipped \Mean baseline. The all-layer
trial raises \Expanded MSE to $0.0730$ and \Mean MSE to $0.0307$; the values-only adapter raises
\Expanded MSE from its own shipped baseline of $0.0655$ to $0.0727$.

\begin{table}[htbp]
  \caption{Three fine-tuning trials on TabPFN-v3, each with one training seed. Median fresh-input
  MSE over $36$ paired cells: six one-dimensional functions times six evaluation seeds
  ($0$--$5$), $m=12$, $r=64$, noise SD $0.05$. Each trial uses its own shipped baseline under
  the same inference settings; wins compare trained \Expanded with shipped \Expanded.}
  \label{tab:finetune}
  \centering\small
  \setlength{\tabcolsep}{4pt}
  \begin{tabular}{lccccc}
    \toprule
    & \multicolumn{2}{c}{\Expanded} & \multicolumn{2}{c}{\Mean} & \\
    trial & shipped & trained & shipped & trained & wins \\
    \midrule
    Four ICL layers + count + head & 0.0648 & 0.0621 & 0.0079 & 0.0481 & 23/36 \\
    All ICL layers + count + head  & 0.0648 & 0.0730 & 0.0079 & 0.0307 & 17/36 \\
    Values-only adapter           & 0.0655 & 0.0727 & 0.0076 & 0.0076 & 17/36 \\
    \bottomrule
  \end{tabular}
\end{table}

\textbf{Training modules and budgets.}
The first trial trains in-context layers $18$, $19$, $21$ and $22$, all $27$
\texttt{SoftmaxScalingMLP} count modules, and the output head
(\texttt{output\_norm}, \texttt{output\_projection}): $15.08$ million parameters,
$8{,}000$ tables, learning rate $5\times10^{-6}$, $100$ warm-up tables and weight decay $0.01$.
The second trains all $24$ in-context layers, all count modules and the same head
($57.04$ million parameters), with a richer prior, low-noise anchor and curriculum:
$12{,}000$ tables, learning rate $10^{-6}$, $200$ warm-up tables and weight decay $0.01$.
Both freeze other encoder parameters. The third trains a zero-initialised values-only
adapter ($1{,}176$ parameters) on a frozen base for $10{,}000$ tables. Each row from site $i$ receives $\log r_i$. Its training configuration specifies AdamW with default learning rate $10^{-4}$, $200$ warm-up
tables and zero weight decay. Paired evaluation uses eight ensemble members in all trials,
with float32 for the adapter and each earlier trial's matched shipped inference settings.

\textbf{Validation and evaluation.}
Ordinary validation tables contain one noisy response per site ($r=1$);
\Mean performance is evaluated separately using averages of $64$ readings.
Checkpoints are validated every $2{,}000$ tables. The first trial uses $30$ \Expanded and
$30$ ordinary tables, selecting the lowest mean \Expanded MSE subject to ordinary-table
mean MSE at most $1.05$ times shipped. The second uses $40$ short-length-scale GP \Expanded
and $40$ ordinary tables, with the same rule plus improved \Expanded MSE. The adapter uses
$40$ \Expanded and $40$ ordinary tables, selecting the lowest mean \Expanded MSE.
The selected checkpoints are at $8{,}000$, $12{,}000$ and $10{,}000$ tables, respectively;
the $36$ evaluation cells are separate from these banks. Each recipe uses one training seed.

The values-only adapter preserves one-reading predictions to $3\times10^{-6}$.
A logit-bias-only adapter was not trained because the fused-attention kernel failed
at its first backward pass. The training target in all three is the latent mean, so a narrower predictive band after
training does not indicate calibration for a future reading.

\end{document}